\def\eqref#1{equation~\ref{#1}}
\def\1{\bm{1}}
\DeclareMathAlphabet{\mathsfit}{\encodingdefault}{\sfdefault}{m}{sl}
\SetMathAlphabet{\mathsfit}{bold}{\encodingdefault}{\sfdefault}{bx}{n}
\newtheorem{definition}{Definition}[section]
\newtheorem{theorem}{Theorem}[section]
\newtheorem{lemma}{Lemma}[section]
\newtheorem{assertion}{Assertion}[section]
\newtheorem{corollary}{Corollary}[section]
\newtheorem{assumption}{Assumption}[section]
\theoremstyle{definition}
\theoremstyle{remark}
\numberwithin{equation}{section}
\title{Swing-by Dynamics in Concept Learning and Compositional Generalization}
\author{Yongyi Yang$^{1,2,3}$, Core Francisco Park$^{1,2,4}$, Ekdeep Singh Lubana$^{1,2}$, Maya Okawa$^{1,2}$, 
\\
\textbf{Wei Hu}$^{3}$, \textbf{Hidenori Tanaka}$^{1,2}$\\
\\
$^1$ CBS-NTT Physics of Intelligence Program, Harvard University, Cambridge, MA, USA\\
$^2$ Physics \& Informatics Laboratories, NTT Research, Inc., Sunnyvale, CA, USA\\
$^3$ Computer Science and Engineering, University of Michigan, Ann Arbor, MI, USA\\
$^4$ Department of Physics, Harvard University, Cambridge, MA, USA \\
}
\newcommand{\TransientMemo}[0]{Swing-by Dynamics}
\newcommand{\TransientMemoObj}[0]{Swing-by}
\Crefname{appendix}{App.}{App.}
\Crefname{figure}{Fig.}{Fig.}
\Crefname{section}{Sec.}{Sec.}
\begin{document}

\maketitle

\vspace{-1em}
\begin{abstract}
Prior work has shown that text-conditioned diffusion models can learn to identify and manipulate primitive concepts underlying a compositional data-generating process, enabling generalization to entirely novel, out-of-distribution compositions. 
Beyond performance evaluations, these studies develop a rich empirical phenomenology of learning dynamics, showing that models generalize sequentially, respecting the compositional hierarchy of the data-generating process. 
Moreover, concept-centric structures within the data significantly influence a model's speed of learning the ability to manipulate a concept.
In this paper, we aim to better characterize these empirical results from a theoretical standpoint.
Specifically, we propose an abstraction of prior work's compositional generalization problem by introducing a structured identity mapping (SIM) task, where a model is trained to learn the identity mapping on a Gaussian mixture with structurally organized centroids. 
We mathematically analyze the learning dynamics of neural networks trained on this SIM task and show that, despite its simplicity, SIM's learning dynamics capture and help explain key empirical observations on compositional generalization with diffusion models identified in prior work.
Our theory also offers several new insights---e.g., we find a novel mechanism for non-monotonic learning dynamics of test loss in early phases of training.
We validate our new predictions by training a text-conditioned diffusion model, bridging our simplified framework and complex generative models.
Overall, this work establishes the SIM task as a meaningful theoretical abstraction of concept learning dynamics in modern generative models.
\end{abstract}

\vspace{-1.5em}
\section{Introduction}
\vspace{-0.5em}


Human cognitive abilities have been argued to generalize to unseen scenarios through the identification and systematic composition of primitive concepts that constitute the natural world (e.g., shape, size, color)~\citep{fodor2001language, fodor1975language, reverberi2012compositionality, frankland2020concepts, russin2024human, franklin2018compositional, goodman2008compositionality}.
%
Motivated by this perspective, the ability to compositionally generalize to entirely unseen, out-of-distribution problems has been deemed a desirable property for machine learning systems, leading to decades of research on the topic~\citep{smolensky1990tensor, lake2018generalization, hupkes2020compositionality, ramesh2021zeroshot, rombach2022highresolution,lake2023human, kaur2024instruct, du2024compositional}.

Recent work has shown that modern neural network training pipelines can lead to emergent abilities that allow a model to compositionally generalize when it is trained on a data-generating process that itself is compositional in nature~\citep{lake2023human, ramesh2023capable, okawa2024compositional, lepori2023break, arora2023theory, zhou2023algorithms, khona2024towards, rosenfeld2020risks, nagarajan2020understanding, arjovsky2019invariant}.
For example, \citet{okawa2024compositional, park2024emergencehiddencapabilitiesexploring} show that text-conditioned diffusion models can learn to identify concepts that constitute the training data and develop abilities to manipulate these concepts flexibly, enabling generations that represent novel compositions entirely unseen during training.
%
These papers also provide a spectrum of intriguing empirical results regarding a model's learning dynamics in a compositional task. 
For example, they reveal that abilities to manipulate individual concepts are learned in a sequential order dictated by the data-generating process; the speed of learning such abilities is modulated by data-centric measures (e.g., gradient of loss with respect to concept values, such as color of an object); and the most similar composition seen during training often controls performance on unseen compositions.

\begin{figure}
    \centering
    \vspace{-15pt}
    \includegraphics[width=\linewidth]{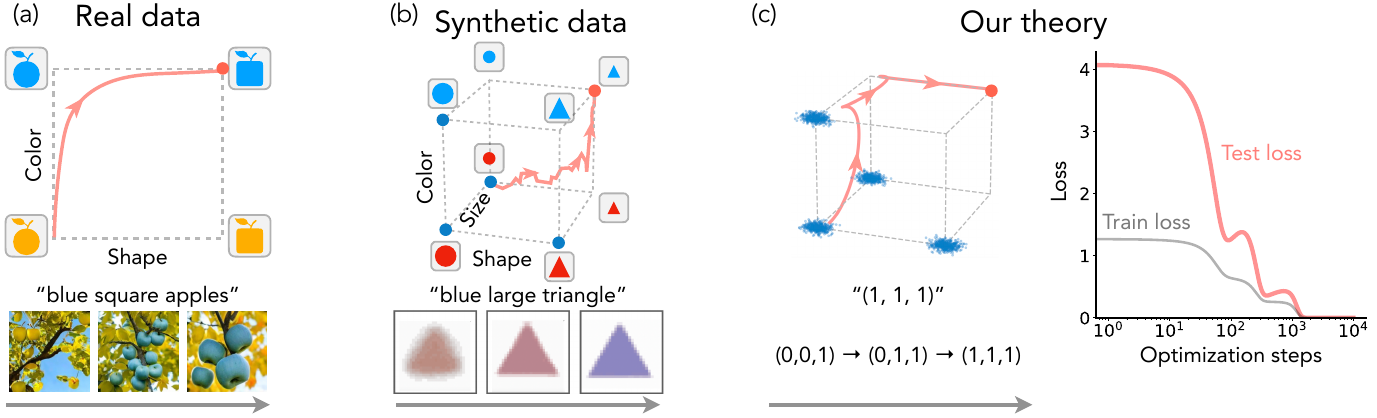}
    \vspace{-5pt}
    \caption{\textbf{Structured Identity Mapping Task and Swing-by Generalization Dynamics.}
(a) Given the input ``blue square apples on a tree with circular yellow leaves,'' a multimodal model learns to generate concepts in the following order: ``apple,'' ``blue'' (color), and ``square'' (shape) (example adapted from \cite{li2024scalability}). 
(b) A multimodal synthetic task introduced by \cite{okawa2024compositional,park2024emergencehiddencapabilitiesexploring}. The training set of the task consists of four distinct compositions of concepts, depicted as blue nodes on a cubic graph. A diffusion model is trained on this dataset to systematically study the dynamics of concept learning. With the test prompt ``small, blue, triangle,'' the diffusion model sequentially learns the correct size, shape, and finally color.
(c) In this work, we introduce a structured identity mapping task as the foundation for a systematical and theoretical studying of the dynamics of concept learning. The model is trained on a Gaussian mixture data, where the centroids are positioned at certain nodes of a hyperrectangle (blue dots) and is evaluated on an out-of-distribution test set (red dot). Our theoretical results not only reproduce and explain previously characterized empirical phenomena but also depict a comprehensive picture of the non-monotonic learning dynamics in the concept space and predict a ``multiple-descent'' curve of the test loss (red curve).
    \vspace{-15pt}
    }\label{fig:intro}
\end{figure}


In this work, we aim to demystify the phenomenology of compositional generalization identified in prior work and better ground the problem (or at least a specific variant of it called systematicity) via a precise theoretical analysis.
To that end, we instantiate a simplified version of the compositional generalization framework introduced by \citet{okawa2024compositional, park2024emergencehiddencapabilitiesexploring}---called the ``concept space'' (see \Cref{fig:intro}b)---that is amenable to theoretical analysis.
In brief, a concept space is a vector space that serves as an abstraction of real concepts. 
For each concept (e.g., color), a binary number can be used to represent its value (e.g., $0$ for red and $1$ for blue). 
In this way, a binary string can be mapped to a tuple (e.g., $(1,0,1)$ might represent ``big blue triangle'') and then fed into the diffusion model as a conditioning vector. The model output is then passed through a classifier\footnote{Conceptually, we can think of an idealized perfect classifier here.}which produces a vector indicating how accurately the corresponding concepts are generated (e.g. a generated image of big blue triangle might be classified as $(0.8,0.1,0.9)$). In this way, the process of generation becomes a vector mapping, and \textit{a good generator essentially performs as an identity mapping in the concept space}.

We argue that in fact the salient characteristic of a concept space is its preemptively defined organization of concepts in a systematic manner, not the precise concepts used for instantiating the framework itself.
Grounded in this argument, we define a learning problem called the \textit{Structured Identity Mapping (SIM) task} wherein a regression model is trained to learn the identity mapping from points sampled from a mixture of Gaussians with structurally organized centroids (see \Cref{fig:intro}c).
Through a detailed analysis of the learning dynamics of MLP models, both empirically and theoretically, we find that SIM, despite its simplicity, can both capture the phenomenology identified by prior work and provide precise explanations for it. 
Our theoretical findings also lead to novel insights, e.g., predicting the existence of a novel mechanism for non-monotonic learning curves (similar to epoch-wise double descent~\citep{nakkiran2021deep}, but for \textit{out-of-distribution data}) in the early phase of training, which we empirically verify to be true by training a text-conditioned diffusion model.
Our contributions are summarized below.

\begin{itemize}[leftmargin=10pt, itemsep=2pt, topsep=2pt, parsep=2pt, partopsep=1pt]
\item \textbf{Structured Identity Mapping (SIM): A faithful abstraction of concept space.}
We empirically validate our SIM task by training Multi-Layer Perceptrons (MLPs), demonstrating the reproduction of key compositional generalization phenomena characterized in recent diffusion model studies \citep{okawa2024compositional, park2024emergencehiddencapabilitiesexploring}. Our findings show:
  (i) learning dynamics of OOD test loss respect the compositional hierarchical structure of the data generating process;
  (ii) the rate at which a model disentangles a concept and learns the capability to manipulate it is dictated by the sensitivity of the data-generating process to changes in values of said concept (called ``concept signal'' in prior work); and
  (iii) network outputs corresponding to weak concept signals exhibit slowing down in concept space. These results also suggest that the structured nature of the data, rather than specific concepts, drove observations reported in prior work.

\item \textbf{Swing-by Dynamics: Theoretical analysis reveals mechanisms underlying learning dynamics of a compositional task.} 
Building on the successful reproduction of phenomenology with MLPs trained on the SIM task, we further simplify the architecture to enable theoretical analysis. We demonstrate that:
  (i) analytical solutions with a linear regression model reproduce the observed phenomenology above, and
  (ii) the analysis of a symmetric 2-layer network ($f({\boldsymbol{x}}; {\boldsymbol{U}}) = {\boldsymbol{U}}{\boldsymbol{U}}^\top {\boldsymbol{x}}$) identifies a novel mechanism of non-monotonic learning dynamics in generalization loss, which we term \textbf{\TransientMemo}. Strikingly, we show that the learning dynamics of compositional generalization loss can exhibit \textit{multiple descents} in its early phase of learning, corresponding to multiple phase transitions in the learning process.
  
\item \textbf{Empirical confirmation of the predicted {\TransientMemoObj} phenomenon in diffusion models.}
We verify the predicted mechanism of {\TransientMemo} in text-conditioned diffusion models, observing the non-monotonic evolution of generalization accuracy for unseen combinations of concepts, as predicted by our theory. 
\end{itemize}

In summary, our theoretical analysis of networks trained on SIM tasks provides mechanistic explanations for previously observed phenomenology in empirical works and introduces the novel concept of {\TransientMemo}. This mechanism is subsequently confirmed in text-conditioned diffusion models, bridging theory and practice in compositional generalization dynamics.

\vspace{-0.7em}
\section{Preliminaries and Problem Setting}\label{sec:preliminaries}
\vspace{-0.7em}

Throughout the paper, we use bold lowercase letters (e.g., ${\boldsymbol{x}}$) to represent vectors, and use bold uppercase letters (e.g., ${\boldsymbol{A}}$) to represent matrices. We use the unbold and lowercase version of corresponding letters with subscripts to represent corresponding entries of the vectors or matrices, e.g., $x_i$ represent the $i$-th entry of ${\boldsymbol{x}}$ and $a_{i,j}$ represent the $(i,j)$-th entry of ${\boldsymbol{A}}$. For a vector ${\boldsymbol{x}}$ and a natural number $k$, we use ${\boldsymbol{x}}_{:k}$ to represent the $k$-dimensional vector that contains the first $k$ entries of ${\boldsymbol{x}}$.
For a natural number $k$, we use $[k]$ to represent the set $\{1,2,\ldots, k\}$, and ${\boldsymbol{1}}_k$ to represent a vector whose entries are all $0$ except the $k$-th entry being $1$; the dimensionality of this vector is determined by the context if not specified.  
In the theory part, we frequently consider functions of time, denoted by variable $t$. If a function $g(t)$ is a function of time $t$, we denote the derivative of $g$ with respect to $t$ by $\dot g(t_0) = \left.\frac{\mathrm{d} g}{\mathrm{d} t}\right|_{t = t_0}$. Moreover, we sometimes omit the argument $t$, i.e., $g$ means $g(t)$ for a time $t$ determined by the context.
For a statement $\phi$, we define $\mathbbm 1_{\{\phi\}} = \begin{cases}1 & \phi \text{ is true} \\ 0 & \phi \text{ is false}\end{cases}$ to be the indicator function of that statement.

\vspace{-0.8em}
\subsection{Problem Setting}
\vspace{-0.8em}

Now we formally define SIM, which is an abstraction of the concept space. For each concept class, we model them as a Gaussian cluster in the Euclidean space, placed along a unique coordinate direction. The distance between the cluster mean and the origin represents the strength of the concept signal, and the covariance of the Gaussian cluster represents the data diversity within the corresponding concept class. Additionally, we allow more coordinate directions than the clusters, meaning that some coordinate directions will not be occupied by a cluster, which we call non-informative directions, and they correspond to the free variables in the generalization task. See \Cref{fig:intro}c for an illustration of the dataset of SIM.


\textbf{Training Set.} Let $d \in \mathbb N$ be the dimensionality of the input space and $s \in [d]$ be the number of concept classes, i.e., there are $s$ Gaussian clusters, and $n \in \mathbb N$ number of samples from each cluster. 
The training set $\mathcal D = \bigcup_{p \in [s]}\left\{{\boldsymbol{x}}_k^{(p)}\right\}_{k=1}^{n}$  is generated by the following process: for each $p \in [s]$, each training point of the $p$-th cluster is sampled i.i.d. from a Gaussian distribution ${\boldsymbol{x}}_k^{(p)} \sim \mathcal N\left[{\mu_p\boldsymbol{1}}_p, \mathop{ \mathrm{ diag } }\left({\boldsymbol{\sigma}}\right)^2\right]$, where $\mu_p \geq 0$ is the distance of the $p$-th cluster center from the origin, and ${\boldsymbol{\sigma}}$ is a vector with only the first $s$ entries being non-zero, and $\sigma_i^2$ describing the data variance on the $i$-th direction. 
There is also optionally a cluster centered at ${\boldsymbol{0}}$ in addition to the $s$ clusters.

\textbf{Loss function.} The training problem is to learn identity mapping on $\mathbb R^d$. For a model $f: \mathbb R^{m} \times \mathbb R^d \to \mathbb R^d$ and a parameter vector ${\boldsymbol{\theta}} \in \mathbb R^m$, we train the model parameters ${\boldsymbol{\theta}}$ via the mean square error loss.
\begin{align}
\mathcal L({\boldsymbol{\theta}}) = \frac 1{2sn}\sum_{p=1}^s \sum_{k=1}^n \left\|f\left({\boldsymbol{\theta}}; {\boldsymbol{x}}_k^{(s)}\right) - {\boldsymbol{x}}_k^{(s)}\right\|^2.
\end{align}

\textbf{Evaluation.} We evaluate the model at a Gaussian cluster centered at the point that combines the cluster means of all training clusters. When the variance of the test set is small, the expected loss within the test cluster is approximately equivalent to the loss at its cluster mean.  Therefore, for simplicity, in this paper, we focus on the loss at the sum of the test cluster, which is a single test point $\widehat {\boldsymbol{x}} = \sum_{p=1}^s \mu_p {\boldsymbol{1}}_p$. 
In \Cref{sec:topological-order}, we report further results for the case of various combinations of training clusters, which leads to multiple OOD test points.


\vspace{-0.7em}
\section{Observations on the SIM Task}\label{sec:observations-on-the-identity-mapping}
\vspace{-0.7em}

We first begin by summarizing our key empirical findings on the SIM task. In all experiments we use MLP models of various configurations, including different number of layers and both linear and non-linear (specifically, ReLU) activations. 
Throughout this section and the subsequent sections, we frequently consider the model output at the test point $\widehat {\boldsymbol{x}}$ over training time, which we call \textbf{output trajectory} of the model. 

Due to space constraints, we only present the results for a subset of configurations in the main paper and defer other results to \Cref{sec:additional-sim-experiments}. 
We note that the findings reported in this section are in one-to-one correspondence with results identified using diffusion models in \Cref{sec:diffusion-model-dataset} and prior work~\citep{park2024emergencehiddencapabilitiesexploring}. 

\vspace{-0.5em}
\subsection{Generalization Order Controlled by Signal Strength and Diversity}\label{sec:generalization-order-controlled-by-signal-strength-and-diversity}
\vspace{-0.5em}

One interesting finding from previous work is that if we alter the strength of one concept signal from small to large, the contour of the learning dynamics would dramatically change \citep{park2024emergencehiddencapabilitiesexploring}. 
Moreover, it is also commonly hypothesised that with more diverse data, the model generalizes better \citep{diversity-1,diversity-2}. 
Recall that in the SIM task, the distance $\mu_k$ of each cluster represents the corresponding signal strength, and the variance $\sigma_k$ represents the data diversity. 
In \Cref{fig:dynamics}, we present the output trajectory under the setting of $s = 2$, in which case the trajectory can be visualized in a plane. There are two components to be learned in this task and, from the contour of the curve, we can tell the order of different components being learned. 

\begin{figure}
    \centering    
    \begin{minipage}{0.3\linewidth}\centering
    \includegraphics[width=\linewidth]{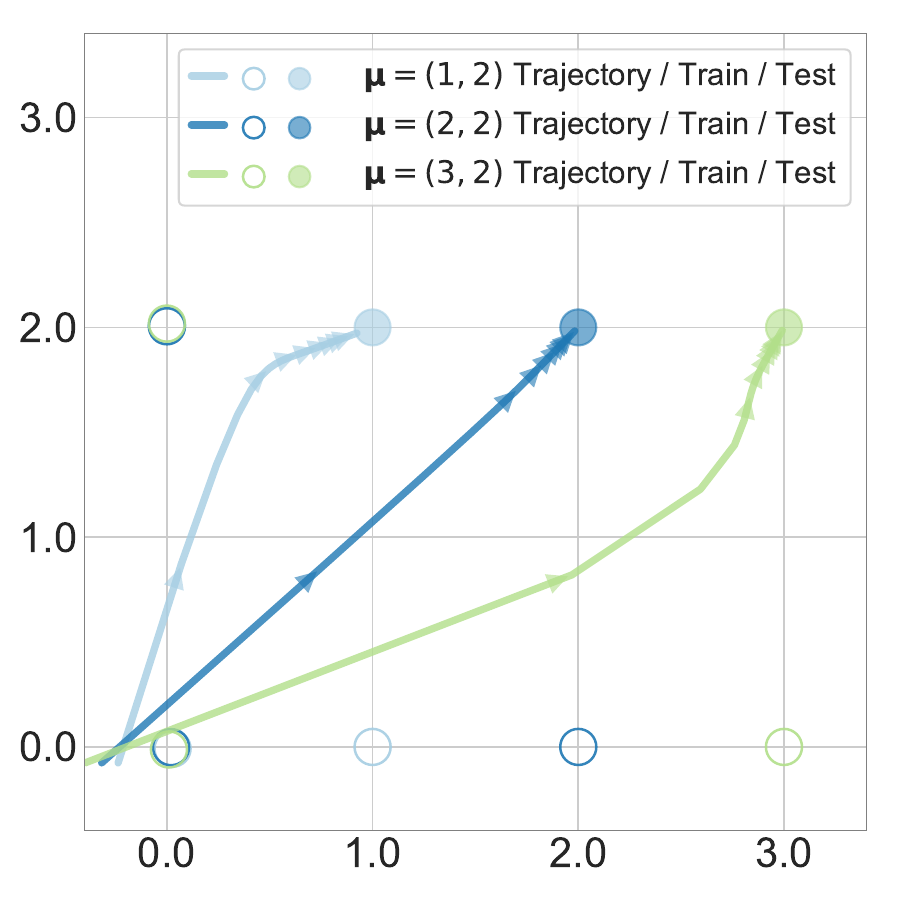}
    (a)
    \end{minipage}
    \begin{minipage}{0.3\linewidth}\centering
    \includegraphics[width=\linewidth]{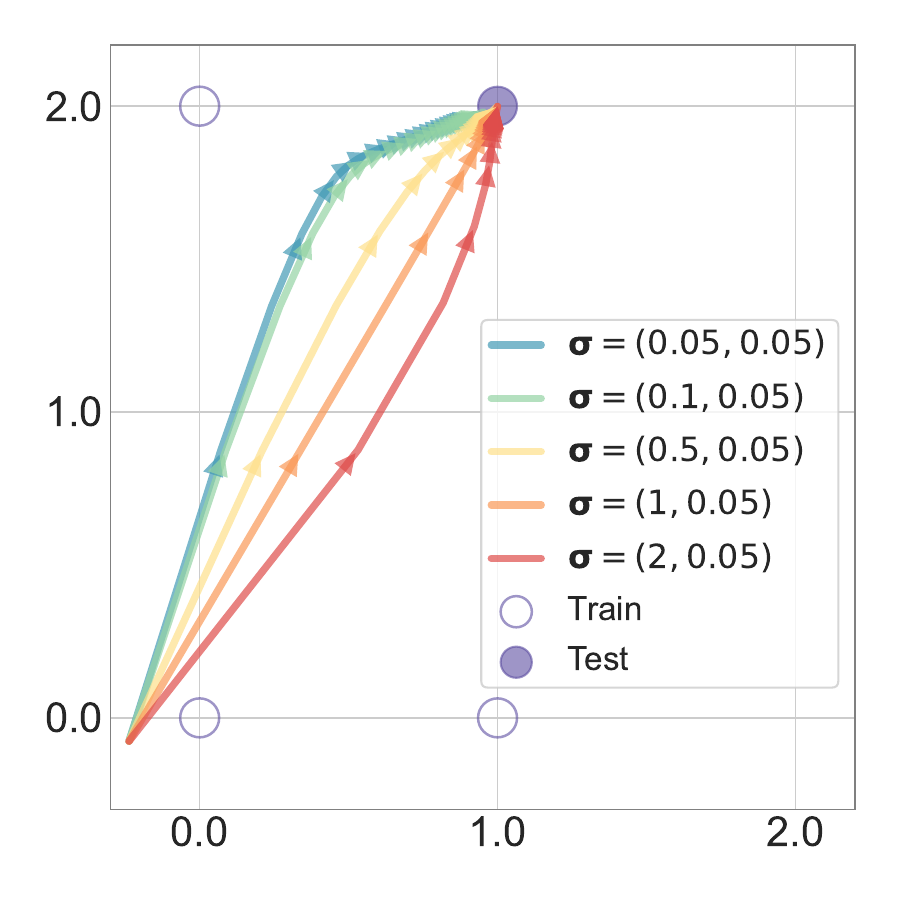}
    (b)
    \end{minipage}
    \begin{minipage}{0.3\linewidth}\centering
    \includegraphics[width=\linewidth]{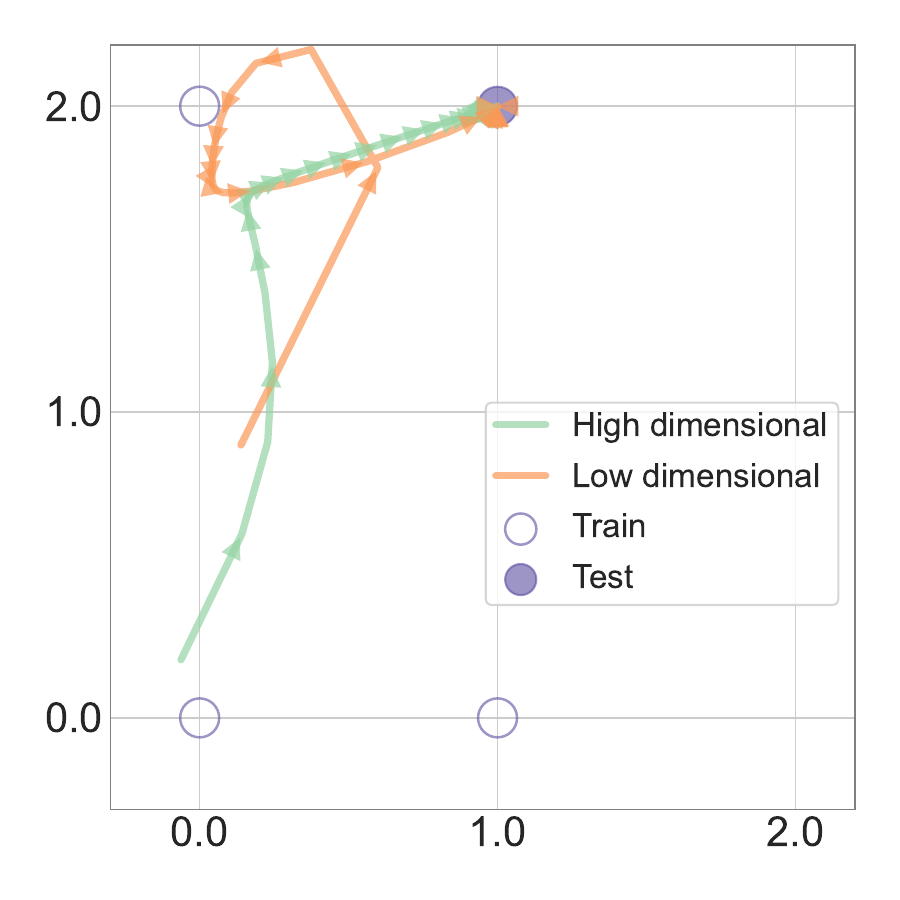}
   (c) 
    \end{minipage}
    \caption{\textbf{Learning dynamics of MLP on SIM task.}
    The figures show the output trajectory of the MLP on a two-dimensional setting (i.e., $s = 2$), and each marker represents an optimization timepoint. Notice that we only plot the center of the training set as a circle, but the actual training set can have varied shapes based on the configuration of ${\boldsymbol{\sigma}}$. (a) one-layer linear model with ${\boldsymbol{\sigma}}_{:2} = (.05,.05)$ and varied ${\boldsymbol{\mu}}$. Concepts $i$ with larger signal ($\mu_i$) learnt first. (b) one-layer linear model with ${\boldsymbol{\mu}}_{:2} = (1,2)$ and varied ${\boldsymbol{\sigma}}$. Concepts $i$ with larger diversity ($\sigma_i$) learnt first. (c) $4$ layer linear models under ${\boldsymbol{\mu}}_{:2} = (1, 2)$ 
 and ${\boldsymbol{\sigma}}_{:2} = (.05,.05)$ and different dimensionality. high dim: $d = 64$, low dim: $d = 2$. Notice that (a) and (b) are both in high dim setting. The lower the dimensionality, the stronger {\TransientMemoObj} it has. 
    }\label{fig:dynamics}
\end{figure}

\Cref{fig:dynamics} (a) presents the output trajectory for a setting with a fixed and balanced ${\boldsymbol{\sigma}}$, and a varied ${\boldsymbol{\mu}}$. The results show that when $\mu_1 < \mu_2$, the dynamics exhibit an upward bulging, indicating a preference for the direction of stronger signal. As $\mu_1$ is gradually increased, this contour shifts from an upward bulging to a downward concaving, and consistently maintains the stronger signal preference.

In \Cref{fig:dynamics} (b), the ${\boldsymbol{\mu}}$ is fixed to an unbalanced position, with one signal stronger than the other. As we mentioned above, when ${\boldsymbol{\sigma}}$ is balanced, the model will first move towards the cluster with a stronger signal strength. However, when the level of diversity of the cluster with weaker signal is gradually increased, the preference of the model shifts from one cluster to another. 

A very concrete conclusion can be thus drawn from the results in \Cref{fig:dynamics} (a) and (b): the generalization order is jointly controlled by the signal strength and data diversity, and, generally speaking, the model prefers direction that has a stronger signal and more diverse data. We note that the conclusion here is more qualitative and in \Cref{sec:theory}, we provide a more precise quantitative characterization of how these two values control the generalization order.


\vspace{-0.5em}
\subsection{Convergence Rate Slow Down In Terminal Phase}\label{sec:terminal-slow-down}
\vspace{-0.5em}

In \Cref{fig:dynamics}, the arrow-like markers on the line indicate equal training time intervals. 
In the later phase of training, we observe that the arrows get denser, indicating a slowing down of the learning dynamics: at the terminal phase of training, the time required to reduce the number of training steps to reduce the same amount of loss is significantly larger than at the beginning.

%
%

\vspace{-0.5em}
\subsection{\TransientMemo}\label{sec:initial-right}
\vspace{-0.5em}

The results in \Cref{fig:dynamics} (a) and (b) are both performed with one-layer models and under a high dimensional setting ($d = 64$). Despite the overall trend being similar in other settings, it is worth exploring the change of trajectory as we increase the number of layers, and / or reduce the dimension.
\begin{wrapfigure}[11]{l}{0.4\linewidth}
    \vspace{-10pt}
    \centering
    \includegraphics[width=\linewidth]{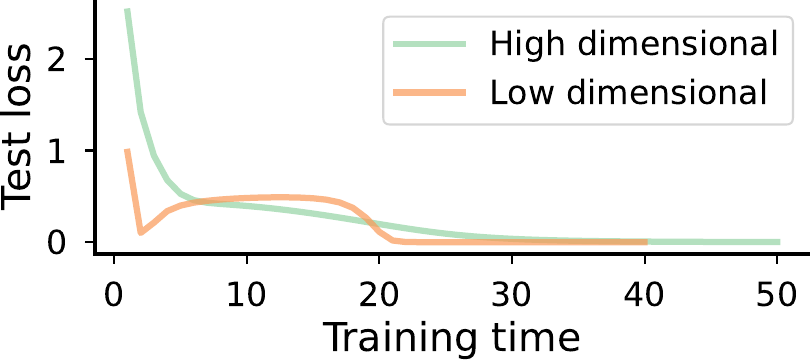}
    \caption{The test loss of multi-layer models.}
    \label{fig:loss-initial-right}
\end{wrapfigure}

In \Cref{fig:dynamics}~(c), we perform experiments with deeper models, and optionally with a lower dimension. Under these changes, we find that the model shows an interesting irregular behavior, where it initially heads towards the OOD test point, but soon turns toward the training set cluster with the strongest signal. 
This indicates the model, while seems to be generalizing OOD at the beginning, is memorizing the train distribution and unable to generalize OOD at this point. 
However, with enough training, we find the model start to again move towards the test point and thus generalizes OOD. We call this overall dynamic of the output trajectory \textbf{\TransientMemoObj}, which we could be suggestive of a non-monotonic test loss curve. 
To assess this further, we track the value of the loss function during training in \Cref{fig:loss-initial-right}, demonstrating a double descent-like curvee\footnote{We would like to note that, it is possible to understand {\TransientMemo} as a special case of so-called epoch-wise double descent~\citep{nakkiran2021deep, epochwise-dd-1,epochwise-dd-5}; however, epoch-wise double descent is generally understood as a consequence of either noisy training or over-parametrization, affecting model's in-distribution generalization. In contrast, {\TransientMemoObj} is a distributional phenomenon where the model fits the training \textit{distribution} and is hence unable to generalize well OOD, revealing a novel mechanism that although leads towards a familiar double descent-like curve.}. 
%
We also note that the \emph{\TransientMemoObj} phenomena seems to be strongest when dimensionality $d$ of the dataset is low, and is rather modest with high dimensional settings. In the high dimensional setting, the OOD loss descent slows down at some point but does not actually exhibit non-monotonic behavior. This low dimensional preference can also be explained perfectly by our theory, further described in \Cref{sec:theory}.

\vspace{-0.7em}
\section{Theoretical Explanation}\label{sec:theory}
\vspace{-0.7em}

We next study the training dynamics of a specific class of linear models that are tractable on the SIM task and explain the empirical phenomenology of OOD learning dynamics seen in previous section. In \Cref{sec:one-layer-theory}, we first analyze a one-layer model whose dynamics can be solved analytically. We show that it can explain most phenomena observed in the experiment; however, it fails to reproduce {\TransientMemo}, suggesting that {\TransientMemo} is intrinsic to deep models, which highlights the fundamental difference between shallow and deep models. In \Cref{sec:two-layer-model-theory}, we further analyze the dynamics of a symmetric 2-layer linear model, which successfully captures {\TransientMemo}. Our theoretical results reveal a multi-stage behavior of the model Jacobian during training, which leads to the non-monotonic behavior in model output. We show that each stage in the {\TransientMemo} precisely corresponds to each stage in the Jacobian evolution.

Throughout this section, we assume ${\boldsymbol{f}}({\boldsymbol{\theta}}; {\boldsymbol{x}})$ is a linear function of ${\boldsymbol{x}}$. In this case the Jacobian of ${\boldsymbol{f}}$ with respect to ${\boldsymbol{x}}$ is a matrix that is completely determined by ${\boldsymbol{\theta}}$, which we denote by ${\boldsymbol{W}}_{{\boldsymbol{\theta}}} = \frac{\partial f({\boldsymbol{\theta}};{\boldsymbol{x}})}{\partial {\boldsymbol{x}}}$. In this way, the output of the model can be written as ${\boldsymbol{f}}({\boldsymbol{\theta}};{\boldsymbol{x}}) = {\boldsymbol{W}}_{{\boldsymbol{\theta}}} {\boldsymbol{x}}$. Using the trace trick (with detailed calculations provided in \Cref{sec:loss-with-linear-model-and-inif-data-limit}), it is easy to show that the overall loss function is equal to 
\begin{align}\mathcal L({\boldsymbol{\theta}}) = \frac 12 \left\| 
\left({\boldsymbol{W}}_{{\boldsymbol{\theta}}}-{\boldsymbol{I}}\right) {\boldsymbol{A}}^{1/2} \right\|_\mathcal F^2,\label{eq:transformed-loss}\end{align}
where ${\boldsymbol{A}} = \frac{1}{sn}\sum_{p=1}^s\sum_{k=1}^n {\boldsymbol{x}}_k^{(p)}{\boldsymbol{x}}_k^{(p)\top}$ is the empirical covariance. In this section, we assume $n$ is large, in which case ${\boldsymbol{A}}$ converges to the true covariance of the dataset $ {\boldsymbol{A}} = \mathbb E_{{\boldsymbol{x}} \sim \mathcal D} [{\boldsymbol{x}}{\boldsymbol{x}}^\top]$, which is a diagonal matrix ${\boldsymbol{A}} = \mathop{ \mathrm{ diag } }({\boldsymbol{a}})$, defined by $
a_p = \begin{cases} \sigma_p^2 + \frac{\mu_p^2}{s} & p \leq s \\ 0 & p > s \end{cases}$, for any $p \in [d]$.

\paragraph{Remark.} Notice that in the linear setting we might not directly train ${\boldsymbol{W}}_{{\boldsymbol{\theta}}}$; instead, we train its components. 
For example, we might have ${\boldsymbol{\theta}} = ({\boldsymbol{W}}_1, {\boldsymbol{W}}_2)$ and have $\boldsymbol{W}_{{\boldsymbol{\theta}}}= {\boldsymbol{W}}_1 {\boldsymbol{W}}_2$. 
Then, what we actually train is ${\boldsymbol{W}}_1$ and ${\boldsymbol{W}}_2$, instead of ${\boldsymbol{W}}_{{\boldsymbol{\theta}}}$. As many previous works have emphasized \citep{deep-linear-1,deep-linear-2,weihu-deep-matrix-factorization,deep-linear-andrew}, although the deep linear model has the same capacity as a one-layer linear model, their dynamics can be vastly different and the loss landscape of deep linear models can be non-convex.

\vspace{-0.5em}
\subsection{A One-Layer Model Theory and Its Limitations}\label{sec:one-layer-theory}
\vspace{-0.5em}

As a warm-up, we first study the dynamics of one-layer linear models, i.e., $f({\boldsymbol{W}}; {\boldsymbol{x}}) = {\boldsymbol{W}}{\boldsymbol{x}}$, in which case the Jacobian ${\boldsymbol{W}}_{\boldsymbol{\theta}}$ is simply ${\boldsymbol{W}}$. As we will show, this setting can already explain most of the observed phenomenology from the previous section including the order of generalization and the terminal phase slowing down, but fails to capture the {\TransientMemo}, which we will explore in next subsection. Here we present \Cref{thm:one-layer-dynamics}, which gives the analytical solution of the one-layer model on the SIM task.

\begin{theorem}\label{thm:one-layer-dynamics}
    Let ${\boldsymbol{W}}(t) \in \mathbb R^{d \times d}$ be initialized as ${\boldsymbol{W}}(0) = {\boldsymbol{W}}^{(0)}$, and updated by $\dot {\boldsymbol{W}} = - \nabla \mathcal L(W)$,
    with $\mathcal L$ be defined by \cref{eq:transformed-loss} with $f({\boldsymbol{W}},{\boldsymbol{z}}) = {\boldsymbol{W}}{\boldsymbol{z}}$, then we have for any ${\boldsymbol{z}} \in \mathbb R^d$, \begin{align}
        f({\boldsymbol{W}}(t), {\boldsymbol{z}})_k = \underbrace{\mathbbm 1_{\{k \leq s\}} \left[1 - \exp\left( - a_k  t\right)\right]z_k}_{\widetilde G_k(t)} + \underbrace{\sum_{i=1}^s \exp\left( -a_i t \right) w_{k,i}(0) z_i}_{\widetilde N_k(t)}. \label{eq:solution-one-layer}
    \end{align}
\end{theorem}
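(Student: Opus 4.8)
The plan is to reduce the matrix gradient flow to a family of decoupled scalar ODEs. The structural fact that makes this work, already available from \cref{eq:transformed-loss}, is that the data covariance $\boldsymbol{A} = \mathrm{diag}(\boldsymbol{a})$ is diagonal; this makes the gradient-flow equation on $\boldsymbol{W}$ separate completely across the entries $w_{k,i}$, each of which can then be integrated in closed form, and the output $f(\boldsymbol{W}(t),\boldsymbol{z})_k = \sum_i w_{k,i}(t) z_i$ reassembled afterwards.

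Concretely, I would proceed in four steps. (i) \emph{Differentiate the loss.} Writing $\mathcal{L}(\boldsymbol{W}) = \tfrac12\,\Tr\!\big[(\boldsymbol{W}-\boldsymbol{I})\boldsymbol{A}(\boldsymbol{W}-\boldsymbol{I})^{\top}\big]$, matrix calculus gives $\nabla\mathcal{L}(\boldsymbol{W}) = (\boldsymbol{W}-\boldsymbol{I})\boldsymbol{A}$, so the dynamics is $\dot{\boldsymbol{W}} = -(\boldsymbol{W}-\boldsymbol{I})\boldsymbol{A}$. (ii) \emph{Pass to coordinates.} Since $\boldsymbol{A}$ is diagonal, the $(k,i)$ entry of the right-hand side is $-a_i\,(w_{k,i}-\delta_{k,i})$, i.e.\ $\dot w_{k,i} = -a_i(w_{k,i}-\delta_{k,i})$, an autonomous linear scalar ODE for each pair $(k,i)$. (iii) \emph{Integrate.} For $i\le s$, where $a_i = \sigma_i^2 + \mu_i^2/s > 0$, the solution relaxing toward the fixed point $\delta_{k,i}$ is $w_{k,i}(t) = \delta_{k,i}\big(1-e^{-a_i t}\big) + w_{k,i}(0)\,e^{-a_i t}$; for $i>s$, where $a_i = 0$, we get $w_{k,i}(t)\equiv w_{k,i}(0)$. (iv) \emph{Reassemble.} In $f(\boldsymbol{W}(t),\boldsymbol{z})_k = \sum_i w_{k,i}(t)z_i$, the $\delta_{k,i}$ part of the $i\le s$ terms collapses to $\mathbbm{1}_{\{k\le s\}}(1-e^{-a_k t})z_k$, which is $\widetilde G_k(t)$, and the remaining part gives $\sum_{i=1}^s e^{-a_i t}w_{k,i}(0)z_i = \widetilde N_k(t)$, which is exactly \cref{eq:solution-one-layer}.

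None of these steps is technically demanding, so to the extent there is a "hard part" it is only conceptual bookkeeping: the non-informative directions $i>s$ have $a_i=0$, so the corresponding entries of $\boldsymbol{W}$ are frozen at their initial values (there is no force pulling them toward the identity), and the stated formula is therefore naturally read for $\boldsymbol{z}$ supported on the first $s$ coordinates — in particular for the OOD test point $\widehat{\boldsymbol{x}} = \sum_{p=1}^s\mu_p\boldsymbol{1}_p$ — with the understanding that in the general case one simply carries along the static term $\sum_{i>s} w_{k,i}(0)z_i$. The only hypothesis worth stating explicitly along the way is $a_i>0$ for $i\le s$, which guarantees the decay is genuine and holds whenever $\sigma_i>0$ or $\mu_i>0$; everything else follows from the diagonal structure of $\boldsymbol{A}$ inherited from the SIM data distribution.
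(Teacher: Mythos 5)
Your proposal is correct and follows essentially the same route as the paper: both compute $\nabla\mathcal{L}(\boldsymbol{W})=(\boldsymbol{W}-\boldsymbol{I})\boldsymbol{A}$, exploit the diagonality of $\boldsymbol{A}$ to decouple the linear ODE (the paper integrates row-by-row with a matrix exponential, you integrate entry-by-entry, which is the same computation), and reassemble the output. Your closing remark is also apt — for general $\boldsymbol{z}$ the frozen directions contribute a static term $\sum_{i>s}w_{k,i}(0)z_i$ that the stated formula omits, a point the paper's own derivation handles only implicitly via the convention $0\times 0^{-1}=0$ and by evaluating at test points supported on the first $s$ coordinates.
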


See \Cref{sec:proof-one-layer} for proof of \Cref{thm:one-layer-dynamics}. The Theorem shows that the $k$-th dimension of the output of a one-layer model evaluated on the test point $\widehat {\boldsymbol{x}}$ can be decomposed into two terms: the \textit{growth term} $\widetilde {G}_k(t)= \mathbbm 1_{\{k \leq s\}} \left[1 - \exp\left( - a_k t\right)\right]\mu_k$, and the \textit{noise term} $\widetilde N_k(t) = \sum_{i=1}^s \exp\left( -a_i t \right) w_{k,i}(0) \mu_i$. The following properties can be observed for these two terms: (i) the growth term converges to $\mu_k$ when $k \leq s$ and $0$ when $k > s$, while the noise term converges to $0$; (ii) both terms converge at an exponential rate; and (iii) the noise term is upper bounded by $\sum_{i=1}^s w_{k,i}(0) \mu_i$. 
If the model initialization is small in scale, specifically $w_{k,i}(0) \ll \frac{1}{s \max_{i \in [s]}\mu_i}$, then $\widetilde N_k(t)$ will always be small, and thus can be omitted. With this assumption in effect, the model output is dominated by the growth term. A closer look at the growth term then explains part of the observed phenomenology. 


\paragraph{Generalization Order and Terminal Phase Slowing Down.} 
It can be observed that $\widetilde G_k(t)$ converges at an exponential rate, which leads an exponential decay of evolution speed and \textit{explains the terminal phase slowing down}. Moreover, the exponential convergence rate of $\widetilde G_k(t)$ is controlled by the coefficient $a_k = \frac{1}{s}\left(s\sigma_k^2 + \mu_k^2\right)$. Therefore, the direction with larger $a_k$, i.e., larger $\mu_k$ and / or $\sigma_k$, converges faster, \textit{hence explaining the order of generalization to different concepts}. The theorem also reveals the proportional relationship between $\mu_k$ (concept signal strength) and $\sigma_k$ (data diversity).

\paragraph{The Limitation of the One Layer Model Theory.} 
While we have demonstrated that \Cref{thm:one-layer-dynamics} effectively explains both the generalization order and the terminal phase slowing down, in the solution \cref{eq:solution-one-layer}, the learning of each direction is independent. This independence omits the possible interaction between the dynamics of different directions in deeper models, and leads to monotonic and rather regular output trajectory (this is verified by the experiment results in \Cref{sec:generalization-order-controlled-by-signal-strength-and-diversity}). However, as the experiments in \Cref{sec:initial-right} show, when the number of layers becomes larger, the model actually exhibits a non-monotonic trace that can have detours. The theory based on the one-layer model fails in capturing this behavior. In the subsequent subsection, we introduce a more comprehensive theory based on a deeper model, and demonstrate that this model explains all the phenomena observed in \Cref{sec:observations-on-the-identity-mapping}, especially the {\TransientMemo}.

\vspace{-0.5em}
\subsection{A Symmetric Two-Layer Linear Model Theory}\label{sec:two-layer-model-theory}
\vspace{-0.5em}

In this subsection, we analyze a symmetric 2-layer linear model, namely $f({\boldsymbol{U}};{\boldsymbol{x}}) = {\boldsymbol{U}}{\boldsymbol{U}}^\top {\boldsymbol{x}}$, where ${\boldsymbol{U}} \in \mathbb R^{d \times d'}$ and $d' \geq d$. We demonstrate that it accurately captures all the observations presented in \Cref{sec:observations-on-the-identity-mapping}, and, more importantly, the theory derived from this model provides a comprehensive understanding of the evolution of the model Jacobian and output, offering a clear and intuitive explanation for the underlying mechanism of the model's seemingly irregular behaviors. 
Due to space constraints, we focus on providing an intuitive explanation of the multi-stage behavior of the model Jacobian and output, and defer the formal proofs to the appendix. 
It is also worth noting that this symmetric 2-layer linear model is a frequently studied model in theoretical analysis \citep{li2020towards,maxtrix-factorization-1,maxtrix-factorization-2}, and most existing theoretical results for this model focus on the implicit bias of the solution found, instead of on the non-monotonic behavior during training, which is the focus of our analysis.


For convenience, we denote the Jacobian of $f$ at time point $t$ by ${\boldsymbol{W}}(t) = {\boldsymbol{W}}_{{\boldsymbol{U}}(t)}$. The gradient flow update of the $i,j$-th entry of ${\boldsymbol{W}}$ is given by {\small  \begin{align}
\dot w_{i,j} = & \underbrace{w_{i,j} (a_i + a_j)}_{G_{i,j}(t)} - \underbrace{\frac 12 w_{i,j}\left[w_{i,i} (3 a_i + a_j) + \mathbbm 1_{\{i \neq j\}} w_{j,j}(3 a_j + a_i)\right]}_{S_{i,j}(t)} - \underbrace{\frac 12 \sum_{k \neq i\atop k\neq j}w_{k,i} w_{k,j} (a_i + a_j + 2a_k)}_{N_{i,j}(t)}. \label{eq:update-equation}
\end{align}}

As noted in \cref{eq:update-equation}, we  decompose the update of $w_{i,j}$ into three terms. We call $G_{i,j}(t) = w_{i,j} (t) (a_i + a_j)$ the \textit{growth term}, $S_{i,j}(t) = \frac 12 w_{i,j}\left[w_{i,i} (3 a_i + a_j) + \mathbbm 1_{\{i \neq j\}} w_{j,j}(3 a_j + a_i)\right]$ the \textit{suppression term}, and $N_{i,j}(t) = \frac 12 \sum_{k \neq i\atop k\neq j}w_{k,i}(t) w_{k,j}(t) (a_i + a_j + 2a_k)$ the \textit{noise term}. The name of these terms suggests their role in the evolution of the Jacobian: the growth term $G_{i,j}$ always has the same sign as $w_{i,j}$, and has a positive contribution to the update, so it always leads to the direction that \textbf{increases the absolute value} of $w_{i,j}$; the suppression term $S_{i,j}$ also has the same sign\footnote{Notice that since $ {\boldsymbol{W}}= {\boldsymbol{U}}{\boldsymbol{U}}^\top$ is a PSD matrix, the diagonal entries are always non-negative.} as $w_{i,j}$, but has a negative contribution in the update of $w_{i,j}$, so it always leads to the direction that \textbf{decreases the absolute value} of $w_{i,j}$; and the effect direction of the noise term is rather arbitrary since it depends on the sign of $w_{i,j}$ and other terms. It is proved in \Cref{lem:bound-noise} that under mild assumptions, the noise term will never be too large; for brevity, we omit it in the following discussion and defer the formal treatment of it to the rigorous proofs in \Cref{sec:two-layer-theory-proofs}.

\subsubsection{The Evolution of Entries of Jacobian}\label{sec:evolution-of-jaccobian-entries}

In order to better present the evolution of the Jacobian, we divide the entries of the Jacobian into three types: the \textbf{major entries} are the first $s$ diagonal entries, and the \textbf{minor entries} are the off-diagonal entries who are in the first $s$ rows or first $s$ columns, and other entries are \textbf{irrelevant entries}. Notice that the irrelevant entries do not contribute to the output of the test point so we will not discuss them. Moreover, we also divide minor entries into several groups. The minor entries in the $p$-th row or column belongs to the $p$-th group (thus each entry belongs to two groups). See \Cref{fig:illustration-jaccobian} for an illustration of the division of the entries.

\begin{wrapfigure}[13]{r}{0.35\textwidth}
    \centering
    \includegraphics[width=\linewidth]{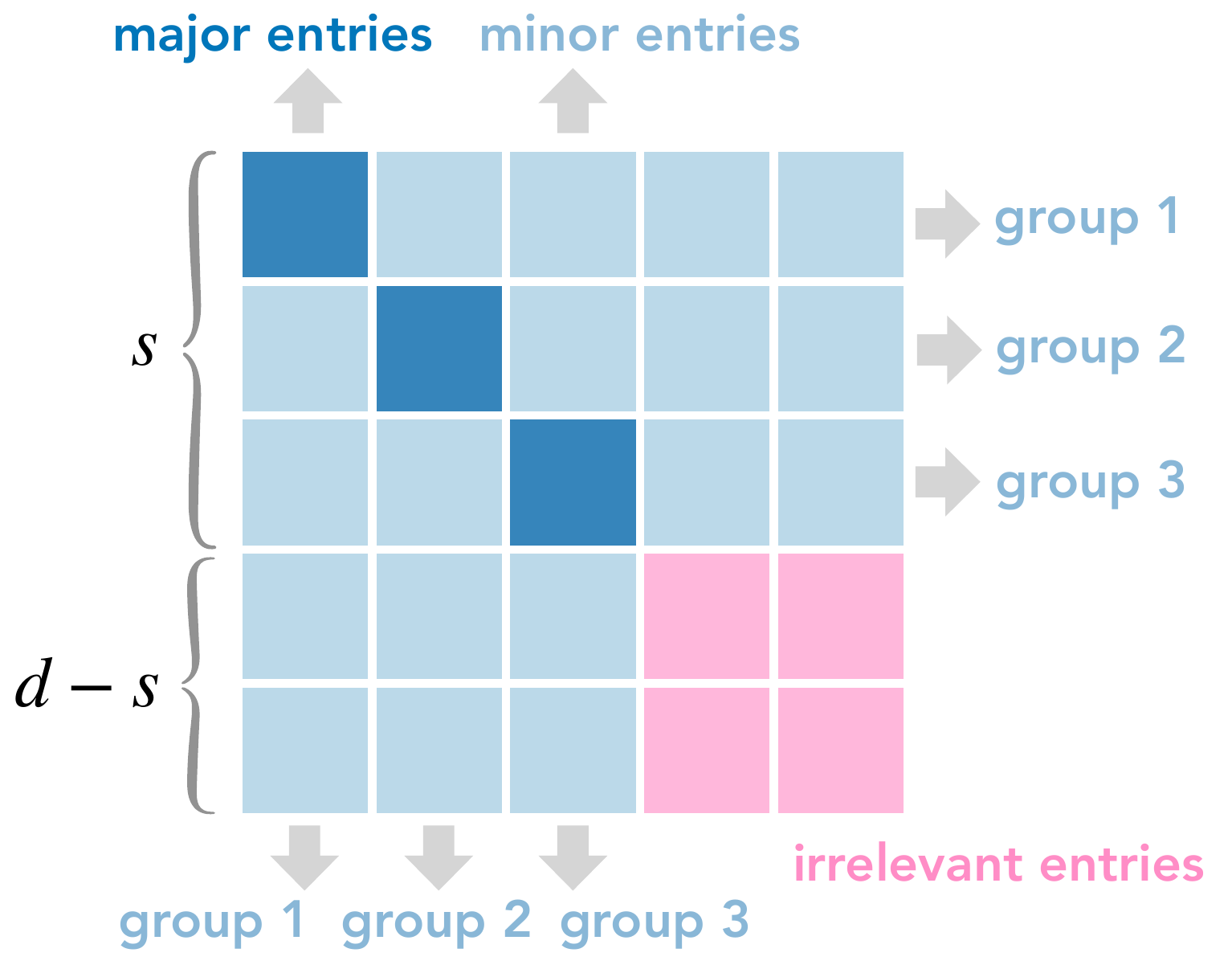}
    \caption{An illustration of the entries of the Jacobian.}
    \label{fig:illustration-jaccobian}
\end{wrapfigure}

\vspace{-0.7em}
\paragraph{Initial Growth.} In this section we assume $w_{i,j} \forall i,j$ are initialized around a very small value $\omega$ such that $\omega \ll \frac{1}{d \max_{i \in [s]} a_i}$ (This has been shown crucial for generalization \citep{xu2019training, liu2022omnigrok}, and especially in compositional tasks \citep{zhanginitialization}; See \Cref{sec:assupmtions} for specific assumptions). It is evident that when all $w_{i,j}$ are close to $\omega$ (we call this period the \textbf{initial phase}), the growth term is $\Theta(\omega)$, while the suppression term and the noise term are $\Theta(\omega^2)$.
This suggests that the evolution of $w_{i,j}$ is dominated by the growth term. Therefore, in the initial phase, every value in the Jacobian grows towards the direction of increasing its absolute value, with the speed determined by $a_i + a_j$. Since we assumed that ${\boldsymbol{a}}$ is ordered in a descending order, it is evident that each entry grows faster than those below it or to its right. The Initial Growth stage is formally characterized by \Cref{lem:upper-bound-initial-growth,lem:lower-bound-inital-growth,lem:lower-bound-inital-growth-diag}.

\vspace{-0.7em}
\paragraph{First Suppression.} In the Initial Growth stage, the first major entry will be the one that grows exponentially faster than all other entries, making it the first one that leaves the initial phase. Once the first major entry becomes significant and non-negligible, it will effect on the suppression term of all minor entries in the first group. When the difference between $a_1$ and $a_2$ is large enough, the first major entry is able to flip the growth direction of the first group of minor entries and push their values to $0$. The suppression stages are characterized by \Cref{lem:suppression}.

\vspace{-0.7em}
\paragraph{Second Growth and Cycle.}

Once the suppression of the first group of minor entries takes effect, the second major entry becomes the one that grows fastest. Thus, the second major entry will be the second one that leaves the initial stage. Again, when the second major entry becomes large enough, it will suppress the second group of minor entries and push their value to $0$. This process continues like this: the growth of a major entry is followed by the suppression of the corresponding group of minor entries, which, in turn, leaves space for the growth of the next major entry. The general growth stages are characterized by \Cref{lem:growth-after-initial} and the fate of off-diagonal entries is characterized by \Cref{lem:bound-noise}.

\begin{figure}
\centering    
\begin{minipage}{0.52\linewidth}
       \centering    
   \includegraphics[width=\linewidth]{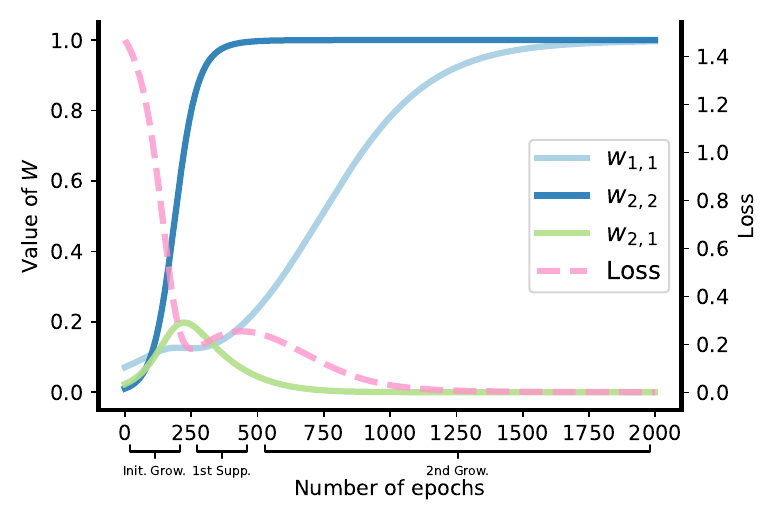}
\end{minipage}
\hfill
\begin{minipage}{0.44\linewidth}
       \centering    
   \includegraphics[width=\linewidth]{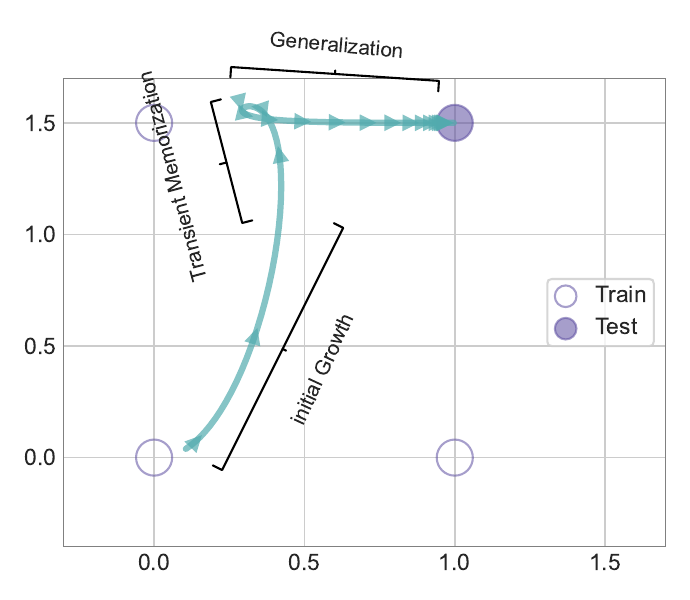}
\end{minipage}
    \caption{\textbf{The learning dynamics of a symmetric 2-layer linear model.} Left: The change of the test loss and the Jacobian entries with time predicted by the theory; Right: the corresponding model output trajectory. The figures are plotted under $s = 2$ and all entries of ${\boldsymbol{W}}$ are initialized positive.}
    \label{fig:theory-entries-loss}
    \vspace{-1.2em}
\end{figure}

\vspace{-0.7em}
\paragraph{Growth Slow Down and Stop.} Notice that the suppression term of a major entry is also influenced by its own magnitude. Therefore, when a major entries becomes significantly large, it also suppresses itself, leading to the slowing down of its growth. Note that this effect only slows down the growth but will not reverse the direction, since for major entries the suppression term is always smaller than the growth term, until $w_{i,i}$ becomes $1$ where the growth and suppression terms are equal and the evolution stops. The terminal stage of the growth of major entries are characterized by \Cref{lem:upper-bound-w}.

\vspace{-0.5em}
\subsubsection{Explaining Model Behavior}\label{sec:explaining-model-behavior}
\vspace{-0.5em}

Recall that we have $f\left({\boldsymbol{U}}(t); \widehat {\boldsymbol{x}}\right)_k = \sum_{p=1}^s w_{k,p}(t) \mu_p$. We now explain how the stage-wise evolution of Jacobian described in \Cref{sec:evolution-of-jaccobian-entries} determines the evolution of the model output.

\paragraph{Generalization Order and Terminal Phase Slowing Down.} From the discussions in \Cref{sec:evolution-of-jaccobian-entries}, by the end of the training, all the major entries converge to $1$ and all minor entries converge to $0$. The major entries grows in the order of corresponding $a_p$, which is determined by $\mu_p$ and $\sigma_p$, and slows down when approaching the terminal. This explains our observation that directions with larger $\mu_p$ and / or $\sigma_p$ is learned first, as well as the terminal phase slowing down. 

\paragraph{{\TransientMemo} and Non-monotonic Loss Curve.} We argue that the {\TransientMemo} and the non-monotonic loss curve is caused by the multi-stage major growth vs.\ minor growth / suppression process. Importantly, in certain configurations, minor entries growing towards larger absolute values (which is the incorrect solution) can lead to the decay of the OOD test loss, and cause an ``illusion of generalizing'' that the output trajectory is moving towards improving OOD generalization. However, this effect is later eliminated by the suppression of the corresponding minor entries, leading to a double (or multiple) descent-like loss curve and a reversal in the output trajectory.

More concretely, consider the first (initial) growth stage as an example. In this stage, for each $k \in [s]$, $f\left({\boldsymbol{U}}(t); \widehat {\boldsymbol{x}}\right)_k$ is dominated by $w_{k,1}(t) \mu_k$, since $w_{k,1}$ grows fastest among all the entries in the $k$-th row. If $w_{k,1}$ happens to be initialized positive, then $f\left({\boldsymbol{U}}(t); \widehat {\boldsymbol{x}}\right)_k$ grows towards $1$, which is the correct direction\footnote{Notice that this is true even when $k \neq 1$, i.e. $w_{k,1}$ is a minor entry.}, and loss thus decays. Since in a symmetric initialization, each entry has equal chance of being initialized positive or negative, when $s$ is small, it is easy to have many minor entries initialized positive, whose growth contributes to the decaying of loss.
\textit{This causes an illusion that the model is going towards the right direction of OOD generalization.} After the minor entries of the first group are suppressed, their contribution to the decaying of the loss is canceled, which leads to the output trajectory turning back to the direction of memorizing a training cluster and a transient loss increase. 

\Cref{fig:theory-entries-loss} presents the loss curve and the Jacobian entry evolution predicted by the theory with a specific initialization. Notice how, as claimed above, the first and second descending of loss accurately corresponds to the initial and second growth of the major entries, and the ascending of the loss corresponds to the suppression of the minor entries. When $s > 2$, there are multiple turns of growth and suppression stages and can possibly leads to a multiple-descent-like loss curve, which we confirm and illustrate in \Cref{sec:multiple-descents}. 


\vspace{-0.7em}
\paragraph{Remark on the Role of Out-of-Distribution.} If one of the training cluster is very close to the test point, i.e. there is an $p \in [s]$ where $\sigma_p \gtrsim \mu_p$, then the setting becomes highly in-distribution, and intuitively there shouldn't be a significant {\TransientMemoObj} since the training loss monotonically decays. We note that this intuition is captured by our theory through \Cref{ass:bound-signal-diff}, which requires that the signal strength of different directions to be distinct enough, and this essentially prevents a too large $\sigma_p$ (compared to $\mu_p$).

\vspace{-0.7em}
\paragraph{Remark on Failure Modes.} We note that our theory also provides an explanation on instances when the model fails to achieve OOD generalization when one or more of our assumptions outlined in \Cref{sec:assupmtions} breakdown. A specific case is when a major entry $w_{k,k}$ is overly suppressed by a corresponding minor entry before it can begin to grow, causing the growth term $G_{k,k}$ becomes nearly zero. Consequently, the model output at $\widehat {\boldsymbol{x}}$ in that direction converges to $0$, instead of $\mu_k$ as expected. See \Cref{sec:falure-modes} for more discussions and illustrations.

\vspace{-0.7em}
\paragraph{Remark on Existing Work.} There has been extensive research on the non-monotonic behavior of linear neural networks (in various settings). We note that existing studies  either focus on one-layer networks \citep{epochwise-dd-linear-1,epochwise-dd-linear-2} or diagonally initialized networks \citep{epochwise-dd-2,saxe2019mathematical,epochwise-dd-3,epochwise-dd-4,epochwise-dd-1}, which essentially make the evolution of each direction decoupled. This decoupling simplifies the learning dynamics and can overlook critical aspects thereof (as we discussed in the preceding subsection). In contrast, our analysis, through a careful treatment of each entry of the Jacobian, does not need to make the diagonal initialization assumption, hence allowing us to capture and characterize the rich behaviors that arise from the interaction between different directions.

\vspace{-0.4em}
\section{Diffusion Model Results}\label{sec:diffusion-model-dataset}
\vspace{-0.7em}

Tying back to our original motivation of devising an abstraction of concept space first explored in text-to-image generative diffusion models, we now aim to verify if our findings and predictions made on SIM task can be reproduced in a more involved practical setup with diffusion models.
To this end, we borrow the setup from \citet{okawa2024compositional, park2024emergencehiddencapabilitiesexploring} and train conditional image diffusion models on two concepts---\texttt{size} and \texttt{color}. 
%

\begin{figure}[ht!]
    \centering
    \begin{minipage}{0.3\linewidth}\centering
    \includegraphics[width=\linewidth]{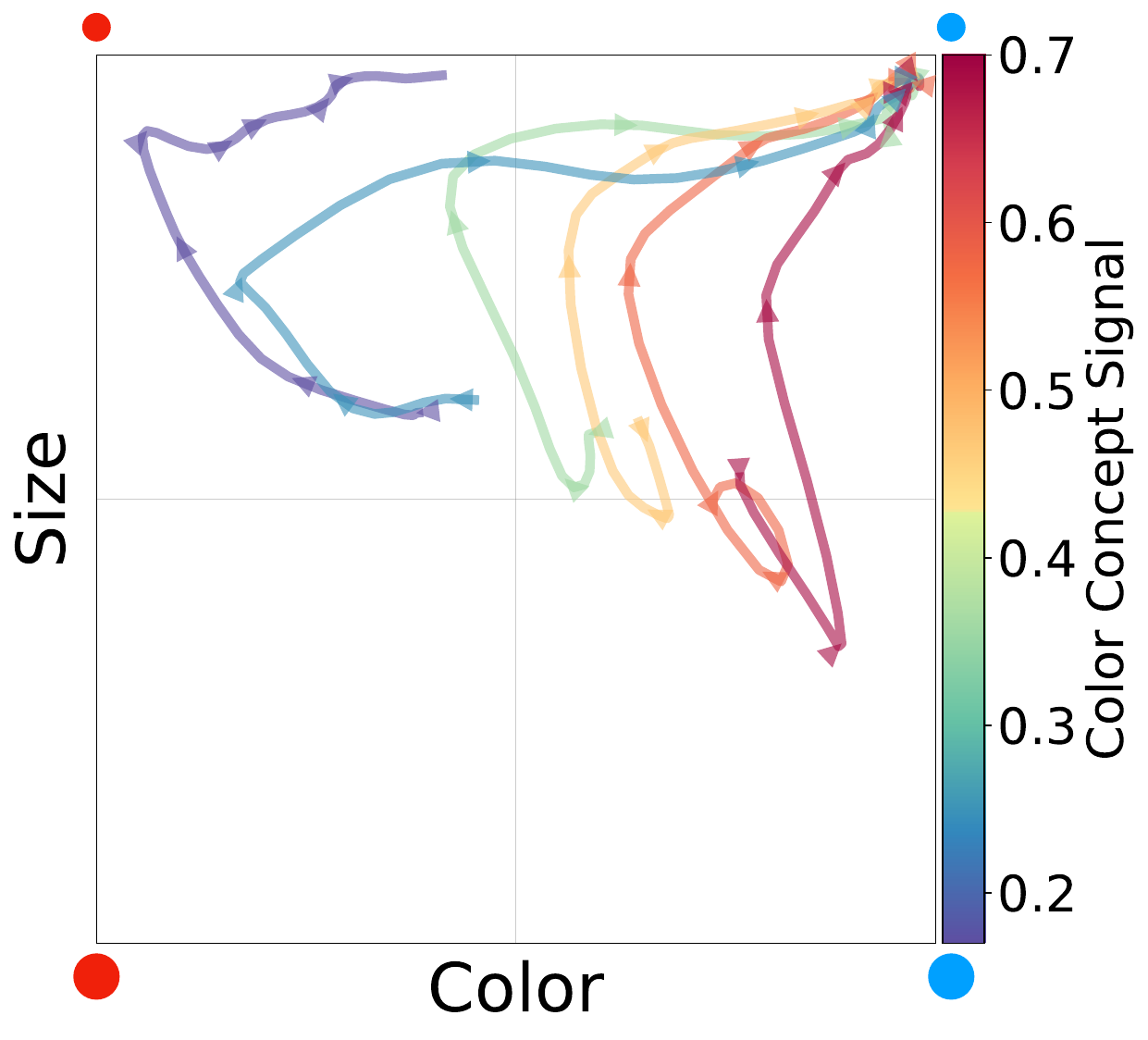}
    (a)
    \end{minipage}
    \begin{minipage}{0.3\linewidth}\centering
    \includegraphics[width=\linewidth]{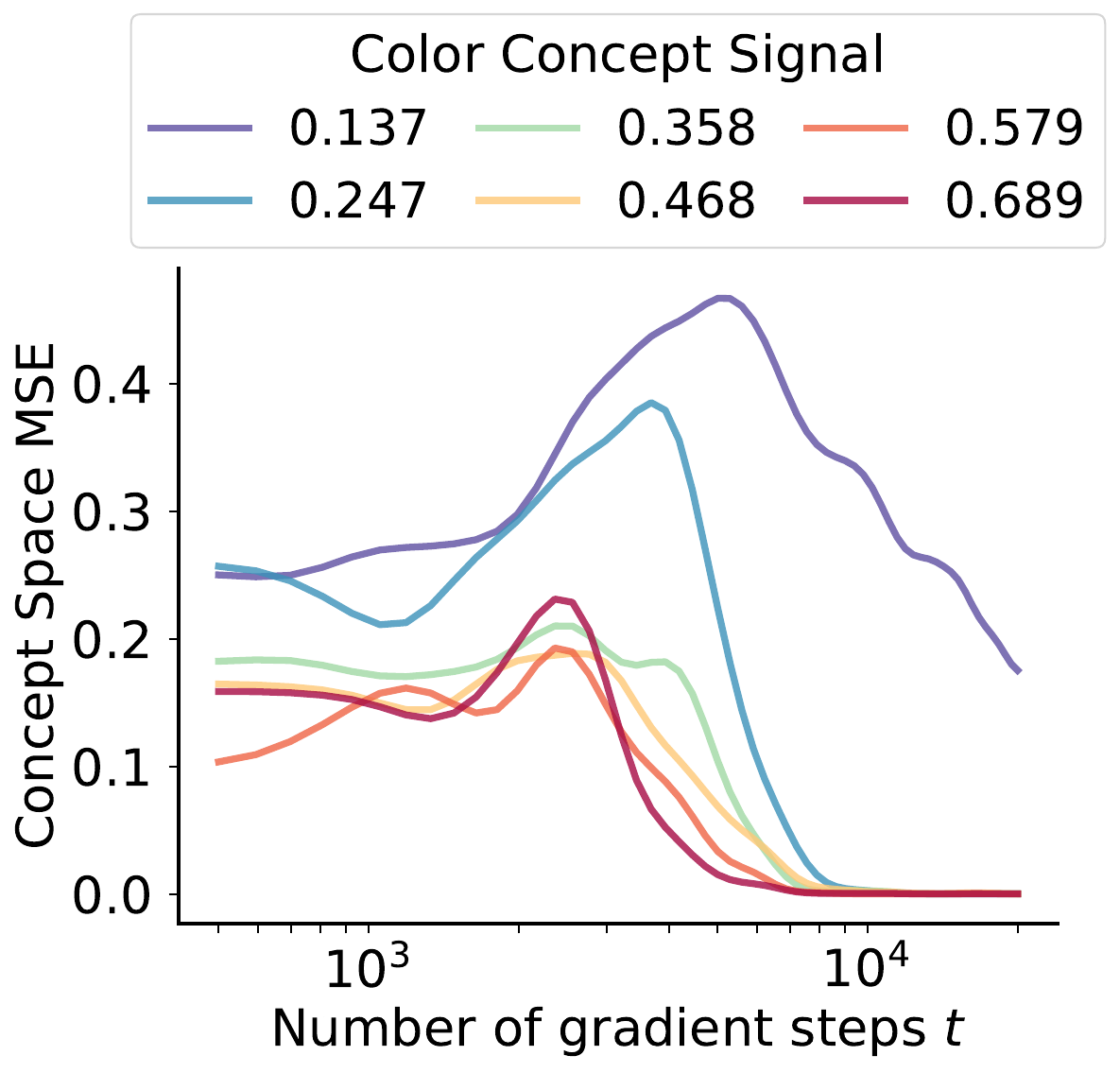}
    (b)
    \end{minipage}
    \begin{minipage}{0.3\linewidth}\centering
    \includegraphics[width=\linewidth]{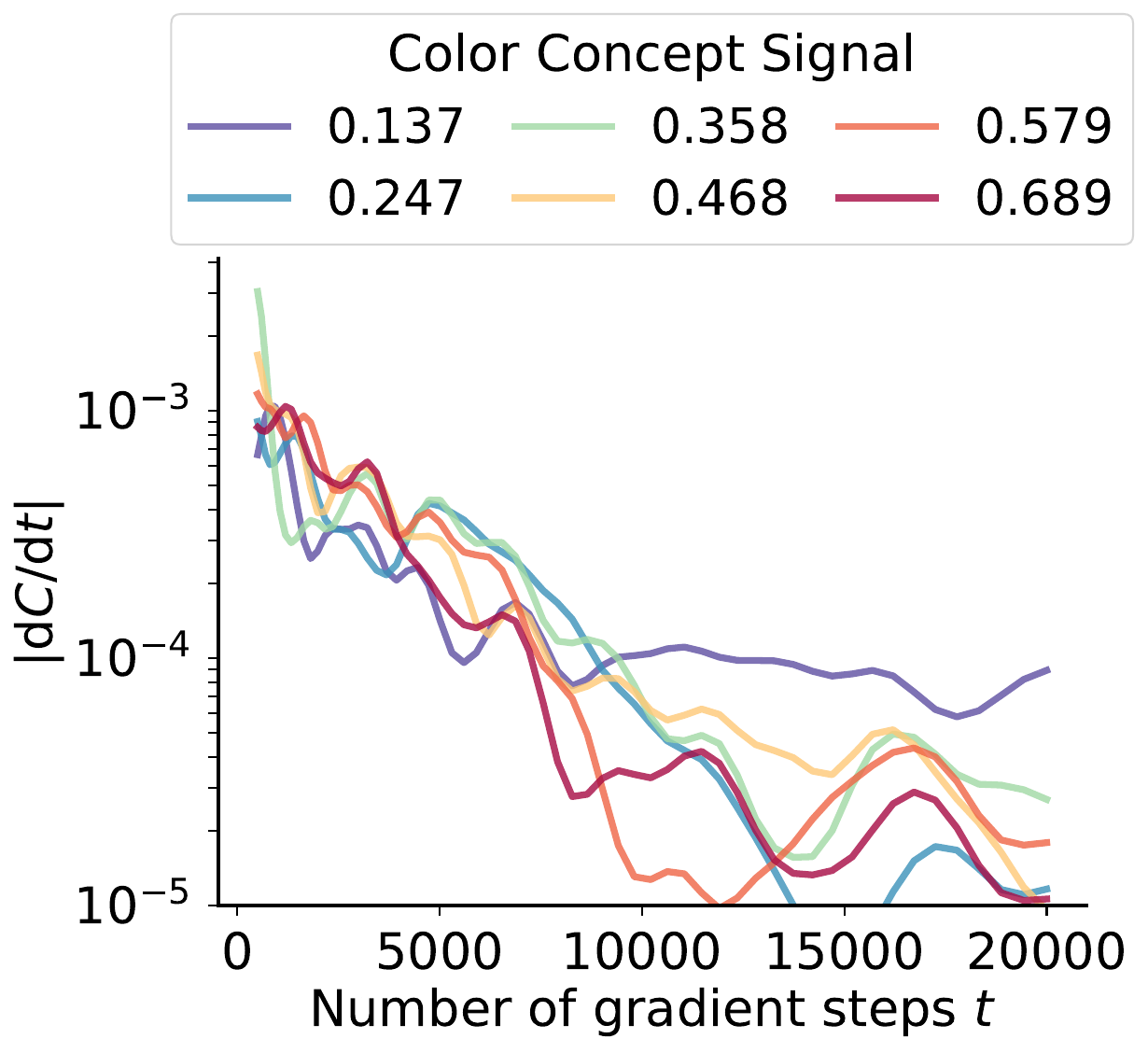}
    (c)
    \end{minipage}
    \caption{\textbf{Main observations reproduced on prompt-to-image diffusion models.} (a) Signal strength controls generalization speed and order; and a transient bend towards the concept with stronger signal. (b) A double descent-like curve for the concept space MSE caused by {\TransientMemo}. (c) Concept learning gradually slows down. Our theory predicts speed of concept learning slows down at an exponential rate, which broadly matches the experimental results.}
    \label{fig:diffusion}
\end{figure}

In \Cref{fig:diffusion}, we repeat the experiments did in Section 4 of \cite{park2024emergencehiddencapabilitiesexploring}: we consider a synthetic setup where the model learns to generate the image of a circle of the indicated size and color given by a text input. The training set only contains samples from input pairs (\texttt{red}, \texttt{big}), (\texttt{blue}, \texttt{big}) and (\texttt{red}, \texttt{small}), and the test set contains samples from an OOD input pair (\texttt{blue}, \texttt{small}). A pretrained classier is applied to map the image generated by the diffusion model back to the concept space so that we can plot the model output curve in the concept space, as shown in \Cref{fig:diffusion} (a). In order to compare model behaviors under different signal strength, we tune the contrast between the color of red samples and blue samples in the training set to control the strength of the color signal (See \Cref{fig:diffusion_data} for an illustration). See \Cref{app:diffusion} for experiment details.

\Cref{fig:diffusion}~{(a)} shows that the level of concept signal largely alters the generalization dynamics. Specifically, we see that the order of compositional generalization is determined by the color concept signal, and can be reversed when we tuning the concept signal. Additionally, \Cref{fig:diffusion}~{(a)} along with the corresponding loss curve plot \Cref{fig:diffusion}~{(b)} also shows {\TransientMemo} where the generalization dynamics show a bend towards the concept with stronger signal; the bend is transient and the generation eventually converges to the correct class (small blue circle), and the corresponding test loss curve shows a double descent-like trend. \Cref{fig:diffusion}~{(c)} confirms that the speed of compositional generalization, quantified by the concept space traversal distance per step, decelerates at an exponential rate, as expected from our theoretical findings (\Cref{thm:one-layer-dynamics}).

%
%


\vspace{-0.9em}
\section{Conclusion}\label{sec:conclusion}
\vspace{-0.7em}

In this paper, we propose SIM task as a further abstraction of the ``concept space'' previously explored by \cite{okawa2024compositional, park2024emergencehiddencapabilitiesexploring}. We conduct comprehensive investigation into the behaviors of a regression model trained on SIM, both empirically and theoretically, demonstrating that the learning dynamics on SIM effectively  captures the phenomena observed on image generation task, establishing SIM as a basis for studying compositional generalization. Critically, our theoretical analysis uncovers the underlying causes of several phenomena that previously observed on compositional generalizations, as well as predicting new ones that characterizes the multi-stage and non-monotonic learning dynamics, which have been largely overlooked in earlier research. Our diffusion model experiments further verify the validity of our analysis. Additional discussions and potential future work directions can be found in \Cref{sec:additional-discussions}.

\newpage

\section*{Acknowledgements}
Part of this work was done while YY and WH were visiting the Simons Institute for the Theory of Computing. 

\bibliography{ref}
\bibliographystyle{iclr2025_conference}

\newpage 
\appendix

\section{Related Work}
\label{sec:background} 

In this section, we provide some context for this paper by reviewing some existing work on compositional generalization and the study of deep linear networks.

\paragraph{Compositional Generalization.} Prior work on compositionality has often focused on benchmarking of pretrained models~\citep{thrush2022winoground, andreas2019measuring, lewis2022does, lake2018generalization, yun2022vision, lepori2023break, johnson2017clevr, conwell2022testing, yuksekgonul2022and, schott2021visual, gokhale2022benchmarking, valvoda2022benchmarking} or proposition of protocols that allow generation of compositional samples~\citep{du2021unsupervised, du2023reduce, liu2022compositional, xu2022prompting, yuksekgonul2022and, bugliarello2021role, spilsbury2022compositional, kumari2023ablating, du2024compositional}. While perfect compositionality in natural settings is still lacking~\citep{marcus2022very, leivada2022dall, conwell2022testing, conwell2023comprehensive, gokhale2022benchmarking, du2023reduce, liu2022compositional, singh2021illiterate, rassin2022dalle, feng2022training, hutchinson2022underspecification}, several works have demonstrated via use of toy settings that this is unlikely to be an expressibility issue, as was hypothesized, e.g., by \citet{fodor1975language}, since the model can in fact learn to perfectly compose in said toy settings. The ability to compose is in fact rather distinctly emergent~\citep{okawa2024compositional, lubana2024percolation} and the model learning it often correlates with distinctive patterns in the learning dynamics, as identified by \citet{park2024emergencehiddencapabilitiesexploring}. We note that there has in fact been some work on understanding compositional generalization abilities in neural networks~\citep{wiedemer2024compositional, udandarao2024no, ramesh2023capable}, but, unlike us, the focus of these papers is not on the model's learning dynamics.

\paragraph{Learning Dynamics of Deep Linear Networks.} 

Deep linear networks has been a commonly studied model for learning dynamics, and existing works mostly focus on the final solution found by the model, which primarily concerns the stationary point of the dynamics \citep{deep-linear-1,deep-linear-2,du2019width,weihu-deep-matrix-factorization,deep-linear-andrew}. There have also been works that try to characterize the full learning dynamics; however, they generally require the learning of each direction (neuron) to be decoupled \citep{saxe2013exact,epochwise-dd-1,epochwise-dd-2,epochwise-dd-3,epochwise-dd-4,pesme2021implicit}, which can be realized through a specific initialization choice. The decoupling assumption ignores the interaction between different neurons and highly simplify the dynamics, and as we mentioned in \Cref{sec:one-layer-theory}, make it unable to capture some important phenomena in practice. The symmetric 2-layer linear model is also a specific model that is frequently studied, especially in matrix sensing \citep{li2020towards,maxtrix-factorization-1,maxtrix-factorization-2}, and as we noted in \Cref{sec:two-layer-model-theory}, current theoretical results of this model focus on the implicit biases in the solutions learned, while our analysis, on the other hand, aims at characterizing the full learning dynamics and focus on its OOD behavior.


\section{Model Compositionally Generalize in Topologically Constrained Order}\label{sec:topological-order}

In this section, we introduce another phenomenon observed on SIM task learning that we do not put in the main paper: the order of compositional generalization happens in a topologically constrained order. 

In this section, instead of the single test point $\widehat {\boldsymbol{x}}$, we introduce a hierarchy of test points. Specifically, let $\mathcal I = \{0,1\}^s$ be the index set of test points. For each ${\boldsymbol{v}} \in \mathcal I$, we define a test point \begin{align}
\widehat {\boldsymbol{x}}^{({\boldsymbol{v}})} = \sum_{p=1}^s v_p \mu_p {\boldsymbol{1}}_p,
\end{align}
and call $\widehat {\boldsymbol{x}}^{({\boldsymbol{v}})}$ the test point with the index ${\boldsymbol{v}}$. Intuitively, the index ${\boldsymbol{v}}$ describes which training sets are combined into the current test point. If $\|{\boldsymbol{v}}\| = 1$ then $\widehat {\boldsymbol{x}}^{({\boldsymbol{v}})}$ is the center of one of the training clusters. 

We assign the component-wise ordering $\preceq$ to the index set $\mathcal I$, i.e., for ${\boldsymbol{u}},{\boldsymbol{v}} \in \mathcal I$, we say ${\boldsymbol{u}} \preceq {\boldsymbol{v}}$ if and only if $\forall i \in [n], u_i \leq v_i$. It's easy to see that $\preceq$ is a partial-ordering.

Interestingly, in the SIM experiment, the order of the generalization in different test points strictly follow the component-wise order. This finding can be described formally in the following way: the loss function is an order homomorphism between $\preceq$ on the index set, and $\leq$ on the real number. Let $\ell({\boldsymbol{z}})$ be the loss function of the test point ${\boldsymbol{z}}$, then we have the following empirical observation: \begin{align}
\forall {\boldsymbol{u}},{\boldsymbol{v}} \in \mathcal I, {\boldsymbol{u}} \preceq {\boldsymbol{v}} \implies \ell\left(\tilde {\boldsymbol{x}}^{(u)}\right) \leq \ell\left(\tilde {\boldsymbol{x}}^{(v)}\right).
\end{align}

\begin{figure}[htbp]
    \centering
    \begin{minipage}{0.30\textwidth}
    \includegraphics[width=\linewidth]{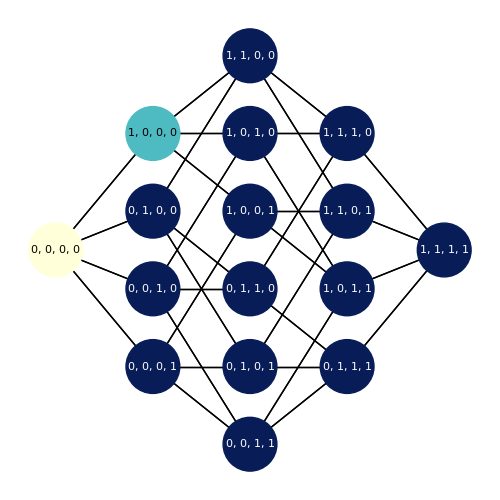}
    \end{minipage}
    \begin{minipage}{0.30\textwidth}
    \includegraphics[width=\linewidth]{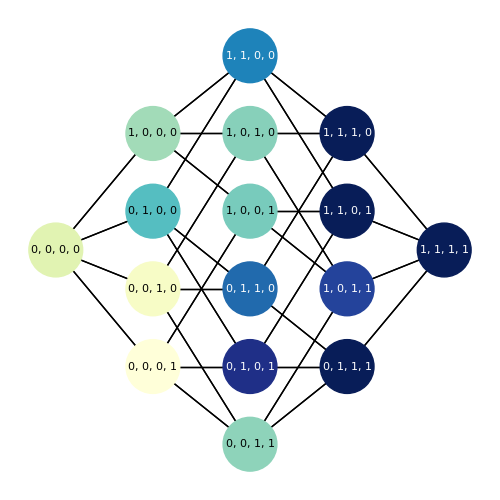}
    \end{minipage}
    \begin{minipage}{0.30\textwidth}
    \includegraphics[width=\linewidth]{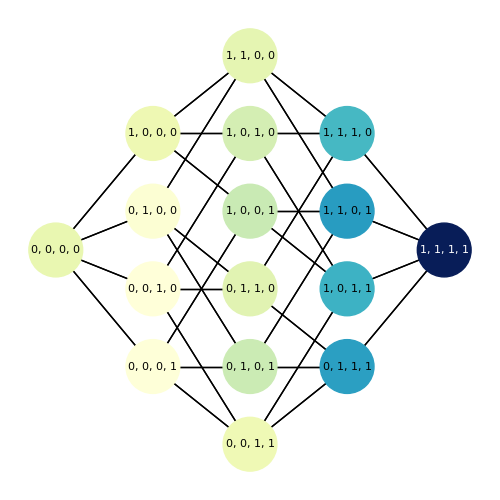}
    \end{minipage}
    \begin{minipage}{0.05\textwidth}
    \includegraphics[width=\linewidth]{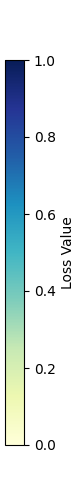}
    \end{minipage}
    \begin{minipage}{0.8\textwidth}
    \includegraphics[width=\linewidth]{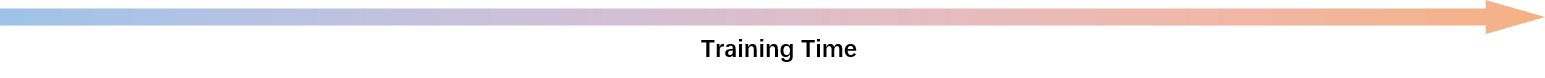}
    \end{minipage}
    \caption{The loss at each test point in different timepoints during training for a $2$-layer MLP with ReLU activation. Each graph represents a timepoint. Each node in the graph represents a test point, with index printed on it, and edges connecting nodes with Hamming distance $1$. The color of the graph represents the loss of corresponding test point. Notice that we truncate the loss at $1$ in order to unify the scale. From lest to right: epoch = $1,3,5$.}
    \label{fig:topology_constraint}
\end{figure}

In \Cref{fig:topology_constraint} we show the loss of each test point in several timepoints, with ${\boldsymbol{\mu}} = \left(1,2,3,4\right)$, ${\boldsymbol{\sigma}} = \left\{\frac{1}{2}\right\}^4$. There is a clear trend that the test points that are on the right of the graph (larger in the component-wise order) will only be learned after all of its predecessors are all learned. We call this phenomenon the \textit{topological constraint} since the constraint is based on the topology of the graph in \Cref{fig:topology_constraint}.

\section{Proofs and Calculations}

In the main text we have omitted some critical proofs and calculations due to space limitation. In this section we provide the complete derivations. Note that we postpone the proof of related theorems of \Cref{sec:two-layer-model-theory} to \Cref{sec:two-layer-theory-proofs} because of their length.

\subsection{The Loss Function with Linear Model and Infinite Data Limit}\label{sec:loss-with-linear-model-and-inif-data-limit}

In this subsection we derive the transformed loss function \cref{eq:transformed-loss}, as well as the expression of the data matrix ${\boldsymbol{A}}$. For convenience we denote ${\boldsymbol{W}}_{\boldsymbol{\theta}}$ by ${\boldsymbol{W}}$. We have \begin{align}
\mathcal L({\boldsymbol{\theta}}) & = \frac 1{2ns} \sum_{p=1}^s\sum_{k=1}^n \left\|  ({\boldsymbol{W}}- {\boldsymbol{I}}){\boldsymbol{x}}^{(p)}_k \right\|^2
\\& = \frac 1{2ns} \mathrm{Tr} \left[  {\boldsymbol{x}}^{(p)\top }_k({\boldsymbol{W}}- {\boldsymbol{I}})^\top ({\boldsymbol{W}}- {\boldsymbol{I}}) {\boldsymbol{x}}^{(p)}_k\right]
\\ & = \frac 1{2ns} \mathrm{Tr} \left[  ({\boldsymbol{W}}- {\boldsymbol{I}})^\top ({\boldsymbol{W}}- {\boldsymbol{I}}) {\boldsymbol{x}}^{(p)}_k{\boldsymbol{x}}^{(p)\top}_k\right]
\\ & = \frac 12 \mathrm{Tr} \left[  ({\boldsymbol{W}}- {\boldsymbol{I}})^\top ({\boldsymbol{W}}- {\boldsymbol{I}}) \frac 1{ns}{\boldsymbol{x}}^{(p)}_k{\boldsymbol{x}}^{(p)\top}_k\right]
\\ & = \frac 12 \mathrm{Tr} \left[  {\boldsymbol{A}}^{1/2}({\boldsymbol{W}}- {\boldsymbol{I}})^\top ({\boldsymbol{W}}- {\boldsymbol{I}}) {\boldsymbol{A}}^{1/2}\right]
\\ & = \frac 12 \left\|\left(\boldsymbol{{\boldsymbol{W}}} - {\boldsymbol{I}}\right) {\boldsymbol{A}}^{1/2}\right\|_\mathcal F^2.
\end{align}

Let $\mathcal G$ be the data generating process. It can be viewed as two components: first assign one of the $s$ clusters, and then draw a Gaussian vector from a Gaussian distribution in that cluster. Specifically, let ${\boldsymbol{x}}$ be an arbitrary sample from the traning set, then the distribuition of ${\boldsymbol{x}}$ is equal to \begin{align}
 {\boldsymbol{x}}\simeq {\boldsymbol{\mu}}^{(\eta)} + \mathop{ \mathrm{ diag } }({\boldsymbol{\sigma}}){\boldsymbol{\xi}},
\end{align}
where $\eta$ is a uniform random variable taking values in $[s]$ and ${\boldsymbol{\xi}} \sim \mathcal N({\boldsymbol{0}}, {\boldsymbol{I}})$ is a random Gaussian vector that is independent from $\eta$. Here $\simeq$ represents having the same distribution.

When $n \to \infty$, the data matrix ${\boldsymbol{A}}$ converges to the true covariance, which is is \begin{align}
{\boldsymbol{A}} & \to \mathbb E \left( {\boldsymbol{x}} {\boldsymbol{x}}^\top \right)
\\ & = \mathbb E \left[ \left({\boldsymbol{\mu}}^{(\eta)} + \mathop{ \mathrm{ diag } }({\boldsymbol{\sigma}}) {\boldsymbol{\xi}} \right)\left({\boldsymbol{\mu}}^{(\eta)} + \mathop{ \mathrm{ diag } }({\boldsymbol{\sigma}}) {\boldsymbol{\xi}} \right)^\top\right]
\\ & = \mathbb E \left( {\boldsymbol{\mu}}^{(\eta)}{\boldsymbol{\mu}}^{(\eta)\top} \right) +  \mathbb E \mathop{ \mathrm{ diag } }({\boldsymbol{\sigma}}){\boldsymbol{\xi}} {\boldsymbol{\xi}}^\top\mathop{ \mathrm{ diag } }({\boldsymbol{\sigma}})
\\ & = \frac 1s \sum_{p=1}^s {\boldsymbol{\mu}}^{(p)} {\boldsymbol{\mu}}^{(p)\top}  + \mathop{ \mathrm{ diag } }({\boldsymbol{\sigma}})^2
\\ & = \frac 1s \sum_{p=1}^s \mu_p^2 {\boldsymbol{1}}_p {\boldsymbol{1}}_p^\top   + \mathop{ \mathrm{ diag } }({\boldsymbol{\sigma}})^2
\\ & = \frac{1}{s} \mathop{ \mathrm{ diag } }\left({\boldsymbol{\mu}}\right)^2   + \mathop{ \mathrm{ diag } }({\boldsymbol{\sigma}})^2.
\end{align}

\subsection{Proof of \Cref{thm:one-layer-dynamics}}\label{sec:proof-one-layer}

In this subsection for the notation-wise convenience we denote ${\boldsymbol{W}} = {\boldsymbol{\theta}}$. Since the model is one-layer, the loss function \cref{eq:transformed-loss} becomes \begin{align}
\mathcal L({\boldsymbol{W}}) = \frac 12 \left\|( {\boldsymbol{W}}- {\boldsymbol{I}}) {\boldsymbol{A}}^{1/2}\right\|_\mathcal F^2,
\end{align}
and the gradient is \begin{align}
\nabla \mathcal L({\boldsymbol{W}}) = ({\boldsymbol{W}} - {\boldsymbol{I}}) {\boldsymbol{A}} = {\boldsymbol{W}} {\boldsymbol{A}}- {\boldsymbol{A}}.
\end{align}
We denote the $k$-th row of ${\boldsymbol{W}}$ and ${\boldsymbol{A}}$ by ${\boldsymbol{w}}_k$  and ${\boldsymbol{A}}_k$ respectively. Then we have \begin{align}
\dot {\boldsymbol{w}}_k = - {\boldsymbol{A}}{\boldsymbol{w}}_k + {\boldsymbol{a}}_k.
\end{align}
The solution of this differential equation is \begin{align}
{\boldsymbol{w}}_k(t) = \exp\left(  - {\boldsymbol{A}}t\right) \left[ w_k(0) - {\boldsymbol{A}}^{-1}{\boldsymbol{a}}_k  \right] + {\boldsymbol{A}}^{-1} {\boldsymbol{a}}_k,
\end{align}
where we use the convention $0 \times \left(0^{-1}\right) = 0$ to avoid the non-invertible case of ${\boldsymbol{A}}$.

Thus for any ${\boldsymbol{z}}\in \mathbb R^d$ we have \begin{align}
f({\boldsymbol{W}}(t); {\boldsymbol{z}})_k & = \left<{\boldsymbol{w}}_k(t), {\boldsymbol{z}}\right>
\\ & = \left<\left({\boldsymbol{I}} - e^{-{\boldsymbol{A}}t}\right) {\boldsymbol{A}}^{-1} {\boldsymbol{a}}_k, {\boldsymbol{z}}\right> + \left< e^{-{\boldsymbol{A}}t}w_k(0), {\boldsymbol{z}} \right>
\\ & = \sum_{p=1}^n\frac{1-e^{-a_pt}}{a_p} \mathbbm 1_{\{k = p\}} a_p z_p + \sum_{i=1}^n e^{-a_i t} w_{k,i}(0)z_i
\\ & = \mathbbm 1_{\{k \leq s\}}\left(1 - e^{-a_k t}\right) z_k + \sum_{i=1}^n e^{-a_i t} w_{k,i}(0)z_i,
\end{align}
and this proves the claim.

\section{Theoretical Analysis of the Two Layer Model} \label{sec:two-layer-theory-proofs}

In this section we provide a detailed analysis of the symmetric two-layer linear model described in \Cref{sec:two-layer-model-theory}.

In this section we assume a finite step size, i.e., ${\boldsymbol{W}}: \mathbb N \to  \mathbb R^{d \times d}$ is initialized by ${\boldsymbol{W}}(0)$ and updated by \begin{align}
\frac{{\boldsymbol{W}}(t+1) - {\boldsymbol{W}}(t)}{\eta} 
 & = - {\boldsymbol{U}}(t) \nabla \mathcal L({\boldsymbol{U}}(t))^\top- \nabla \mathcal L({\boldsymbol{U}}(t)) {\boldsymbol{U}}(t)^\top
 \\ & = {\boldsymbol{W}}(t){\boldsymbol{A}} + {\boldsymbol{A}} {\boldsymbol{W}}(t) - \frac{1}2\left[{\boldsymbol{A}} {\boldsymbol{W}}(t)^2 + {\boldsymbol{W}}(t)^2 {\boldsymbol{A}} + 2 {\boldsymbol{W}}(t) {\boldsymbol{A}} {\boldsymbol{W}}(t)\right].
\end{align}

The update of each entry $w_{i,j}(t)$ can be decomposed into three terms, as we described in the main text: \begin{align}
\frac{w_{i,j}(t + 1) - w_{i,j}(t)}{\eta}  =  & w_{i,j} (t) (a_i + a_j) - \frac{1}{2}\sum_{k=1}^d w_{k,i}w_{k,j} (a_i + a_j + 2 a_k)
\\ = & \underbrace{w_{i,j} (t) (a_i + a_j)}_{G_{i,j}(t)} 
\\ & - \underbrace{\frac 12 w_{i,j}\left[w_{i,i} (3 a_i + a_j) + \mathbbm 1_{\{i \neq j\}} w_{j,j}(3 a_j + a_i)\right]}_{S_{i,j}(t)} 
\\ & - \underbrace{\frac 12 \sum_{k \neq i\atop k\neq j}w_{k,i}(t) w_{k,j}(t) (a_i + a_j + 2a_k)}_{N_{i,j}(t)}.
\end{align}

\subsection{Assumptions}\label{sec:assupmtions}

We need make several assumptions to prove the results. Below we make several assumptions that all commonly hold in the practice. The first assumption to make is that both the value of $a_k$ and the initialization of ${\boldsymbol{W}}$ is bounded.

\begin{assumption}[Bounded Initialization and Signal Strength]\label{ass:constants}
    There exists $\alpha > 0, \gamma > 1, \beta > 1$ such that \begin{align}
\forall k, \alpha \leq a_k \leq \gamma \alpha,
\\ \forall i,j , \omega \leq |w_{i,j}(0)| \leq \beta \omega.
\end{align}
\end{assumption}

The second assumption is that the step size is small enough.
\begin{assumption}[Small Step Size]\label{ass:bound-eta}
There exists a constant $K \geq 20$, such that $\eta \leq \frac{1}{9 K \gamma \alpha}$.
\end{assumption}

Next, we define a concept called initial phase. The definition of initial phase is related to a constant $P > 0$.
\begin{definition}
    Assume there is a constant $P > 0$. For an entry $(i,j)$ and time $t$, if $|w_{i,j}(t)| \leq P\beta \omega$, we say this entry is in \textbf{initial phase}. 
\end{definition}

As the name suggests, in the initial phase, the entries shouldn't be too far away from the initialization, and we achieve this by an upper bound of $P$.
\begin{assumption}[Small Initial Phase]\label{ass:bound-beta}
    $P \omega \beta \leq 0.4$.
\end{assumption}

The next assumption to make is that the initialization value (i.e. $\omega$) should not be too large.  
\begin{assumption}[Small Initialization]\label{ass:bound-omega}
    \begin{align}
    \omega \leq \min\left\{ 
        \frac{\min\{\kappa - 1 , 1-\kappa^{-1/2}\}}{P K  \gamma d\beta^2 }  , \frac{1}{\sqrt{2\beta}}
    \right\}
    \end{align}
    and $\kappa > 1.1$, and $\kappa \leq 1 +  \frac 12  K C^{-1}$, $P \geq 2$.
\end{assumption}

Finally, we also assume that the signal strength difference is significant enough.

\begin{assumption}[Significant Signal Strength Difference]\label{ass:bound-signal-diff}
For any $i > j$, we have 
    \begin{align}
    \frac{a_i + a_j}{2a_i} \leq \frac{ \log P}{ 10 \kappa^2 \log \frac{1}{P \beta \omega} + \log P\beta }.
    \end{align}
    and there exists a constant $C > 1$ such that $a_i - 3a_j \geq C^{-1}\alpha$.
\end{assumption}

\subsection{The Characterization of the Evolution of the Jacobian}\label{sec:char-evol-jaccobian}

In this subsection, we provide a series of lemmas that characterize each stage the evolution of the Jacobian matrix ${\boldsymbol{W}}$.

The whole proof is based on induction, and in order to avoid a too complicated induction, we make the following assertion, which obviously holds at initialization. 
\begin{assertion} \label{assert:bound-noise}
    For all $t \in \mathbb N$, if $i \neq j$, then the entry $(i,j)$ stays in the initial phase for all time.
\end{assertion}

We will use \Cref{assert:bound-noise} as an assumption throughout the proves and prove it at the end. This is essentially another way of writing inductions. 

We have the following corollary that directly followed by \Cref{assert:bound-noise}.

\begin{corollary}
    For all $t \in \mathbb N$ and all $i,j$, $ |N_{i,j}(t)| \leq 2P \gamma \alpha d\beta^2 \omega^2$.
\end{corollary}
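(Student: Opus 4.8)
The plan is to bound the noise term $N_{i,j}(t)=\tfrac12\sum_{k\neq i,\,k\neq j}w_{k,i}(t)\,w_{k,j}(t)\,(a_i+a_j+2a_k)$ from \cref{eq:update-equation} by the triangle inequality, estimating each of the three factors in a summand separately. The one point that deserves care --- and the reason this is phrased as a corollary of \Cref{assert:bound-noise} rather than proved in isolation --- is that each summation index $k$ satisfies \emph{both} $k\neq i$ and $k\neq j$, so the two Jacobian entries $w_{k,i}(t)$ and $w_{k,j}(t)$ appearing in the summand are each strictly off-diagonal. Consequently \Cref{assert:bound-noise} applies to both of them and yields $|w_{k,i}(t)|\le P\beta\omega$ and $|w_{k,j}(t)|\le P\beta\omega$ uniformly in $t\in\mathbb N$; in particular no control of the diagonal (major) entries is needed.

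For the coefficient, \Cref{ass:constants} gives $a_\ell\le\gamma\alpha$ for every $\ell$, hence $a_i+a_j+2a_k\le 4\gamma\alpha$; and the sum ranges over at most $d$ values of $k$. Substituting these estimates into the sum, together with the prefactor $\tfrac12$, produces the claimed bound $|N_{i,j}(t)|\le 2P\gamma\alpha d\beta^2\omega^2$ (the termwise estimate gives $(P\beta\omega)^2$ for the product of the two entries, so up to the explicit constant the inequality is immediate). There is no genuine obstacle here: the corollary is a direct, uniform-in-$t$ consequence of the running inductive hypothesis \Cref{assert:bound-noise}, and it is precisely this uniform bound on the off-diagonal entries that will later be fed back in to close the induction establishing \Cref{assert:bound-noise} itself.
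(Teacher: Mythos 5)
Your argument is exactly the one the paper intends: the corollary is stated there without proof as an immediate consequence of \Cref{assert:bound-noise}, and your key observation---that every index $k$ in the sum satisfies $k\neq i$ and $k\neq j$, so both $w_{k,i}(t)$ and $w_{k,j}(t)$ are off-diagonal and hence bounded by $P\beta\omega$ uniformly in $t$---is the entire content. One caveat: the termwise estimate $\tfrac12\cdot d\cdot(P\beta\omega)^2\cdot 4\gamma\alpha$ actually yields $2P^2\gamma\alpha d\beta^2\omega^2$, which exceeds the stated bound by a factor of $P\ge 2$; this looks like looseness in the paper's constant (it is similarly cavalier elsewhere, e.g.\ the bound on $|S_{i,j}+N_{i,j}|$ in the proof of \Cref{lem:lower-bound-inital-growth}), and since $P$ is an absolute constant that \Cref{ass:bound-omega} can absorb, nothing downstream breaks---but your ``up to the explicit constant'' remark is doing real work and should be flagged rather than waved through.
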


Now, we are ready to present and prove the major lemmas. The first lemma is to post a (rather loose) upper bound of the value of the entries.

\begin{lemma}[Upper Bounded Growth]\label{lem:upper-bound-initial-growth}
Consider entry $(i,j)$. We have for all $t \in \mathbb N$, at timepoint $t$ the absolute value of the $(i,j)$-th entry satisfies\begin{align}
        |w_{i,j}(t)| \leq  |w_{i,j}(0)| \exp\left[\eta t (a_i + a_j) \kappa \right].
    \end{align}
\end{lemma}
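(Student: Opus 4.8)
The plan is to prove the bound by induction on $t$, controlling the growth of $|w_{i,j}(t)|$ through the update equation. The key observation is that among the three terms in the update, the growth term $G_{i,j}(t) = \eta w_{i,j}(t)(a_i+a_j)$ is the dominant driver, and it contributes a multiplicative factor roughly $(1 + \eta(a_i+a_j))$ per step; the suppression and noise terms only decrease $|w_{i,j}|$ or are negligibly small (of order $\omega^2$). The extra slack factor $\kappa > 1$ in the exponent is there precisely to absorb these lower-order contributions plus the discretization error from passing between the discrete update and the continuous exponential bound $\exp[\eta t(a_i+a_j)\kappa]$.

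First I would set up the induction: at $t=0$ the bound is an equality, so the base case is trivial. For the inductive step, I would write $|w_{i,j}(t+1)| \le |w_{i,j}(t)| + \eta|G_{i,j}(t)| + \eta|S_{i,j}(t)| + \eta|N_{i,j}(t)|$ — actually more carefully, since $S_{i,j}$ has the same sign as $w_{i,j}$ and thus helps, I would just bound $|w_{i,j}(t+1)| \le |w_{i,j}(t)|\bigl(1 + \eta(a_i+a_j)\bigr) + \eta|N_{i,j}(t)|$. Then I would invoke the Corollary following Assertion~\ref{assert:bound-noise}, which gives $|N_{i,j}(t)| \le 2P\gamma\alpha d\beta^2\omega^2$, and compare this to $\eta(a_i+a_j)|w_{i,j}(t)| \ge \eta(a_i+a_j)\omega$ (using the lower bound $|w_{i,j}(t)| \ge \omega$, which itself needs justification — either from Assertion~\ref{assert:bound-noise} for off-diagonal entries, or a companion lower-bound lemma; for a loose upper bound one can alternatively just carry the noise term additively). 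Using Assumption~\ref{ass:bound-omega}, which forces $\omega$ to be small enough that $2P K\gamma d\beta^2\omega \le \kappa - 1$ (reading off the first term in the min), the noise contribution is dominated by $(\kappa-1)\eta(a_i+a_j)|w_{i,j}(t)|$, so that
\begin{align}
|w_{i,j}(t+1)| \le |w_{i,j}(t)|\bigl(1 + \kappa\,\eta(a_i+a_j)\bigr).
\end{align}
Finally, using $1 + x \le e^x$ and the inductive hypothesis, $|w_{i,j}(t+1)| \le |w_{i,j}(0)|\exp[\eta t(a_i+a_j)\kappa]\cdot\exp[\kappa\eta(a_i+a_j)] = |w_{i,j}(0)|\exp[\eta(t+1)(a_i+a_j)\kappa]$, closing the induction.

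The main obstacle I anticipate is handling the noise term cleanly: the Corollary bound on $|N_{i,j}(t)|$ is stated unconditionally (given Assertion~\ref{assert:bound-noise}), but to fold it into the multiplicative growth factor one needs the lower bound $|w_{i,j}(t)| \gtrsim \omega$ to persist, which is not obviously monotone — the suppression term could in principle push an entry down. For a merely loose upper bound, the safe route is to keep $N$ additive and observe that, since every entry stays comparable to at most $\beta\omega \cdot \exp[\cdots]$ during the relevant window and $\eta$ is tiny by Assumption~\ref{ass:bound-eta}, the accumulated additive noise over $t$ steps is still dwarfed by the $\kappa$-slack in the exponential; making this bookkeeping precise, and checking that the constants in Assumptions~\ref{ass:bound-eta}--\ref{ass:bound-omega} are exactly what is needed, is the one genuinely fiddly part. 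Everything else is a routine $1+x\le e^x$ induction.
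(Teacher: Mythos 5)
Your skeleton matches the paper's proof: the same induction with the claim as inductive hypothesis, the same observation that the suppression term shares the sign of $w_{i,j}$ and therefore only helps the upper bound, the same invocation of the corollary to \Cref{assert:bound-noise} to bound $|N_{i,j}(t)|\le 2P\gamma\alpha d\beta^2\omega^2$, and the same $1+x\le e^x$ step with the $\kappa$-slack absorbing the noise. The one step that would fail as you wrote it is exactly the one you flag: you dominate the additive noise by $(\kappa-1)\eta(a_i+a_j)|w_{i,j}(t)|$ using $|w_{i,j}(t)|\ge\omega$, but that lower bound on the \emph{trajectory} does not persist --- off-diagonal entries are eventually suppressed toward zero (cf.\ \Cref{lem:suppression}), so $|w_{i,j}(t)|$ can drop below $\omega$ and the multiplicative absorption breaks down. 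Your proposed fallback (``keep $N$ additive and note the accumulated noise is dwarfed by the slack'') is the right instinct but is left as hand-waving.

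The clean fix, which is what the paper does, is to compare the per-step additive noise not to $|w_{i,j}(t)|$ but to the inductive upper bound itself. After bounding
$w_{i,j}(t+1)-w_{i,j}(t)\le \eta(a_i+a_j)\,w_{i,j}(0)\exp\!\left[\eta t(a_i+a_j)\kappa\right] + 2P\eta\gamma\alpha d\beta^2\omega^2$,
factor out $w_{i,j}(0)\exp\!\left[\eta t(a_i+a_j)\kappa\right]$; the noise-to-signal ratio is then controlled using only $|w_{i,j}(0)|\ge\omega$ (which is \Cref{ass:constants}, an assumption on the \emph{initialization}, not the trajectory) and $\exp[\cdots]\ge 1$, giving a contribution $\le 2P\gamma\alpha d\beta^2\omega\le 2(\kappa-1)\alpha\le(\kappa-1)(a_i+a_j)$ by \Cref{ass:bound-omega}. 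The total per-step factor is then $1+\kappa\eta(a_i+a_j)\le\exp[\kappa\eta(a_i+a_j)]$ and your induction closes verbatim. With that substitution your argument is the paper's proof.
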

\begin{proof}~ 
    Since of the $N_{i,j}$ term we only use its absolute value, the positive case and negative case are symmetric. WLOG we only consider the case where $w_{i,j}(0) > 0$ here.

    The claim is obviously satisfied at initialization. We use it as the inductive hypothesis. Suppose at timepoint $t \leq T-1$ the claim is satisfied, we consider the time step $t+1$.

    Since \Cref{assert:bound-noise} guaranteed that every non-diagonal entry is in the initial phase, and the $S_{i,j}$ term has different symbol with $w_{i,j}(0)$, we have \begin{align}
    \left| S_{i,j}(t) + N_{i,j}(t) \right| \leq 2P\gamma \alpha d \beta^2 \omega^2.
    \end{align}
    
    We have \begin{align}
    w_{i,j}(t+1) - w_{i,j}(t) & \leq \eta w_{i,j}(t)(a_i + a_j) + 2P\eta \gamma\alpha d\beta^2 \omega^2
    \\ & \leq \eta (a_i + a_j) w_{i,j}(0) \exp\left[ \eta t(a_i + a_j) \kappa \right] + 2P\eta \gamma\alpha d\beta^2 \omega^2
    \\ & = w_{i,j}(0) \exp\left[ \eta t(a_i + a_j) \kappa \right] \left[ \eta (a_i + a_j) + \frac {2P \eta \gamma\alpha d\beta^2 \omega^2}{w_{i,j}(0) \exp\left[ \eta t(a_i + a_j) \kappa \right]} \right]
    \end{align}
    From \Cref{ass:bound-omega}, we have \begin{align}
        \eta (a_i + a_j) + \frac {2P \eta \gamma\alpha d\beta^2 \omega^2}{w_{i,j}(0) \exp\left[ \eta t(a_i + a_j) \kappa \right]} & \leq \eta (a_i + a_j) + 2P \gamma\alpha d\beta^2 \omega
        \\ & \leq \eta (a_i + a_j) + 2 (\kappa-1) \eta \alpha 
        \\ & \leq \kappa \eta (a_i + a_j)
        \\ & \leq \exp( \kappa \eta [a_i + a_j] ) - 1,
    \end{align}
    thus we have \begin{align}
         w_{i,j}(t+1) & \leq  w_{i,j}(t) + \left[\exp( \kappa \eta [a_i + a_j] ) - 1\right] w_{i,j}(t)
         \\ & \leq w_{i,j}(0) \exp\left[ \eta (t+1)(a_i + a_j) \kappa \right].
    \end{align}

\end{proof}

Next, we prove that \Cref{lem:upper-bound-initial-growth} is tight in the initial stage of the training, up to a constant $\kappa$ in the exponential term.

\begin{lemma}[Lower Bounded Initial Growth]\label{lem:lower-bound-inital-growth}
Let $T_1 = \frac{\log P}{2 \eta \gamma \alpha \kappa}$. We have for all $t \in [T_1]$, at timepoint $t$ every entry $(i,j)$ is in the initial phase, and the absolute value of the $(i,j)$-th entry satisfies\begin{align}
        |w_{i,j}(t)| \geq |w_{i,j}(0)| \exp\left[\eta t (a_i + a_j) \kappa^{-1} \right] 
    \end{align}
    and $w_{i,j}(t) w_{i,j}(0) > 0$.
\end{lemma}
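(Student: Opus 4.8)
\textbf{Proof plan for Lemma~\ref{lem:lower-bound-inital-growth}.}

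The plan is to run an induction on $t$ that simultaneously tracks three facts: (i) every entry $(i,j)$ is still in the initial phase at time $t$, i.e. $|w_{i,j}(t)| \leq P\beta\omega$; (ii) $w_{i,j}(t)w_{i,j}(0) > 0$, so no sign flip has occurred; and (iii) the lower bound $|w_{i,j}(t)| \geq |w_{i,j}(0)|\exp[\eta t(a_i+a_j)\kappa^{-1}]$ holds. The base case $t=0$ is immediate. For the inductive step, WLOG take $w_{i,j}(0)>0$; by the inductive hypothesis $w_{i,j}(t)>0$ and $w_{i,j}(t)\leq P\beta\omega$ (from the already-proved upper bound \Cref{lem:upper-bound-initial-growth} together with $T_1$ being chosen so that $\exp[\eta T_1(a_i+a_j)\kappa] \leq \exp[2\eta\gamma\alpha\kappa T_1] = P$, hence $|w_{i,j}(t)|\leq P\beta\omega$ throughout $[T_1]$). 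This confirms (i) for free on $[T_1]$, and it also means Assertion~\ref{assert:bound-noise} applies so that $|N_{i,j}(t)|\leq 2P\gamma\alpha d\beta^2\omega^2$ and the suppression term $S_{i,j}(t)$, having the opposite sign to $w_{i,j}$, only further helps a \emph{lower} bound — the one place where $S$ and $N$ work against us is bounded by $|S_{i,j}(t)+N_{i,j}(t)|$, but since all entries are $O(P\beta\omega)$, $S_{i,j}(t) \geq -\tfrac12 w_{i,j}(t)\cdot O(P\beta\omega)(3a_i+a_j+3a_j+a_i)$ is itself $O(w_{i,j}(t)\cdot P\beta\omega\,\gamma\alpha)$.

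The core estimate is then
\begin{align}
w_{i,j}(t+1) - w_{i,j}(t) &\geq \eta w_{i,j}(t)(a_i+a_j) - \eta\cdot C'P\gamma\alpha d\beta^2\omega^2 \nonumber\\
&\geq \eta w_{i,j}(t)(a_i+a_j)\left[1 - \frac{C'P\gamma\alpha d\beta^2\omega^2}{w_{i,j}(t)(a_i+a_j)}\right],\nonumber
\end{align}
and using $w_{i,j}(t)\geq \omega$ and $a_i+a_j\geq 2\alpha$ together with \Cref{ass:bound-omega} (which forces $P K\gamma d\beta^2\omega \leq 1-\kappa^{-1/2}$, hence $C'P\gamma d\beta^2\omega \leq 1-\kappa^{-1}$ for the relevant constant $C'$ absorbed into $K$), the bracket is $\geq \kappa^{-1}$. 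Combined with the inductive hypothesis this gives $w_{i,j}(t+1) \geq w_{i,j}(t)(1+\eta(a_i+a_j)\kappa^{-1}) \geq w_{i,j}(t)\exp[\eta(a_i+a_j)\kappa^{-1}] - (\text{second order})$; a cleaner route is to note $1+x \geq e^{x/\kappa}$-type inequalities are awkward, so instead bound $w_{i,j}(t+1)\geq w_{i,j}(t)\exp[\tfrac12\eta(a_i+a_j)\kappa^{-1}]$ using $1+x\geq e^{x}$ is false in this direction; the correct move is $1+\eta(a_i+a_j)\kappa^{-1} \geq \exp[\eta(a_i+a_j)\kappa^{-1} - (\eta(a_i+a_j)\kappa^{-1})^2]$ and absorb the quadratic loss into the slack between $\kappa^{-1}$ and $\kappa^{-1/2}$ guaranteed by \Cref{ass:bound-eta} (small step size) — this is exactly why the lemma states the exponent with $\kappa^{-1}$ rather than a sharper constant. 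Positivity (ii) is preserved since the increment is positive.

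The main obstacle is the bookkeeping in the second paragraph: one must verify that the relative size of the combined suppression-plus-noise perturbation against the growth term is controlled by a constant strictly below $1-\kappa^{-1}$ \emph{uniformly over $[T_1]$}, and this is precisely where the hypothesis $w_{i,j}(t)\geq\omega$ is essential — a lower bound on the entry cannot be bootstrapped from the upper bound alone, so the induction must carry (iii) as a hypothesis, not derive it post hoc. Once the per-step multiplicative lower bound $w_{i,j}(t+1)\geq w_{i,j}(t)\exp[\eta(a_i+a_j)\kappa^{-1}]$ is established, telescoping over $t$ steps yields the stated bound, and checking $t\leq T_1$ keeps every entry in the initial phase closes the induction.
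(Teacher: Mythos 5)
Your proposal follows essentially the same route as the paper's proof: induction with the claim as hypothesis, bounding the combined suppression-plus-noise perturbation by $O(P\gamma\alpha d\beta^2\omega^2)$ via the initial-phase property of all entries, showing via \Cref{ass:bound-omega} that this perturbation is at most a $(1-\kappa^{-1/2})$ fraction of the growth term, converting the resulting per-step multiplicative bound to an exponential by absorbing the second-order loss into the $\kappa^{-1/2}$-vs-$\kappa^{-1}$ slack under the small step size, and deducing the initial-phase claim on $[T_1]$ from \Cref{lem:upper-bound-initial-growth} and the definition of $T_1$. One small slip: $S_{i,j}$ has the \emph{same} sign as $w_{i,j}$ and enters the update with a minus sign, so it works against the lower bound rather than helping it — but since you bound $|S_{i,j}+N_{i,j}|$ in absolute value anyway, exactly as the paper does, the argument goes through.
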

\begin{proof}

Similar to the proof of \Cref{lem:upper-bound-initial-growth}, we may just assume $w_{i,j}(0) > 0$. 

Moreover, we also use the claim as an inductive hypothesis and prove it by induction. Since here the inductive hypothesis states that every entry is in the initial phase, we have \begin{align}
    |S_{i,j}(t) + N_{i,j}(t)| \leq 4\gamma \alpha d \beta^2 \omega^2.
\end{align}
    
We have \begin{align}
w_{i,j}(t+1)  - w_{i,j}(t) & \geq \eta (a_i + a_j)w_{i,j}(0) \exp\left[ \eta t(a_i + a_j) \kappa^{-1} \right] - 2P\eta \gamma \alpha d \beta^2 \omega^2
\\ & = w_{i,j}(0) \exp\left[ \eta t(a_i + a_j) \kappa^{-1} \right]\left[\eta (a_i + a_j) - \frac{2P\eta \gamma\alpha d\beta^2\omega^2}{w_{i,j}(0) \exp\left[ \eta t(a_i + a_j) \kappa^{-1} \right] }\right]
\end{align}
From \Cref{ass:bound-omega}, we have \begin{align} \frac{2P\eta \gamma\alpha d\beta^2\omega^2}{w_{i,j}(0) \exp\left[ \eta t(a_i + a_j) \kappa^{-1} \right] } & \leq 2P\eta \gamma \alpha d \beta^2 \omega
\\ & \leq \left(1 - \kappa^{-1/2}\right) \eta (a_i + a_j).
\end{align}
Moreover, notice that when $\kappa > 1.1$, for any $x < 0.1$, we have $\kappa^{-1/2} x + 1  \geq e^{\kappa^{-1} x}$. Since \Cref{ass:bound-eta} ensured that $\eta \leq \frac{1}{10(a_i + a_j)}$, we have \begin{align}
    w_{i,j}(t+1) & \geq w_{i,j}(t) + w_{i,j}(t)\left[ 
\kappa^{-1/2} \eta(a_i + a_j) \right]
\\ & \geq w_{i,j}(t)\exp\left( \eta (a_i + a_j) \kappa^{-1} \right)
\\ & \geq w_{i,j}(0) \exp\left[ \eta(t+1) (a_i + a_j) \kappa^{-1} \right].
\end{align}

Finally, from \Cref{lem:upper-bound-initial-growth}, we have when \begin{align}
w_{i,j}(t) & \leq |w_{i,j}(0)| \exp\left(\eta t(a_i + a_j) \kappa\right)
\\ & \leq \beta \omega \exp\left(2 \eta T_1 \gamma\alpha \kappa\right)
\\ & =  P \beta \omega,
\end{align}
which confirms that every entry $(i,j)$ stays in the initial phase before time $T_1$.

\end{proof}

Notice that the time bound $T_1$ in \Cref{lem:lower-bound-inital-growth} is a uniform one which applies to all entries. For the major entries, we might want to consider a finer bound of the time that it leaves the initial phase. This can be proved by essentially repeating the same proof idea of \Cref{lem:lower-bound-inital-growth}.

\begin{lemma}[Lower Bounded Initial Growth for Diagonal Entries]\label{lem:lower-bound-inital-growth-diag}
Consider an diagonal entry $(i,i)$. Let $T_{1}^{(i)} = \frac{\log \frac{P \beta \omega}{w_{i,i}(0)}}{2 \eta a_i \kappa}$. We have for all $t \in \left[T_{1}^{(i)}\right]$, at timepoint $t$ the entry $(i,i)$ is in the initial phase, and the absolute value of the $(i,i)$-th entry satisfies\begin{align}
        w_{i,i}(t) \geq w_{i,i}(0) \exp\left(2 \eta t a_i \kappa^{-1} \right) .
    \end{align}
\end{lemma}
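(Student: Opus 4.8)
The plan is to mimic the induction used in the proof of \Cref{lem:lower-bound-inital-growth}, but tightened for a diagonal entry so that the exponential rate is $2a_i\kappa^{-1}$ (matching the diagonal growth coefficient $a_i + a_i = 2a_i$ in \cref{eq:update-equation}) and the horizon is the entry-specific time $T_1^{(i)}$ at which $w_{i,i}$ reaches the edge $P\beta\omega$ of the initial phase. First I would fix $i$ and recall that for a diagonal entry the growth term is $G_{i,i}(t) = 2 a_i w_{i,i}(t)$ and the suppression term is $S_{i,i}(t) = \tfrac12 w_{i,i}(t)\cdot w_{i,i}(t)\cdot 4a_i = 2a_i w_{i,i}(t)^2$, which is nonnegative and hence acts against growth; the noise term $N_{i,i}(t)$ is bounded by the Corollary as $|N_{i,i}(t)|\le 2P\gamma\alpha d\beta^2\omega^2$. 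As long as we are inside the window $t\le T_1^{(i)}$, the entry stays in the initial phase (this is part of what the induction will establish), so $w_{i,i}(t)\le P\beta\omega$ and therefore $S_{i,i}(t)\le 2a_i\cdot P\beta\omega\cdot w_{i,i}(t)$, which by \Cref{ass:bound-beta} is a small fraction of $G_{i,i}(t)$; combined with the noise bound this gives $S_{i,i}(t)+N_{i,i}(t)\le$ (small)$\cdot\eta$-order term relative to the growth term.

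Then the induction proper: assuming $w_{i,i}(t)\ge w_{i,i}(0)\exp(2\eta t a_i\kappa^{-1})$ and that $w_{i,i}(t)>0$ stays in the initial phase, I would write
\begin{align*}
w_{i,i}(t+1)-w_{i,i}(t) &\ge \eta\, 2a_i w_{i,i}(t) - \eta\bigl(S_{i,i}(t)+|N_{i,i}(t)|\bigr)\\
&\ge w_{i,i}(0)\exp(2\eta t a_i\kappa^{-1})\Bigl[2\eta a_i - \tfrac{2P\eta\gamma\alpha d\beta^2\omega^2 + \eta\,2a_i P\beta\omega\, w_{i,i}(t)}{w_{i,i}(0)\exp(2\eta t a_i\kappa^{-1})}\Bigr],
\end{align*}
and invoke \Cref{ass:bound-omega} (the bound $\omega\le \frac{1-\kappa^{-1/2}}{PK\gamma d\beta^2}$ and $P\beta\omega\le0.4$) to show the bracketed subtracted quantity is at most $(1-\kappa^{-1/2})\,2\eta a_i$, leaving a net multiplicative increment of at least $\kappa^{-1/2}\cdot 2\eta a_i$. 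Using the elementary inequality $\kappa^{-1/2}x + 1\ge e^{\kappa^{-1}x}$ for $0<x<0.1$ (valid since \Cref{ass:bound-eta} forces $2\eta a_i\le 0.1$), this upgrades to $w_{i,i}(t+1)\ge w_{i,i}(t)\exp(2\eta a_i\kappa^{-1})\ge w_{i,i}(0)\exp(2\eta(t+1)a_i\kappa^{-1})$, closing the induction on the lower bound and on positivity. Finally, to confirm the entry is still in the initial phase throughout $[T_1^{(i)}]$, I would feed the endpoint $t=T_1^{(i)}$ into the upper bound from \Cref{lem:upper-bound-initial-growth}, obtaining $w_{i,i}(t)\le w_{i,i}(0)\exp(\eta t\,2a_i\kappa) \le w_{i,i}(0)\exp\bigl(\log\frac{P\beta\omega}{w_{i,i}(0)}\bigr) = P\beta\omega$, exactly as needed.

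The main obstacle is bookkeeping the constants so that the suppression-plus-noise term is genuinely dominated by the growth term over the \emph{entire} window: unlike the off-diagonal case, here the suppression term $2a_i w_{i,i}^2$ can grow quadratically, so one must verify that $w_{i,i}$ staying below $P\beta\omega$ (which in turn relies on \Cref{ass:bound-beta} making $P\beta\omega\le0.4$) keeps the ratio $S_{i,i}/G_{i,i}\le P\beta\omega$ small enough that the net rate never drops below $\kappa^{-1}\cdot 2a_i$. The circularity between "stays in initial phase" and "grows at the claimed rate" is resolved exactly as in \Cref{lem:lower-bound-inital-growth}: the lower-bound induction and the initial-phase claim are carried simultaneously, with the upper bound \Cref{lem:upper-bound-initial-growth} (already proved unconditionally) supplying the ceiling $P\beta\omega$ at the horizon $T_1^{(i)}$.
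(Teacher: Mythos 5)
Your proposal is correct and follows essentially the same route as the paper: the paper omits this proof, stating it is obtained from the proof of \Cref{lem:lower-bound-inital-growth} by replacing $\gamma\alpha$ with $a_i$ and $\beta\omega$ with $w_{i,i}(0)$, which is precisely the induction you carry out (growth rate $2a_i$, suppression and noise bounded as second-order via the initial-phase ceiling and \Cref{ass:bound-omega}, the inequality $\kappa^{-1/2}x+1\ge e^{\kappa^{-1}x}$, and the endpoint check $w_{i,i}(T_1^{(i)})\le P\beta\omega$ from \Cref{lem:upper-bound-initial-growth}). Your explicit treatment of $S_{i,i}=2a_iw_{i,i}^2$ and the PSD-based positivity remark match the intended argument, so there is nothing substantively different to report.
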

We omit the proof of \Cref{lem:lower-bound-inital-growth-diag} since it is almost identical to the proof of \Cref{lem:lower-bound-inital-growth}, only with replacing $\gamma \alpha$ by $a_i$ and $\beta \omega$ by $w_{i,i}(0)$.

Next, we characterize the behavior of one diagonal entry after it leaves the initial phase.
\begin{lemma}[Lower Bounded After-Initial Growth for Diagonal Entries]\label{lem:growth-after-initial} Consider a diagonal entry $(i,i)$. If at time $t_0$ we have $|w_{i,i}(t_0)| \geq P \beta \omega$, and for a $\lambda \in (P\beta \omega,1-K^{-1})$, before time $T^{(\lambda)}$ we have $w_{i,i}(t + t_0) < \lambda$ for all $t \in [T^{(\lambda)}]$, then we have \begin{align}
    w_{i,i}(t + t_0) \geq w_{i,i}(t_0) \exp\left[ 
 2\eta  t a_i (1-\lambda) \kappa^{-1}\right].
\end{align}
Moreover, $w_{i,i}(0), w_{i,i}(t_0),w_{i,i}(t_0 + t) \geq 0$.
\end{lemma}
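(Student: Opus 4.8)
The plan is to reduce the diagonal dynamics to a perturbed logistic recursion and run an induction on $t$ that mirrors the proofs of \Cref{lem:lower-bound-inital-growth} and \Cref{lem:lower-bound-inital-growth-diag}, the only change being that we now exploit $w_{i,i}<\lambda$ rather than $w_{i,i}\approx 0$. Specializing \cref{eq:update-equation} to $i=j$, the growth and suppression terms combine cleanly, $G_{i,i}(t)-S_{i,i}(t)=2a_i\,w_{i,i}(t)\bigl(1-w_{i,i}(t)\bigr)$, so that
\begin{align}
w_{i,i}(t+1)-w_{i,i}(t)=\eta\Bigl[\,2a_i\,w_{i,i}(t)\bigl(1-w_{i,i}(t)\bigr)-N_{i,i}(t)\,\Bigr],\qquad N_{i,i}(t)=\sum_{k\neq i}w_{k,i}(t)^2\,(a_i+a_k)\geq 0.
\end{align}
Two facts drive everything: the noise term $N_{i,i}$ is non-negative, so it only ever slows the growth (consistent with proving a \emph{lower} bound); and because every off-diagonal entry stays in the initial phase by \Cref{assert:bound-noise}, the corollary that immediately follows it bounds $N_{i,i}(t)$ by $\Theta(\omega^2)$ with constants depending only on $d,P,\beta,\gamma,\alpha$, which is negligible once $w_{i,i}$ has grown past $P\beta\omega$.

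I would prove the claimed estimate by induction on $t\in[T^{(\lambda)}]$, the case $t=0$ being trivial. For the step, suppose $w_{i,i}(t_0+t)\geq w_{i,i}(t_0)\exp[\,2\eta t a_i(1-\lambda)\kappa^{-1}\,]\geq P\beta\omega$, where the last inequality uses the hypothesis $w_{i,i}(t_0)\geq P\beta\omega$ and the non-negativity of the exponent. Using $w_{i,i}(t_0+t)<\lambda$ to replace $1-w_{i,i}(t_0+t)$ by the lower bound $1-\lambda$, then factoring out $w_{i,i}(t_0+t)$ and bounding the ratio $N_{i,i}(t_0+t)/w_{i,i}(t_0+t)\leq\Theta(\omega)$ (the $\Theta(\omega^2)$ numerator over the $\geq P\beta\omega$ denominator), one obtains
\begin{align}
w_{i,i}(t_0+t+1)\geq w_{i,i}(t_0+t)\Bigl[\,1+\eta\bigl(2a_i(1-\lambda)-\Theta(\omega)\bigr)\,\Bigr].
\end{align}
\Cref{ass:bound-omega}, together with $a_i\geq\alpha$ and $1-\lambda>K^{-1}$ (since $\lambda<1-K^{-1}$), makes the $\Theta(\omega)$ term at most $(1-\kappa^{-1/2})\cdot 2a_i(1-\lambda)$, so the bracket is $\geq 1+\kappa^{-1/2}\cdot 2\eta a_i(1-\lambda)$; and \Cref{ass:bound-eta} guarantees $x:=2\eta a_i(1-\lambda)\leq 2\eta\gamma\alpha<0.1$. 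Applying the same elementary inequality already used in \Cref{lem:lower-bound-inital-growth}, namely $1+\kappa^{-1/2}x\geq e^{\kappa^{-1}x}$ for $\kappa>1.1$ and $x<0.1$, yields $w_{i,i}(t_0+t+1)\geq w_{i,i}(t_0+t)\,e^{2\eta a_i(1-\lambda)\kappa^{-1}}$, which combined with the inductive hypothesis closes the induction.

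The positivity claim is essentially free: $W(\cdot)=U(\cdot)U(\cdot)^\top$ is PSD at every step, hence every diagonal entry is $\geq 0$, and $|w_{i,i}(0)|\geq\omega>0$ by \Cref{ass:constants} forces $w_{i,i}(0)>0$, while the exponential lower bound gives $w_{i,i}(t_0),w_{i,i}(t_0+t)\geq P\beta\omega>0$. The only genuinely delicate part is the constant-chasing inside the inductive step: one must verify that the noise term, now divided by a value of $w_{i,i}$ that has grown to order $\omega$ or larger (rather than being pinned near $\omega$ from both sides as in the initial phase), is dominated by the margin $1-\kappa^{-1/2}$ that the assumptions set aside — after which the argument is just the observation that a logistic recursion with gain $2a_i$ plus a tiny additive perturbation grows at least at rate $2a_i(1-\lambda)$ as long as it remains below $\lambda$.
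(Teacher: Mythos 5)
Your proposal is correct and follows essentially the same route as the paper's proof: the same identity $G_{i,i}-S_{i,i}=2a_iw_{i,i}(1-w_{i,i})$, the same induction with the noise term bounded via \Cref{assert:bound-noise} and absorbed into the $(1-\kappa^{-1/2})$ margin from \Cref{ass:bound-omega}, the same elementary inequality $1+\kappa^{-1/2}x\geq e^{\kappa^{-1}x}$, and the same PSD argument for non-negativity. The constant-chasing you flag as the delicate part checks out against the paper's assumptions.
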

\begin{proof}
Notice that since $ {\boldsymbol{W}}= {\boldsymbol{U}}{\boldsymbol{U}}^\top$ is a PSD matrix, its diagonal entries are always non-negative, this ensures that $w_{i,i}(0), w_{i,i}(t_0),w_{i,i}(t_0 + t) \geq 0$.

For the time after $t_0$ and before $t_0 + T^{(\lambda)}$, we use an induction to prove the claim, with the claim itself as the inductive hypothesis. It clearly holds when $t = 1$. 

Notice that when $w_{i,j}(t') < \lambda$, we have \begin{align}
G_{i,j}(t') - S_{i,j}(t') = 2 a_i w_{i,i}(t') \left[1 - w_{i,i}(t') \right] \geq 2a_i w_{i,i}(t')(1 - \lambda).
\end{align}

Thus we have \begin{align}
& w_{i,i}(t_0 + t+1)  - w_{i,i}(t_0 + t) 
\\ \geq~ &  2 \eta a_i (1-\lambda) w_{i,i}(t_0) \exp\left[ \eta t(a_i + a_j) (1-\lambda) \kappa^{-1} \right] - 2P\eta \gamma \alpha d \beta^2 \omega^2
\\ =~ &  w_{i,i}(t_0) \exp\left[ 2 \eta t a_i (1-\lambda)\kappa^{-1} \right]\left[2 \eta a_i (1-\lambda) - \frac{2P\eta \gamma\alpha d\beta^2\omega^2}{w_{i,i}(t_0) \exp\left[ 2 \eta t a_i (1-\lambda)\kappa^{-1} \right] }\right]
\end{align}
Since $\lambda < 1-K^{-1}$, and $w_{i,i}(t_0) \geq 2\beta \omega \geq \omega$, from \Cref{ass:bound-omega}, we have \begin{align} \frac{2P\eta \gamma\alpha d\beta^2\omega^2}{w_{i,i}(t_0) \exp\left[ 2\eta t  a_i (1-\lambda) \kappa^{-1}  \right] } & \leq 2P\eta \gamma \alpha d \beta^2 \omega 
\\ & \leq 2 K^{-1} \left(1 - \kappa^{-1/2}\right) \eta \alpha 
\\ & \leq 2 \left(1 - \kappa^{-1/2}\right) \eta a_i (1-\lambda).
\end{align}
Moreover, since \Cref{ass:bound-eta} ensured that $\eta \leq \frac{1}{2 K a_i (1-\lambda)} \leq \frac{1}{20 a_i (1-\lambda)}$, using the fact that if $\kappa > 1.1$ then $\kappa^{-1/2} x + 1  \geq e^{\kappa^{-1} x}$ for any $x < 0.1$, we can get \begin{align}
    w_{i,i}(t+1) & \geq w_{i,i}(t) + w_{i,i}(t)\left[ 
\kappa^{-1/2} 2 \eta a_i (1-\lambda) \right]
\\ & \geq w_{i,i}(t)\exp\left( 2 \eta a_i \kappa^{-1} (1-\lambda) \right)
\\ & \geq w_{i,i}(t_0) \exp\left[ 2\eta(t+1)\kappa^{-1} (1-\lambda) \right].
\end{align}

\end{proof}

Next, we provide an uniform upper bound (over time) of the diagonal entries. Remember that we mentioned in the gradient flow case, the diagonal term stops evolving when it reaches $1$. In the discrete case, since the step size is not infinitesimal, \Cref{lem:upper-bound-w} 
 shows that it can actually exceed $1$ a little bit but not too much since the step size is small.

\begin{lemma}[Upper Bounded Diagonal Entry]\label{lem:upper-bound-w}
    For any diagonal entry $(i,i)$ and any time $t$, $0 \leq w_{i,i}(t) \leq 1 + 2K^{-1}$.
\end{lemma}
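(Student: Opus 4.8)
The plan is to prove the bound $0 \leq w_{i,i}(t) \leq 1 + 2K^{-1}$ by induction on $t$, exploiting the structure of the update equation for a diagonal entry. The lower bound $w_{i,i}(t) \geq 0$ is immediate and already noted in the excerpt: since ${\boldsymbol{W}}(t) = {\boldsymbol{U}}(t){\boldsymbol{U}}(t)^\top$ is PSD at every step, its diagonal entries are non-negative. So the work is entirely in the upper bound. First I would write out the update for $w_{i,i}$ explicitly from \cref{eq:update-equation}: $w_{i,i}(t+1) - w_{i,i}(t) = \eta\left[2a_i w_{i,i}(1 - w_{i,i}) - N_{i,i}(t)\right]$, where the growth-minus-suppression part is $G_{i,i} - S_{i,i} = 2a_i w_{i,i}(1 - w_{i,i})$ (one checks the coefficient: $S_{i,i} = \frac12 w_{i,i}\cdot w_{i,i}\cdot 4a_i = 2a_i w_{i,i}^2$), and $|N_{i,i}(t)| \leq 2P\gamma\alpha d\beta^2\omega^2$ by the corollary following \Cref{assert:bound-noise}.

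The inductive step splits into cases according to where $w_{i,i}(t)$ sits. If $w_{i,i}(t) \leq 1$, then the drift term $2a_i w_{i,i}(1-w_{i,i})$ is non-negative and bounded above: it is at most $2a_i \cdot \frac14 = \frac{a_i}{2} \leq \frac{\gamma\alpha}{2}$ at the vertex, but more usefully, for $w_{i,i} \le 1$ the increment $\eta\cdot 2a_i w_{i,i}(1-w_{i,i})$ is at most $\eta\cdot 2a_i(1-w_{i,i}) \le 2\eta\gamma\alpha$; combined with the noise term this keeps $w_{i,i}(t+1) \leq 1 + 2\eta\gamma\alpha + (\text{tiny}) \leq 1 + 2K^{-1}$ using \Cref{ass:bound-eta} ($\eta \leq \frac{1}{9K\gamma\alpha}$, so $2\eta\gamma\alpha \le \frac{2}{9K}$) and the smallness of $\omega$ from \Cref{ass:bound-omega} to absorb the noise contribution. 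If instead $1 < w_{i,i}(t) \leq 1 + 2K^{-1}$, the drift term $2a_i w_{i,i}(1-w_{i,i})$ is now \emph{negative}, so it pulls $w_{i,i}$ back down; I must check that the step does not overshoot below $0$ (it cannot, by the PSD argument, but one can also verify the magnitude $\eta\cdot 2a_i w_{i,i}(w_{i,i}-1) \le \eta \cdot 2\gamma\alpha(1+2K^{-1})\cdot 2K^{-1}$ is small compared to $1$) and, crucially, that the noise term $\eta|N_{i,i}|$ is too small to push $w_{i,i}(t+1)$ back above $1 + 2K^{-1}$ — here again \Cref{ass:bound-omega} (which makes $\omega d$ small relative to $K^{-1}$) does the job, since we need $\eta\cdot 2P\gamma\alpha d\beta^2\omega^2 \ll 2K^{-1}$, and in fact $2P\gamma\alpha d\beta^2\omega^2 \cdot \eta \le \frac{2P\gamma d\beta^2\omega^2}{9K}$ which is far smaller than $K^{-1}$ given $\omega$ is tiny.

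The main obstacle, I expect, is \emph{bookkeeping the constants carefully enough} rather than any conceptual difficulty: one must verify that across the boundary $w_{i,i} = 1$ the combined effect of the (possibly sign-changing) drift term and the always-present noise term never lets the value escape the interval $[0, 1+2K^{-1}]$, and this requires juggling \Cref{ass:bound-eta}, \Cref{ass:bound-omega}, and the dimension factor $d$ in the noise bound simultaneously. A clean way to organize it is to show the stronger one-step invariant: if $w_{i,i}(t) \in [0, 1+2K^{-1}]$ then $w_{i,i}(t+1) \in [0, 1+2K^{-1}]$, by bounding $w_{i,i}(t+1) \le \max\{w_{i,i}(t),\, 1\} + \eta\cdot 2a_i \cdot \frac14\mathbbm 1_{\{w_{i,i}(t)\le 1\}}\cdot(\text{something} \le 2) + \eta|N_{i,i}(t)|$ — actually the cleanest bound is $w_{i,i}(t+1) \le w_{i,i}(t) + \eta \cdot 2a_i(1-w_{i,i}(t))_+ + \eta|N_{i,i}(t)| \le 1 + 2\eta\gamma\alpha + \eta|N_{i,i}(t)| \le 1 + 2K^{-1}$ whenever $w_{i,i}(t)\le 1$, and $w_{i,i}(t+1) < w_{i,i}(t) + \eta|N_{i,i}(t)| \le 1 + 2K^{-1}$ (since drift is negative) whenever $w_{i,i}(t) > 1$ — and invoking \Cref{ass:bound-omega} once at the end to guarantee $\eta|N_{i,i}(t)|$ is negligible. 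This, together with the base case $w_{i,i}(0) = \pm\omega$ or rather $w_{i,i}(0) \ge \omega > 0$ and $w_{i,i}(0)\le\beta\omega \le 1$ (from \Cref{ass:bound-omega}, $\omega \le \frac{1}{\sqrt{2\beta}}$ gives $\beta\omega \le \sqrt{\beta/2} $, which for the relevant regime is $\le 1$), completes the induction.
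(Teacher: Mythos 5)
Your overall strategy is the same as the paper's: the lower bound comes for free from positive semi-definiteness of ${\boldsymbol{W}}(t)={\boldsymbol{U}}(t){\boldsymbol{U}}(t)^\top$, and the upper bound is an induction in which the drift $G_{i,i}-S_{i,i}=2a_iw_{i,i}(1-w_{i,i})$ is played against the uniform noise bound $|N_{i,i}|\leq 2P\gamma\alpha d\beta^2\omega^2$, with a case split on the size of $w_{i,i}(t)$ and \Cref{ass:bound-eta}, \Cref{ass:bound-omega} supplying the constants. However, your final ``clean'' invariant has a genuine hole in the upper case. You place the case boundary at $w_{i,i}(t)=1$ and argue that for $w_{i,i}(t)>1$ the drift is negative, hence $w_{i,i}(t+1)<w_{i,i}(t)+\eta|N_{i,i}(t)|\leq 1+2K^{-1}$. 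The second inequality does not follow from the inductive hypothesis: if $w_{i,i}(t)$ is at (or within $\eta|N_{i,i}|$ of) the ceiling $1+2K^{-1}$, then $w_{i,i}(t)+\eta|N_{i,i}(t)|$ exceeds $1+2K^{-1}$, and if instead $w_{i,i}(t)=1+\delta$ with $\delta$ tiny, the negative drift $-2a_iw_{i,i}\delta$ is too weak to cancel the noise. One of the two mechanisms (negative drift dominating noise, or small total step size) must be invoked in each subcase, and neither works uniformly on all of $(1,\,1+2K^{-1}]$.

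The paper closes this by splitting at $1+K^{-1}$ rather than at $1$: when $w_{i,i}(t)\geq 1+K^{-1}$ the drift satisfies $G_{i,i}-S_{i,i}\leq -2a_iK^{-1}$, which strictly dominates the noise bound $|N_{i,i}|\leq K^{-1}a_i$, so the entry actually decreases and in particular stays below $1+2K^{-1}$; when $w_{i,i}(t)\leq 1+K^{-1}$, one instead bounds the entire one-step increment by $\eta\left(|G_{i,i}|+|S_{i,i}|+|N_{i,i}|\right)\leq \eta\cdot 9\gamma\alpha\leq K^{-1}$ using \Cref{ass:bound-eta}, so the entry cannot climb past $1+2K^{-1}$ in a single step. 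Your argument for the regime $w_{i,i}(t)\leq 1$ is fine (and slightly sharper than needed), but you should replace the $w_{i,i}(t)>1$ case with a split at some intermediate threshold of this kind; as written, the induction does not close. A minor additional point: your base-case bound $\beta\omega\leq\sqrt{\beta/2}\leq 1$ requires $\beta\leq 2$, which is not assumed; the initialization is small enough via \Cref{ass:bound-beta} (which gives $P\beta\omega\leq 0.4$) rather than via the $\omega\leq 1/\sqrt{2\beta}$ clause.
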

\begin{proof}
    First notice that since ${\boldsymbol{W}}(t)$ is PSD, its diagonal entry $w_{i,i}(t)$ should always be non-negative, thus $w_{i,i}(t) \geq 0$ is always satisfied. In the following we prove $w_{i,i}(t) \leq 1 + 2K^{-1}$.

    We use induction to prove this claim. The inductive hypothesis is the claim it self. It is obviously satisfied at initialization. In the following we assume the claim is satisfied at timepoint $t$ and prove it for timepoint $t + 1$. Notice that since $K \leq 10$, we have $1 + K^{-1} \leq 1 + 2 K^{-1} \leq 2$. 

    Notice that by \Cref{assert:bound-noise} and \Cref{ass:bound-omega}, \begin{align}
    |N_{i,i}(t)| \leq 2P \gamma \alpha d \beta^2 \omega^2 \leq \frac{(\kappa - 1)^2}{K^2 \gamma d \beta^2} \alpha \leq K^{-1} a_i.
    \end{align}
    
    If $w_{i,i}(t) \geq 1 + K^{-1}$, we have  \begin{align}
        G_{i,i}(t) - S_{i,i}(t) & = 2a_i w_{i,i} (1 - w_{i,i}) \leq  -2 a_i K^{-1}.
    \end{align}
    Therefore, \begin{align}
        w_{i,i}(t+1) & = w_{i,i}(t) + \eta\left[ 
G_{i,i}(t) - S_{i,i}(t) - N_{i,i}(t) \right]
\\ & \leq w_{i,i}(t) - a_i K^{-1} \eta
\\ & \leq w_{i,i}(t) 
\\ & \leq 1 + 2 K^{-1}.
    \end{align}

    Moreover, since $w_{i,i}(t) \leq 1 + 2K^{-1} \leq 2$, we have \begin{align}
        |G_{i,i}(t)| + |S_{i,i}(t)| + |N_{i,i}(t)| & \leq 4 a_i + 4a_i + K^{-1} a_i \leq 9 \gamma \alpha \leq \frac{1}{K\eta}. \label{eq:bound-gsn-1}
    \end{align}
    When $w_{i,i}(t) \leq 1 + K^{-1}$, using \cref{eq:bound-gsn-1}, we have \begin{align}
        w_{i,i}(t + 1) \leq w_{i,i}(t) + \eta \left(|G_{i,i}(t)| + |S_{i,i}(t)| + |N_{i,i}(t)|\right) & \leq 1 + 2K^{-1}.
    \end{align}

    The above results together shows that $w_{i,i}(t+1) \leq 1 + 2K^{-1}$.
            
\end{proof}

\begin{corollary}[Upper Bounded Diagonal Update]\label{cor:bound-w-update}
    For any diagonal entry $(i,i)$ and any time $t$, $ | w_{i,i}(t + 1) - w_{i,i}(t) | \leq K^{-1}$.
\end{corollary}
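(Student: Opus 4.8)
The plan is to obtain this immediately from \Cref{lem:upper-bound-w} together with the intermediate bound \eqref{eq:bound-gsn-1} that was already established inside the proof of that lemma. First I would recall that the one-step update of a diagonal entry is
\begin{align}
w_{i,i}(t+1) - w_{i,i}(t) = \eta\left[G_{i,i}(t) - S_{i,i}(t) - N_{i,i}(t)\right],
\end{align}
so by the triangle inequality $|w_{i,i}(t+1) - w_{i,i}(t)| \leq \eta\left(|G_{i,i}(t)| + |S_{i,i}(t)| + |N_{i,i}(t)|\right)$.

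Next I would invoke \Cref{lem:upper-bound-w}, which gives $0 \leq w_{i,i}(t) \leq 1 + 2K^{-1}$ for every $t$; since $K \geq 20$ this in particular yields $w_{i,i}(t) \leq 2$. This is exactly the regime under which \eqref{eq:bound-gsn-1} was derived, namely $|G_{i,i}(t)| + |S_{i,i}(t)| + |N_{i,i}(t)| \leq 9\gamma\alpha \leq \frac{1}{K\eta}$, where the last inequality uses \Cref{ass:bound-eta}. Plugging this into the triangle-inequality bound above gives
\begin{align}
|w_{i,i}(t+1) - w_{i,i}(t)| \leq \eta \cdot \frac{1}{K\eta} = K^{-1},
\end{align}
which is the claim.

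There is no genuine obstacle here: the corollary is a bookkeeping consequence of estimates already in hand. The only subtlety is that the bound $|G_{i,i}| + |S_{i,i}| + |N_{i,i}| \leq 9\gamma\alpha$ was obtained in the proof of \Cref{lem:upper-bound-w} only under the assumption $w_{i,i}(t) \leq 2$, so one must first apply \Cref{lem:upper-bound-w} to know this holds uniformly in $t$ before using the step-size estimate; once that ordering is respected, the argument is a two-line computation.
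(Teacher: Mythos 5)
Your proof is correct and is exactly the route the paper takes: the paper itself notes that the corollary was already established inside the proof of \Cref{lem:upper-bound-w}, via the bound \eqref{eq:bound-gsn-1} combined with \Cref{ass:bound-eta}, and you correctly observe that \Cref{lem:upper-bound-w} must be invoked first so that the uniform bound on $w_{i,i}(t)$ justifies \eqref{eq:bound-gsn-1} at every step. Nothing further is needed.
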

\Cref{cor:bound-w-update} is a direct consequence of \Cref{lem:upper-bound-w} (and we actually proved \Cref{cor:bound-w-update} in the proof of \Cref{lem:upper-bound-w}).

The next lemma lower bounds the final value of diagonal entries. Together with \Cref{lem:upper-bound-w} we show that in the terminal stage of training the diagonal entries oscillate around $1$ by the amplitude not exceeding $2 K^{-1}$.
\begin{lemma}\label{lem:bound-w-at-terminal}
    Consider a diagonal entry $(i,i)$. If at time $t_0$ we have $w_{i,i}(t_0) \geq 1 - 2K^{-1}$, then for all $t' \geq t_0$ we have $w_{i,i}(t') \geq 1 - 2K^{-1}$.
\end{lemma}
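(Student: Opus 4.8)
The plan is a one-step induction on $t' \ge t_0$ with inductive hypothesis $w_{i,i}(t') \ge 1 - 2K^{-1}$. The base case $t' = t_0$ is the assumption. For the inductive step, assuming $w_{i,i}(t') \ge 1 - 2K^{-1}$, I split into two regimes according to how far $w_{i,i}(t')$ sits above the threshold $1 - 2K^{-1}$.

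\emph{Regime 1: $w_{i,i}(t') \ge 1 - K^{-1}$.} Here I simply invoke \Cref{cor:bound-w-update}, which gives $|w_{i,i}(t'+1) - w_{i,i}(t')| \le K^{-1}$, so $w_{i,i}(t'+1) \ge (1 - K^{-1}) - K^{-1} = 1 - 2K^{-1}$.

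\emph{Regime 2: $1 - 2K^{-1} \le w_{i,i}(t') < 1 - K^{-1}$.} Here I show the update is (strictly) positive, so that $w_{i,i}(t'+1) \ge w_{i,i}(t') \ge 1 - 2K^{-1}$. Using the decomposition $\dot w_{i,i} = G_{i,i} - S_{i,i} - N_{i,i}$ with $G_{i,i}(t') - S_{i,i}(t') = 2 a_i w_{i,i}(t')\big(1 - w_{i,i}(t')\big)$, and the facts that $w_{i,i}(t') \ge 1 - 2K^{-1} \ge 0.9$ (since $K \ge 20$) and $1 - w_{i,i}(t') > K^{-1}$, I obtain $G_{i,i}(t') - S_{i,i}(t') > 1.8\, a_i K^{-1}$. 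Combining this with the noise bound $|N_{i,i}(t')| \le 2P\gamma\alpha d\beta^2\omega^2 \le K^{-1} a_i$, which was established inside the proof of \Cref{lem:upper-bound-w} using \Cref{assert:bound-noise} and \Cref{ass:bound-omega}, gives $G_{i,i}(t') - S_{i,i}(t') - N_{i,i}(t') > 0.8\, a_i K^{-1} > 0$, hence $w_{i,i}(t'+1) > w_{i,i}(t')$. This closes the induction.

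The only point requiring genuine care — the ``hard part'' — is Regime 2, namely verifying that the positive drift $G_{i,i} - S_{i,i} = 2a_i w_{i,i}(1 - w_{i,i})$ strictly dominates the noise term $N_{i,i}$ on the whole band $[1 - 2K^{-1}, 1 - K^{-1})$; this is exactly where the constant $K \ge 20$ from \Cref{ass:bound-eta} and the smallness of $\omega$ from \Cref{ass:bound-omega} are consumed. Everything else is bookkeeping, and there is no concern about overshooting on the upper side since \Cref{lem:upper-bound-w} already bounds $w_{i,i}(t) \le 1 + 2K^{-1}$ uniformly (that bound is not even needed for the present lower-bound claim). As in the earlier lemmas, the sign/positivity issues are trivial because $\boldsymbol{W} = \boldsymbol{U}\boldsymbol{U}^\top$ is PSD, so $w_{i,i}(t) \ge 0$ throughout.
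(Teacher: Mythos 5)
Your proof is correct and follows essentially the same route as the paper's: induction with a case split at the threshold $1-K^{-1}$, handling the upper regime via \Cref{cor:bound-w-update} exactly as the paper does. The only difference is in the lower band, where the paper cites \Cref{lem:growth-after-initial} for monotonicity while you inline the drift computation $G_{i,i}-S_{i,i}=2a_iw_{i,i}(1-w_{i,i})>1.8\,a_iK^{-1}$ against the noise bound $|N_{i,i}|\le K^{-1}a_i$; this is just the content of that lemma specialized to one step, and your closed-half-interval case split also cleanly covers the boundary point $w_{i,i}(t')=1-K^{-1}$ that the paper's strict inequalities technically omit.
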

\begin{proof}
    We use an induction here. The inductive hypothesis is the claim itself. This obviously holds when $t' = t_0$. We assume $w_{i,i}(t') \geq 1 - 2K^{-1}$ at timepoint $t'$ and prove the claim for $t' + 1$. 
    
    If $w_{i,i}(t') < 1-K^{-1}$, then from \Cref{lem:growth-after-initial} we know \begin{align}w_{i,i}(t' + 1) \geq w_{i,i}(t') \geq 1-2K^{-1}.\end{align}

    If  $w_{i,i}(t') > 1-K^{-1}$, then from \Cref{cor:bound-w-update} we have \begin{align}
        w_{i,i}(t' + 1) \geq w_{i,i}(t') - K^{-1} \geq 1-2K^{-1}.
    \end{align}
\end{proof}

Now, we are ready to prove \Cref{assert:bound-noise} by considering the suppression. We first prove a lemma that upper bounds the absolute value of the minor entries after its corresponding major entry becomes significant.
\begin{lemma}[Suppression]\label{lem:suppression}
    Consider an off-diagonal entry $(i,j)$ where $i > j$. If there exists a time $t_0$ such that $w_{i,i}(t_0) > 0.8$, then for any $t' \geq t_0$ we have \begin{align}
        \left|w_{i,j}(t')\right| \leq \max\left\{ \left|w_{i,j}(t_0)\right| ,\omega \right\}.
    \end{align}
\end{lemma}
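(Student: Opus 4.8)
The plan is to analyze the update equation for the minor entry $w_{i,j}$ once the corresponding diagonal entry $w_{i,i}$ has grown past $0.8$, and show that from that point on the update dynamics contract $|w_{i,j}|$ toward (essentially) its current value or the initialization scale $\omega$, whichever is larger. The key observation is that in the decomposition $\dot w_{i,j} = G_{i,j} - S_{i,j} - N_{i,j}$, the growth term is $G_{i,j} = w_{i,j}(a_i + a_j)$ while the part of the suppression term coming from $w_{i,i}$ is $\tfrac12 w_{i,j} w_{i,i}(3a_i + a_j)$. When $w_{i,i} > 0.8$, this single piece of $S_{i,j}$ already dominates $G_{i,j}$: indeed $\tfrac12 (0.8)(3a_i + a_j) = 0.4(3a_i+a_j) = 1.2 a_i + 0.4 a_j > a_i + a_j$ (using $a_i > a_j$, so $1.2a_i > a_i$ and $0.4a_j < a_j$ — more carefully $1.2a_i + 0.4 a_j \ge a_i + a_j \iff 0.2 a_i \ge 0.6 a_j \iff a_i \ge 3 a_j$, which is exactly guaranteed by \Cref{ass:bound-signal-diff}, the condition $a_i - 3a_j \ge C^{-1}\alpha > 0$). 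So the net effect of $G_{i,j} - S_{i,j}$ is to strictly decrease $|w_{i,j}|$, and in fact by a definite multiplicative factor $(1 - \eta c)$ for some $c > 0$ proportional to $\alpha$.

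First I would set up an induction on $t' \ge t_0$ with the inductive hypothesis being the claim itself, i.e. $|w_{i,j}(t')| \le \max\{|w_{i,j}(t_0)|, \omega\} =: M$. I also need to carry along, as part of the induction, the fact that $w_{i,i}(t')$ stays above $0.8$ for all $t' \ge t_0$; this should follow from \Cref{lem:bound-w-at-terminal} (since $0.8 > 1 - 2K^{-1}$ once $K \ge 20$ — check: $1 - 2/20 = 0.9 > 0.8$, so actually I may need a slightly more careful argument, using \Cref{lem:growth-after-initial} to push $w_{i,i}$ up past $0.9$ quickly and then \Cref{lem:bound-w-at-terminal} to keep it there; alternatively argue directly that once $w_{i,i} > 0.8$ the growth-minus-suppression term $2a_i w_{i,i}(1-w_{i,i})$ keeps it from dropping below $0.8$ modulo the $O(\eta)$ step-size wobble and the noise term, which is $\Theta(\omega^2)$ by the \Cref{assert:bound-noise} corollary). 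For the inductive step, write
\begin{align}
w_{i,j}(t'+1) = w_{i,j}(t') + \eta\left[G_{i,j}(t') - S_{i,j}(t') - N_{i,j}(t')\right],
\end{align}
split $S_{i,j}$ into the $w_{i,i}$-piece and the (nonnegative, same-sign-as-$w_{i,j}$) $w_{j,j}$-piece, bound the $w_{i,i}$-piece below using $w_{i,i}(t') > 0.8$, bound $|N_{i,j}(t')| \le 2P\gamma\alpha d\beta^2\omega^2$ by the corollary to \Cref{assert:bound-noise}, and conclude that $|w_{i,j}(t'+1)| \le (1 - \eta c)|w_{i,j}(t')| + \eta \cdot 2P\gamma\alpha d\beta^2\omega^2$ where $c \gtrsim \alpha$. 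A standard fixed-point argument on this affine recursion shows $|w_{i,j}|$ stays below $\max\{|w_{i,j}(t')|, \frac{2P\gamma\alpha d\beta^2\omega^2}{c}\}$, and \Cref{ass:bound-omega} ($\omega$ small enough, in particular $2P\gamma d\beta^2 \omega \cdot (\alpha/c) \le 1$) makes the second term $\le \omega$, closing the induction.

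The main obstacle I expect is handling the two-sided interaction carefully near the boundary: (i) guaranteeing $w_{i,i}$ does not dip below $0.8$ — this needs the combination of \Cref{lem:growth-after-initial}, \Cref{lem:bound-w-at-terminal} and \Cref{cor:bound-w-update} to control the $O(K^{-1})$ discrete oscillations, and possibly restating the threshold ($0.8$) so it sits comfortably inside the $[1-2K^{-1}, 1+2K^{-1}]$ terminal band; and (ii) making sure the contraction rate $c$ beats the step size, i.e. that $\eta c < 1$ so the recursion is genuinely contractive rather than oscillatory — this is where \Cref{ass:bound-eta} ($\eta \le \frac{1}{9K\gamma\alpha}$) is used, since $c = O(\gamma\alpha)$. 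The sign-tracking (the $w_{j,j}$-piece of $S_{i,j}$ has the same sign as $w_{i,j}$ because $w_{j,j} \ge 0$ by PSD-ness, hence only helps the contraction) is routine but must be stated. Everything else is the same exponential-bookkeeping style as the proofs of \Cref{lem:upper-bound-initial-growth} and \Cref{lem:lower-bound-inital-growth}.
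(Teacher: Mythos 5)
Your proposal is correct and follows essentially the same route as the paper's proof: the same induction with the claim as inductive hypothesis, the same use of \Cref{lem:growth-after-initial} and \Cref{lem:bound-w-at-terminal} to keep $w_{i,i}$ above $0.8$, the same key inequality that $0.4\,(3a_i+a_j)\geq a_i+a_j$ precisely because $a_i\geq 3a_j$ (from \Cref{ass:bound-signal-diff}), the same bound on the noise term via \Cref{assert:bound-noise} and \Cref{ass:bound-omega}, and the same step-size control from \Cref{ass:bound-eta} to rule out overshoot past $-\max\{|w_{i,j}(t_0)|,\omega\}$. Your packaging of the inductive step as an affine contraction with fixed point below $\omega$ is just a rephrasing of the paper's case split on whether $|w_{i,j}(t')|\geq\omega$.
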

\begin{proof}
    Since $K > 10$, from \Cref{lem:bound-w-at-terminal} and \Cref{lem:growth-after-initial} we know $w_{i,i}(t') >  0.8$ for all $t' \geq t_0$. 

    In this proof, we use an induction with the inductive hypothesis being the claim itself, i.e., we assume the claim is true at timepoint $t'$ and prove it for $t' + 1$. The claim obviously holds for $t' = t_0$. 
    
    Since in this proof we only use the absolute value of $N_{i,j}$, WLOG we may assume that $w_{i,j}(t') > 0$.

    If $w_{i,j}(t') < \omega$ then we have proved the claim. In the following we may assume $w_{i,j}(t') \geq \omega$.

    We have \begin{align}
    G_{i,j}(t') - S_{i,j}(t') & \leq w_{i,j}(t')(a_i + a_j) - \frac{1}{2} w_{i,j}(t')  w_{i,i}(3a_i + a_j)  
    \\ & \leq w_{i,j}(t')(a_i + a_j) -  w_{i,j}(t') \left[ 0.4 (3a_i + a_j)   \right]
    \\ & =  - \frac 15 w_{i,j}(t')a_i + \frac 35 w_{i,j}(t')a_j
    \\ & \overset{\text{(i)}}{\leq} - C^{-1}\omega \alpha,
    \end{align}
    where in (i) we use \Cref{ass:bound-signal-diff}.

    Thus we have \begin{align}
        G_{i,j}(t') - S_{i,j}(t') - N_{i,j}(t') & \leq G_{i,j}(t') - S_{i,j}(t') + |N_{i,j}(t')|
        \\ & \leq - C^{-1} \omega \alpha + 2P\gamma \alpha d \beta^2 \omega^2
        \\ & \overset{\text{(i)}}{<} 0,
    \end{align}
    where (i) is from \Cref{ass:bound-omega} and \Cref{ass:bound-signal-diff}. This confirms that $w_{i,j}(t' + 1) < w_{i,j}(t') \leq \max\left\{|w_{i,j}(t_0), \omega\right\}$.

    Next, we prove $w_{i,j}(t' + 1) \geq -\max\left\{|w_{i,j}(t_0)|, \omega\right\}$. Notice that \Cref{lem:upper-bound-w} stated that $|w_{i,i}| \leq 2$. Notice that we also have $w_{i,j}(t') \leq K^{-1}$, thus from \Cref{ass:bound-eta}, \begin{align}
        |G_{i,j}(t')| + |S_{i,j}(t')| + |N_{i,j}(t')| & \leq 10\gamma \alpha |w_{i,j}(t')| + 2P \gamma \alpha d\beta^2 \omega^2
        \\ & \leq \frac{ 10|w_{i,j}(t')| + 2P d\beta^2 \omega^2 }{9K\eta}
        \\ & \leq \frac{ 10|w_{i,j}(t')| + 2 \omega }{9K\eta}
        \\ & \leq \frac{ |w_{i,j}(t')| + \omega  }{2\eta}.
    \end{align}
    We have \begin{align}
    w_{i,j}(t' + 1) & \geq w_{i,j}(t') - \eta(|G_{i,j}(t')| + |S_{i,j}(t')| + |N_{i,j}(t)'|)
    \\ & \geq - \eta(|G_{i,j}(t')| + |S_{i,j}(t')| + |N_{i,j}(t')|)
    \\ & \geq - \frac{1}{2} ( |w_{i,j}(t')| + \omega )
    \\ & \geq - \max\{ |w_{i,j}(t')| , \omega \}.
    \end{align}
\end{proof}

With all the lemmas proved above, we are now ready to prove \Cref{assert:bound-noise}.
\begin{lemma}[\Cref{assert:bound-noise}]\label{lem:bound-noise}
    For all $t \in \mathbb N$, if $i \neq j$, then the entry $(i,j)$ stays in the initial phase for all time.
\end{lemma}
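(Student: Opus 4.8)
The plan is to treat this statement as the keystone that ties together the whole web of induction in \Cref{sec:two-layer-theory-proofs}: each of \Cref{lem:upper-bound-initial-growth,lem:lower-bound-inital-growth,lem:lower-bound-inital-growth-diag,lem:growth-after-initial,lem:upper-bound-w,lem:bound-w-at-terminal,lem:suppression} was proved with \Cref{assert:bound-noise} as a standing hypothesis, which really means their conclusions are available up to any time $t$ as soon as one knows every off-diagonal entry stays in the initial phase up to time $t$. Accordingly I would run a single strong induction on $t$: assume $|w_{i,j}(t')|\le P\beta\omega$ for all $i\neq j$ and all $t'\le t$, and establish it for $t'=t+1$. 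The base case $t=0$ is immediate from \Cref{ass:constants} together with $P\ge 2$. In the inductive step we may freely invoke the time-$\le t$ portions of all the auxiliary lemmas, and in particular the noise bound $|N_{i,j}(t')|\le 2P\gamma\alpha d\beta^2\omega^2$ that the inductive hypothesis supplies via \Cref{ass:bound-omega}, so the machinery is self-consistent.

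Now fix an off-diagonal pair, WLOG $i>j$, so that by \Cref{ass:bound-signal-diff} the diagonal entry $w_{i,i}$ carries the dominant signal ($a_i>3a_j$) and is the one whose growth flips the sign of $\dot w_{i,j}$ (cf. \cref{eq:update-equation}). Let $t_0$ be the first time $w_{i,i}$ exceeds $0.8$; \Cref{lem:lower-bound-inital-growth-diag,lem:growth-after-initial} guarantee $t_0$ is finite and explicitly bounded. I would split $[0,t+1]$ into the pre-suppression regime $\{t'\le t_0\}$ and the post-suppression regime $\{t'\ge t_0\}$. On the first, $w_{i,i}$ is small and the update of $w_{i,j}$ is growth-dominated, so \Cref{lem:upper-bound-initial-growth} gives $|w_{i,j}(t')|\le |w_{i,j}(0)|\exp[\eta t'(a_i+a_j)\kappa]\le\beta\omega\exp[\eta t_0(a_i+a_j)\kappa]$. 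On the second, \Cref{lem:suppression} applies from $t_0$ onward (its proof also keeps $w_{i,i}$ above $0.8$ thereafter via \Cref{lem:growth-after-initial,lem:bound-w-at-terminal}) and yields $|w_{i,j}(t')|\le\max\{|w_{i,j}(t_0)|,\omega\}$, with $|w_{i,j}(t_0)|$ bounded again by \Cref{lem:upper-bound-initial-growth}. Hence in either regime it suffices to show $\eta t_0(a_i+a_j)\kappa\le\log P$.

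This last inequality is exactly where the assumptions are used at full strength. \Cref{lem:lower-bound-inital-growth-diag} brings $w_{i,i}$ from $w_{i,i}(0)\ge\omega$ up to $P\beta\omega$ in at most $\frac{\log(P\beta)}{2\eta a_i\kappa}$ steps; from there \Cref{lem:growth-after-initial} with $\lambda=0.8$ forces $w_{i,i}$ to grow at relative rate at least $0.4\,\eta a_i\kappa^{-1}$ while it stays below $0.8$, so it crosses $0.8$ within a further $O\!\left(\frac{\kappa\log(1/(P\beta\omega))}{\eta a_i}\right)$ steps. Therefore $\eta a_i t_0=O\!\left(\kappa\log\frac{1}{P\beta\omega}+\log(P\beta)\right)$, whence $\eta t_0(a_i+a_j)\kappa=\frac{a_i+a_j}{a_i}\cdot O\!\left(\kappa^2\log\frac{1}{P\beta\omega}+\kappa\log(P\beta)\right)$, and the bound $\frac{a_i+a_j}{2a_i}\le\frac{\log P}{10\kappa^2\log\frac{1}{P\beta\omega}+\log(P\beta)}$ of \Cref{ass:bound-signal-diff} is precisely calibrated (after absorbing the implicit constants using \Cref{ass:bound-eta,ass:bound-omega}) to make this $\le\log P$. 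This closes the induction.

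\textbf{Main obstacle.} I expect the hard part to be bookkeeping the self-referential structure cleanly: one must check that the grand induction invokes \Cref{lem:upper-bound-initial-growth}–\Cref{lem:suppression} only at times for which the inductive hypothesis has already supplied the initial-phase (hence noise) bound, and one must track the explicit constants so that the ``suppression onset time'' $t_0$ provably precedes the time at which the fastest-growing off-diagonal entry would exit the initial phase. That timing race is the single point where the argument is tight, and winning it is exactly the purpose of \Cref{ass:bound-signal-diff}. A minor additional point: when $i,j\le s$ the entry $(i,j)$ belongs to two groups, but one simply suppresses it with whichever of $w_{i,i},w_{j,j}$ reaches $0.8$ first, which by the ordering of the $a_k$ is $w_{i,i}$, so the argument above already covers this case.
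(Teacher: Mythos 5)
Your proposal is correct and follows essentially the same route as the paper: reduce via \Cref{lem:suppression} to exhibiting a time $t^*$ at which $w_{i,i}(t^*)\ge 0.8$ while $|w_{i,j}(t^*)|\le P\beta\omega$, bound $t^*$ by chaining \Cref{lem:lower-bound-inital-growth-diag} with \Cref{lem:growth-after-initial}, upper-bound the off-diagonal growth up to $t^*$ with \Cref{lem:upper-bound-initial-growth}, and close the timing race using \Cref{ass:bound-signal-diff}. Your explicit strong induction on $t$ is just the unpacked form of the paper's ``assertion'' device, and the remaining differences (e.g.\ $\lambda=0.8$ versus $0.85$) are constant bookkeeping.
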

\begin{proof}
    Notice that since ${\boldsymbol{W}}$ is symmetric, we only need to prove the claim for $i > j$. Moreover, From \Cref{lem:suppression}, we only need to prove that there exists a timepoint $t^*$, such that $w_{i,i}(t^*) \geq 0.8$, and $|w_{i,j}(t^*)| \leq P \beta \omega$.

    Let $t_0 = \frac{\log \frac{P \beta \omega}{w_{i,i}(0)}}{2 \eta a_i \kappa}$, by \Cref{lem:lower-bound-inital-growth-diag}, we have $w_{i,i}(t_0) \geq P\beta \omega$. By \Cref{lem:lower-bound-inital-growth-diag} and \Cref{lem:growth-after-initial}, we have for any $t \geq t_0$ such that $w_{i,i}(t) \leq \lambda$, where $\lambda = 0.85$, \begin{align}
        w_{i,i}(t) & \geq w_{i,i}(t_0) \exp\left[ 
 0.3 \eta (t - t_0) a_i \kappa^{-1}\right]
 \\ & \geq P\beta \omega \exp\left[ 
 0.3 \eta (t - t_0) a_i \kappa^{-1}\right]
    \end{align}

    Let $t'$ be the first time that $w_{i,i}(t')$ arrives above $0.8$. 
    Let $t^* = \min\left\{\frac{\kappa \log \frac{0.8}{P \beta \omega}}{0.3 \eta a_i} + t_0, t'\right\} \geq t_0$. If $t^* = t'$, we have $w_{i,i}(t^*) \geq 0.8$. If $t^* = \frac{\kappa \log \frac{0.8}{P \beta \omega}}{0.3 \eta a_i} + t_0$, we have \begin{align}
        w_{i,i}(t^*) & \geq w_{i,i}(0)\exp\left( 0.3 \eta t^* a_i \kappa^{-1} \right)
        \\ & \geq P \beta \omega  \exp\left( \log \frac {0.8} {P \beta \omega} \right)
        \\ & = 0.8.
    \end{align}

    Moreover, from \Cref{lem:upper-bound-initial-growth} and \Cref{ass:bound-signal-diff}, we have \begin{align}
        |w_{i,j}(t^*)| & \leq |w_{i,j}(0)| \exp\left[ \eta t^* \kappa (a_i + a_j)\right]
        \\ & \leq \beta \omega \exp\left[ \left(\frac{\kappa^2 \log \frac{0.8}{P \beta \omega}} {0.15 } + \log \frac{P\beta \omega}{w_{i,i}(0)} \right) \times \frac{a_i + a_j}{2a_i} \right]
        \\ & \leq \beta \omega \exp\left[ \left( 10 \kappa^2 \log \frac{1}{P \beta \omega} + \log P\beta \right) \times \frac{a_i + a_j}{2a_i} \right]
        \\ & \leq \beta \omega \exp\left[ \log(P) \right]
        \\ & = P\omega \beta.
    \end{align}
    
    The claim is thus proved by combining the above bounds on $|w_{i,j}(t^*)|$ and $w_{i,i}(t^*)$ with \Cref{lem:suppression}.
\end{proof}

\section{Additional Discussions}\label{sec:additional-discussions}

In this section, we further discuss the findings and theoretical predictions presented in this paper.

\subsection{Multiple Descents}\label{sec:multiple-descents}

In \Cref{fig:theory-entries-loss}, we verified our theoretical predictions of the {\TransientMemo} through an experiment of an $s = 2$ example. However, in our theory, there can be multiple growth / suppression stages when $s > 2$, which should give us a multiple descent-like curve. We note here that based on the conditions given in \Cref{sec:assupmtions}, it is indeed possible to see multiple descent but only with a subtle choice of the signal strengths (${\boldsymbol{\mu}}$) and under specific initialization conditions. 

In \Cref{fig:triple,fig:multiple}, we illustrate two settings where the loss curves exhibit epoch-wise triple and quadruple descent. In both settings we use symmetric 2-layer linear model, same as the model used in \Cref{sec:two-layer-model-theory}. Note that we tuned initialization random seed to generate these results. Moreover, since the time scale of each descent vary, we use a log scale for the number of epochs to make the results more apparent.  

\begin{figure}[htp]
    \centering
    \begin{minipage}{0.49\linewidth}
    \includegraphics[width=\linewidth]{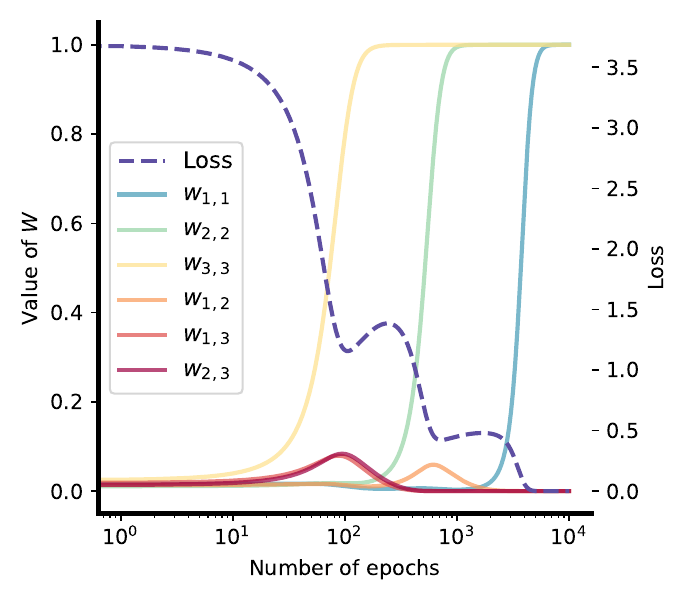}
    \caption{\textbf{An illustration of epoch-wise triple descent of the symmetric 2-layer linear model.} The dataset has dimensionality and number of informative directions $d = s = 3$, signal strength values ${\boldsymbol{\mu}} = (1.0,1.5,2.2)$, and noise values ${\boldsymbol{\sigma}} = (0.05,0.05,0.05)$.}
    \label{fig:triple}
    \end{minipage}
    \hfill
    \begin{minipage}{0.49\linewidth}
    \includegraphics[width=\linewidth]{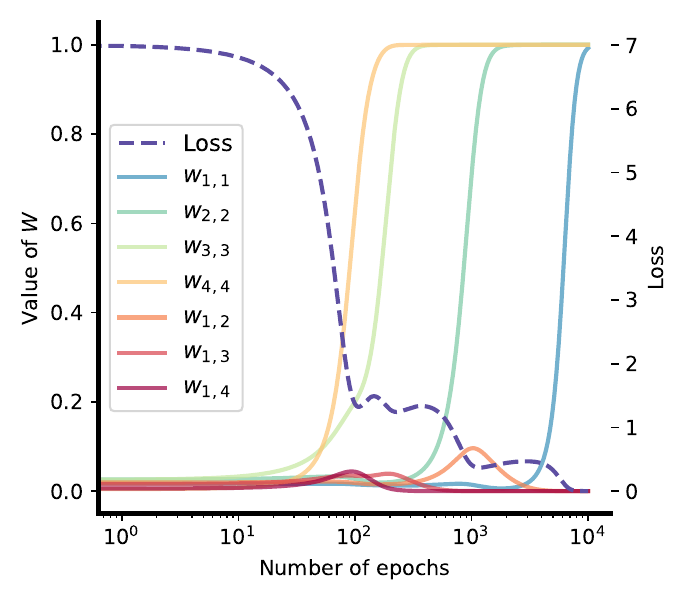}
    \caption{\textbf{An illustration of epoch-wise quadruple descent with the symmetric 2-layer linear model.} The dataset has dimensionality and number of informative directions $d = s = 4$, signal strength values ${\boldsymbol{\mu}} = (1.0,1.5,2.2,2.7)$, and noise values ${\boldsymbol{\sigma}} = (0.05,0.05,0.05,0.5)$.}
    \label{fig:multiple}
    \end{minipage}
\end{figure}

It is worth noting that in \Cref{fig:triple,fig:multiple}, each major entry starts to grow only after the corresponding minor entry is suppressed (for example in \Cref{fig:triple}, $w_{2,2}$ starts to grow after $w_{2,3}$ is suppressed, $w_{1,2}$ starts to decay after $w_{2,2}$ is close to $1$, and $w_{1,1}$ starts to grow after $w_{1,2}$ is suppressed), and each ascending / descending stage of loss curve aligns well with a stage of growth / suppression of the minor and major entries. These correspondence match exactly with our theoretical prediction and shows the correctness and preciseness of our theory.

\subsection{The Breakdown of the Initialization Assumption}\label{sec:breakdown-initialization}

In \Cref{sec:initial-right}, we mentioned that {\TransientMemo} seems to be more significant when the dimensionality of the dataset is low, and in \Cref{sec:explaining-model-behavior}, we attributed the reason of it to the fact that when the dimensionality of the dataset is small, it's easier to have more minor entries initialized positive, which lead to an illusion of learning in the minor entry growth stage, whose later suppression leads to the non-monotonic output trajectory behavior of {\TransientMemo}. 

In this section, we note that, another reason for the {\TransientMemo} to be less significant is that \Cref{ass:bound-omega} breaks down when the dimension is high, if we use standard Gaussian initialization to initialize the model weights. Specifically, in \Cref{ass:bound-omega}, we require that all entries of ${\boldsymbol{W}}$ are initialized around a relatively small value $\omega$, which indicates that there is no huge difference between the magnitude of the initialization of major entries and minor entries. 

However, notice that ${\boldsymbol{W}} = {\boldsymbol{U}}{\boldsymbol{U}}^\top$ and thus $w_{i,j} = \left<{\boldsymbol{u}}-i, {\boldsymbol{u}}_j\right>$, where ${\boldsymbol{u}}_i \in \mathbb R^{d'}$ is the $i$-th row of ${\boldsymbol{U}}$. If we use Gaussian distribution to initialize ${\boldsymbol{U}}$, i.e. ${\boldsymbol{u}}_i \sim \mathcal N\left({\boldsymbol{0}}, \tau^2 {\boldsymbol{I}}\right)$, where $\tau$ is a small real number, then we have the expectation of $w_{i,j}$ be \begin{align}
\mathbb E w_{i,j} = \begin{cases}0 & i \neq j \\ d' \tau^2 & i = j,\end{cases}
\end{align}  
which highlights the different between major entries and minor entries in initialization when $d'$ is large (and notice that $d'$ is lower bounded by $d$, which is the dataset dimensionality). Moreover, when $d'$ is small, the variance of $w_{i,j}$ will be large, so there is a greater chance for them to be away from $0$.

\subsection{Failure Modes}\label{sec:falure-modes}

A breakdown in the assumptions in \Cref{sec:assupmtions} can also lead to the model converging to ``wrong'' solutions that do not fully generalize OOD. For example, if a minor entry happens to be initialized too large (breaking the \Cref{ass:bound-omega}), and / or the corresponding signal strength distinction is not large enough (breaking the \Cref{ass:bound-signal-diff}), then it is possible that the minor entry is not suppressed until it grows to a significant value, which can, in turn, lead to a too strong suppression on the corresponding major entry. In this case, a major entry might be suppressed to $0$ (or at least, leave the initial phase from below) before it starts to grow, and thus never has chance to grow. This case corresponds to the model being ``trapped'' in a state that it only learns to compositionally generalize to a combination of certain (but not all) concepts.

In \Cref{fig:fail-mode}, we exhibit a case of such failure mode where the model fails to fully achieve OOD generalization. Notice how the loss value converges to a non-zero value and the major entry $w_{1,1}$ is suppressed at the very beginning and never grows. Additionally, the output trajectory is trapped at a point that combines only two directions, missing the third direction.

\begin{figure}
    \centering
    \begin{minipage}{0.49\linewidth}
    \includegraphics[width=\linewidth]{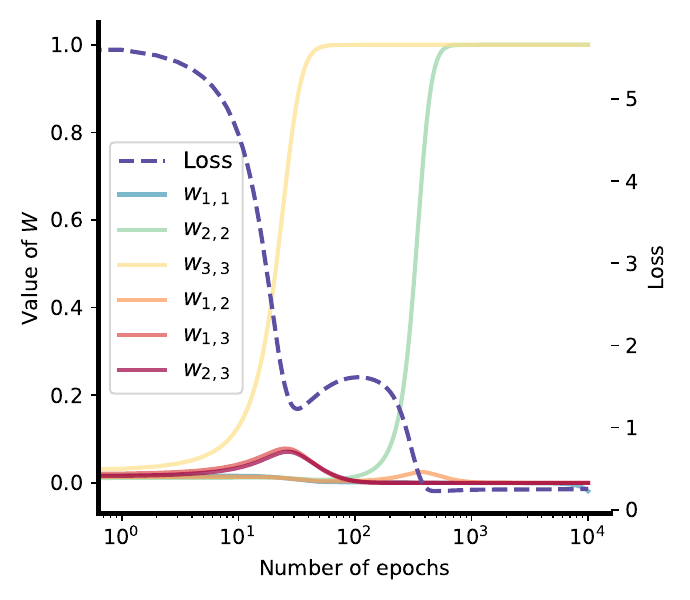}
    \end{minipage}
    \hfill
    \begin{minipage}{0.49\linewidth}
    \includegraphics[width=\linewidth]{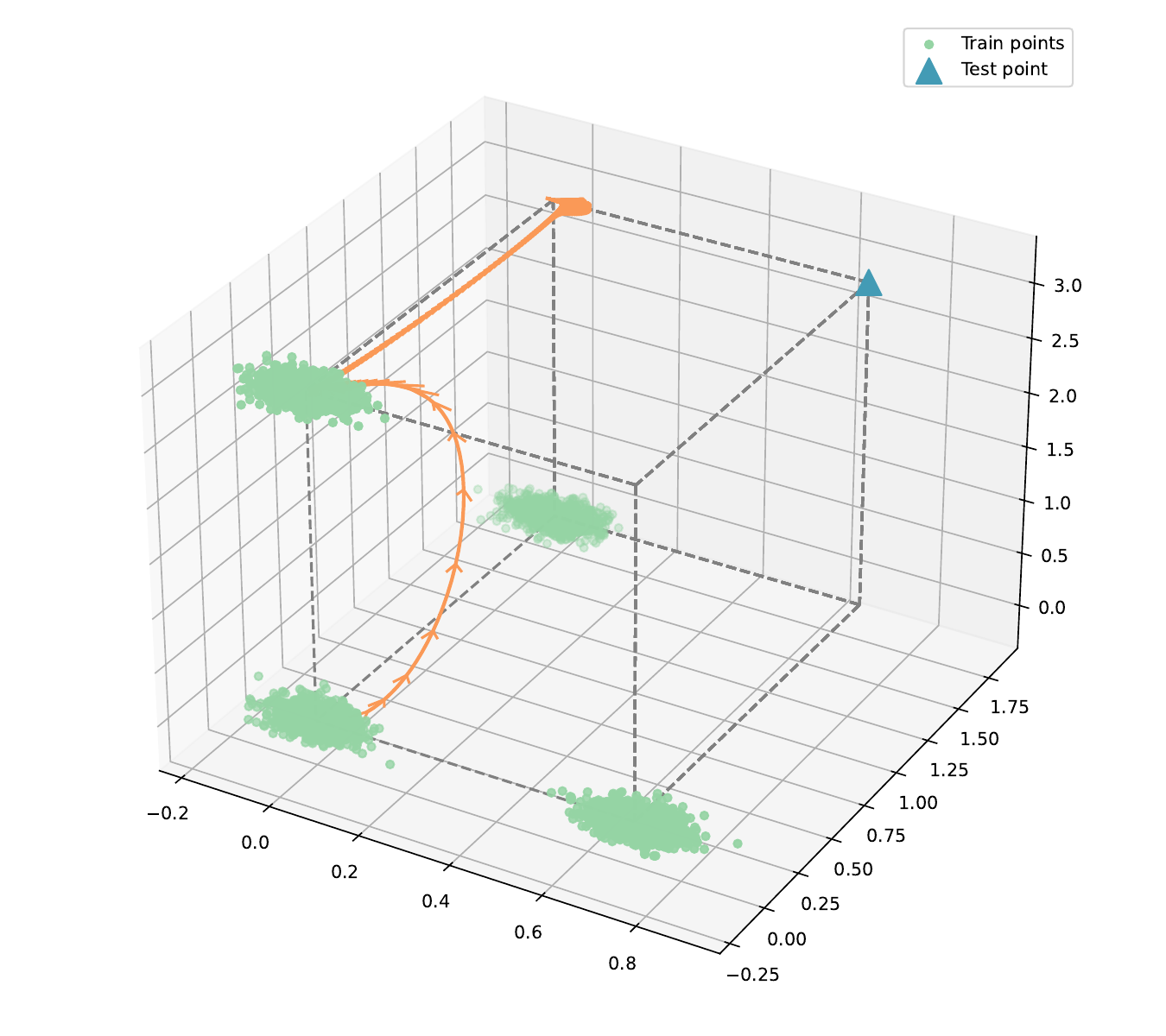}
    \end{minipage}
    \caption{\textbf{An illustration of a failure case where the model doesn't successfully generalize OOD.} The dataset has $d = s = 3$, ${\boldsymbol{\mu}} = (0.7,1.7,3)$ and ${\boldsymbol{\sigma}} = (0.05,0.05,0.05)$. Left: The evolution of values of Jacobian and the OOD loss evaluated at test point $\hat {\boldsymbol{x}}$; Right: The output trajectory (orange curve).   }
    \label{fig:fail-mode}
\end{figure}

\subsection{Future Directions}\label{sec:future-work}

We note that, current characterization of the model learning dynamics relies on the critical assumptions in \Cref{sec:assupmtions}. Although those assumptions are reasonable and common in practice, the model behavior still shows some regularity when those assumptions breakdown. We have discussed some of the possible consequences when one of those assumptions breakdown above, but more in an intuitive way, instead of a systematic way. Therefore, one important future direction is to systematically characterize what will happen beyond the assumptions given in \Cref{sec:assupmtions}. Among which, one specific and very important topic is the failure modes, i.e. under what conditions the model fails to generalize OOD. 

Another important direction is to generalize current analysis to more complex models, such as deep linear networks, two-layer ReLU networks, or models in the NTK regime. The key point of current analysis of the 2-layer symmetric model is to correctly slice the learning dynamics of each entry of the Jacobian into multiple stages, such that in each stage, the learning dynamics is dominated by a rather simple dynamics. 

Currently, since the model is 2-layer and without bias terms, there are only first-order and second-order terms in the learning dynamics. However, if we consider deeper models, there might be higher-order terms in the dynamics, and it is important to identify and simplify the effect these higher-order interactions in order to make the problem tractable.

For ReLU networks, it is known that there will be an ``early-alignment'' stage when trained on linear-separable data \citep{relu-1,relu-2}, where each neuron converge to a fixed direction, and make the model behaves like a linear model. We claim that investigating the early-alignment of 2-layer ReLU networks on the SIM task can be the starting point of theoretically characterizing the behavior of ReLU networks on the SIM task.

\section{Additional SIM Experiment Details and Results}\label{sec:additional-sim-experiments}

In this section, we present the results of SIM experiments under different settings, including linear and non-linear models. The consistent behavior observed across these settings confirms the universality of our findings and explanations.

\subsection{Experiment Details}

In all SIM experiments, including those presented in main paper and in appendix, the number of training samples in each Gaussian cluster is $5000$. We use MLP models with either linear activations or ReLU activations, and all the models are trained using stochastic gradient descent with a batch size of $128$ and a learning rate of $0.1$ for $40$ epochs. Unless otherwise specified, the dimensionality of all data points is $d = 64$, and the hidden layer dimensionality of the models is also $64$ by default.

It is important to note that in our theory, we assumed that all training clusters and the test point are aligned with the standard coordinate. However, in our experiments, in order to make the results more universal and general, we add a random rotation to all the train / test points.  

\subsection{Additional Experiment Results}

\Cref{fig:learning-order-2-layer-no-relu} and \Cref{fig:learning-order-2-layer-with-relu} repeat the learning order experiments described in \Cref{sec:generalization-order-controlled-by-signal-strength-and-diversity}, using a 2-layer model with and without ReLU activation, respectively. It is easy to see that despite showing more non-regular curves, in multi-layer models the overall trends described in \Cref{sec:generalization-order-controlled-by-signal-strength-and-diversity} and \Cref{sec:terminal-slow-down} are preserved.

\begin{figure}[htb]
    \centering
    \begin{minipage}{0.44\linewidth}
    \centering
        \includegraphics[width=\linewidth]{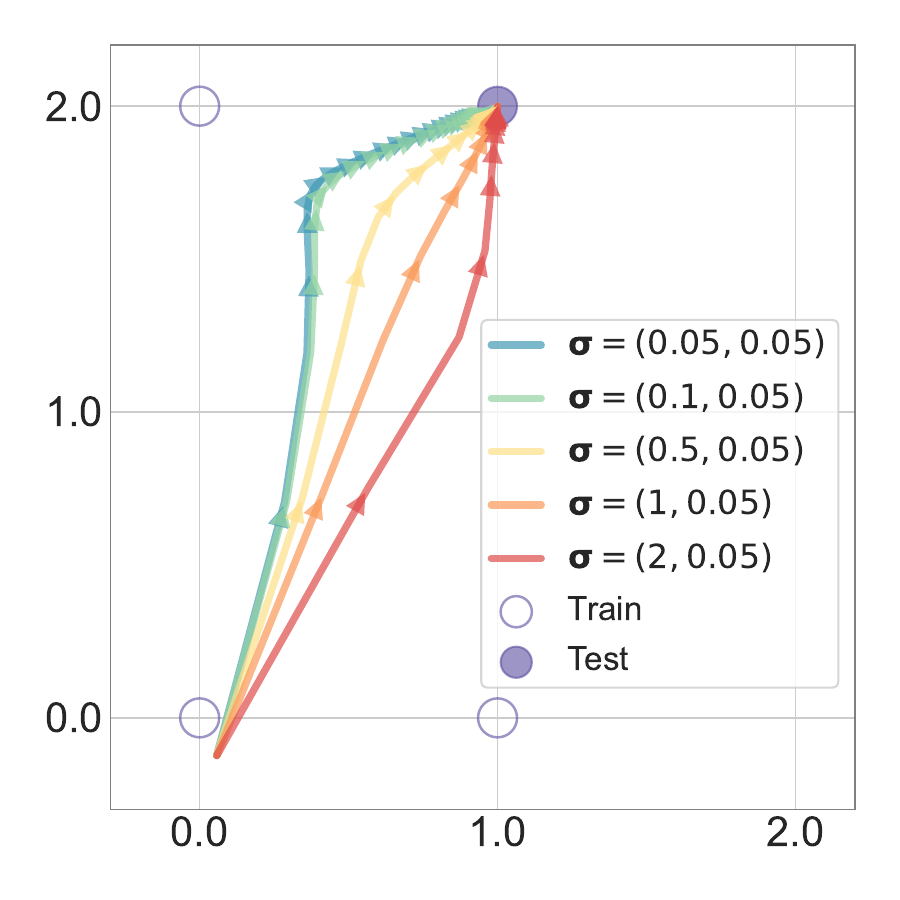}
    \end{minipage}
    \hfill
    \begin{minipage}{0.44\linewidth}
    \centering
        \includegraphics[width=1\linewidth]{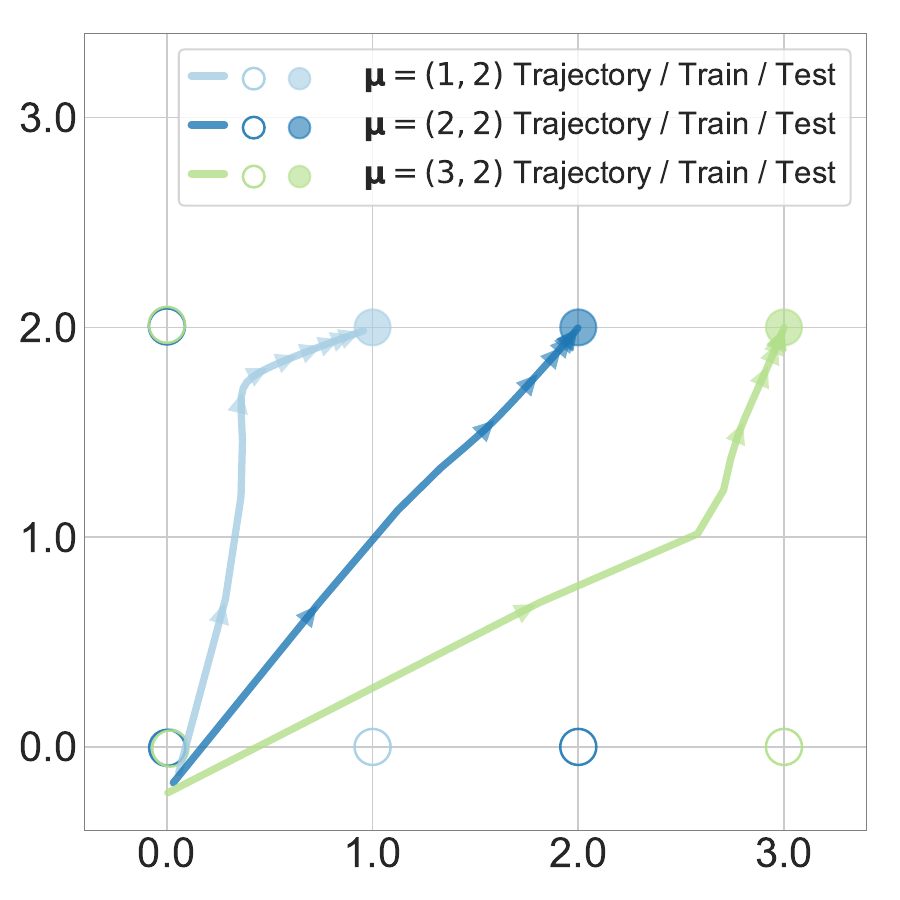}
    \end{minipage}
    \caption{\textbf{Output trajectory of 2-layer models with linear activations}. The number of informative directions in the dataset is $s = 2$. Left: ${\boldsymbol{\mu}}_{:2} = (1,2)$ with varied ${\boldsymbol{\sigma}}$'s; Right: ${\boldsymbol{\sigma}}_{:2} = (0.05,0.05)$ with varied ${\boldsymbol{\mu}}$'s.}
    \label{fig:learning-order-2-layer-no-relu}
\end{figure}

\begin{figure}[htb]
    \centering
    \begin{minipage}{0.44\linewidth}
    \centering
        \includegraphics[width=\linewidth]{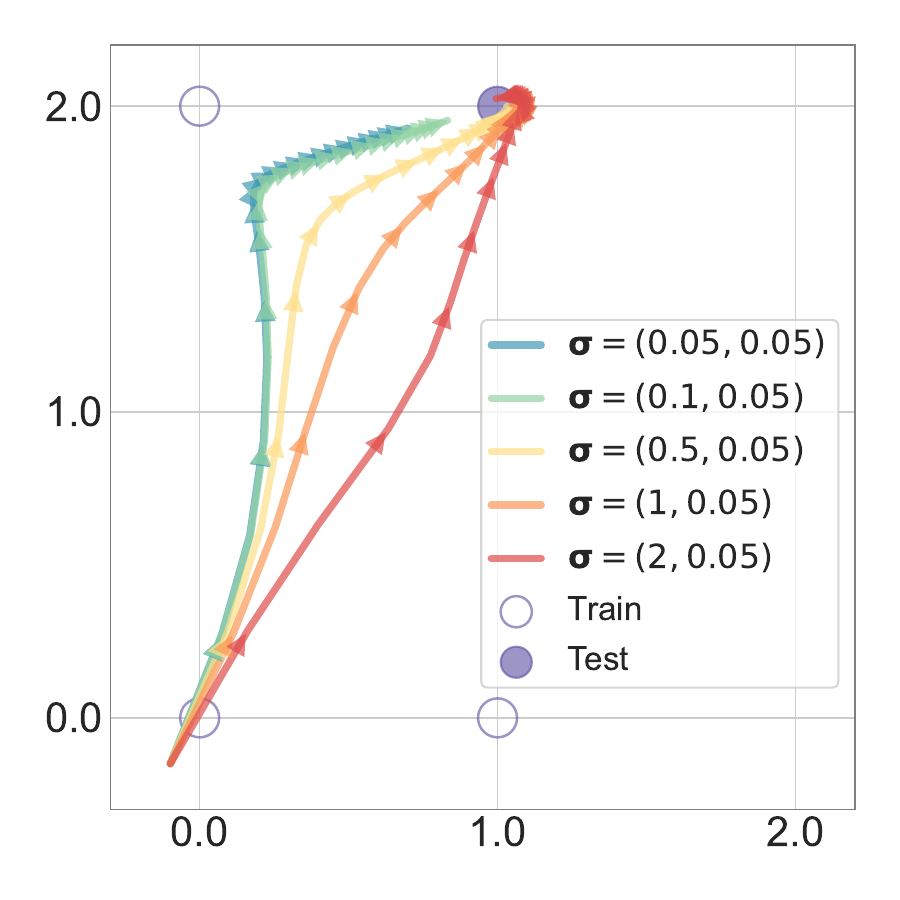}
    \end{minipage}
    \hfill
    \begin{minipage}{0.44\linewidth}
    \centering
        \includegraphics[width=1\linewidth]{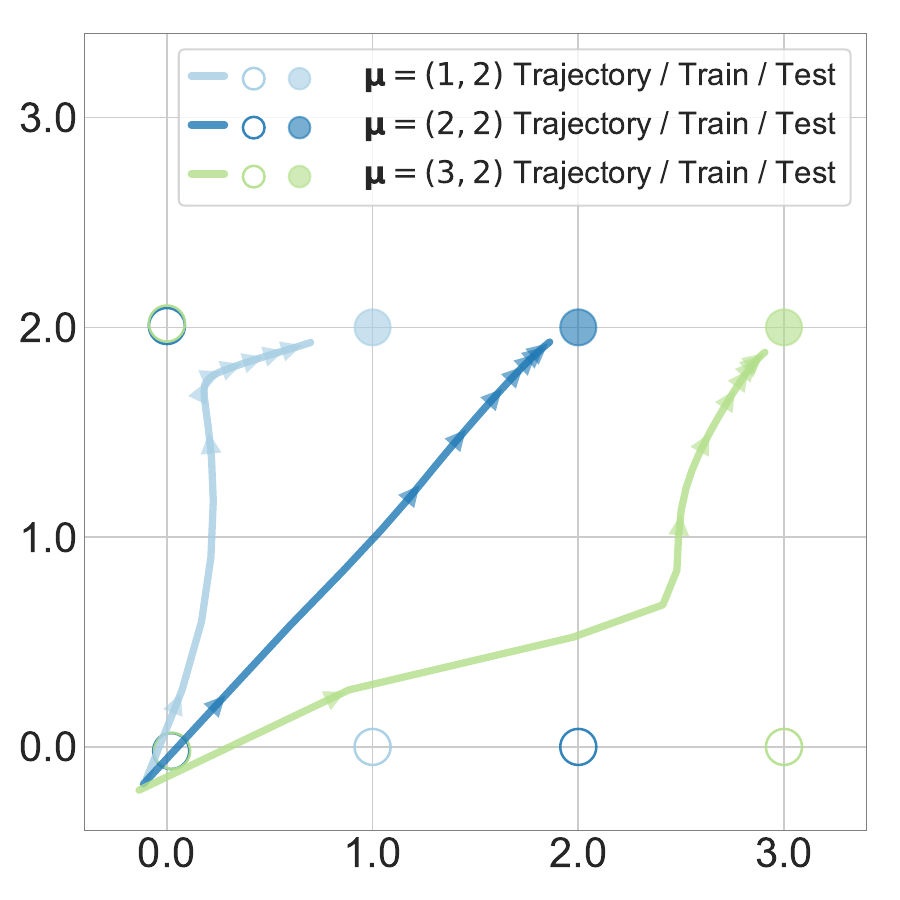}
    \end{minipage}
    \caption{\textbf{Output trajectory of 2-layer models with ReLU activations}. The number of informative directions in the dataset is $s = 2$. Left: ${\boldsymbol{\mu}}_{:2} = (1,2)$ with varied ${\boldsymbol{\sigma}}$ values; Right: ${\boldsymbol{\sigma}}_{:2} = (0.05,0.05)$ with varied ${\boldsymbol{\mu}}$ values.}
    \label{fig:learning-order-2-layer-with-relu}
\end{figure}

\begin{figure}[htb]
    \centering
    \begin{minipage}{0.44\linewidth}
    \centering
    \centering
        \includegraphics[width=\linewidth]{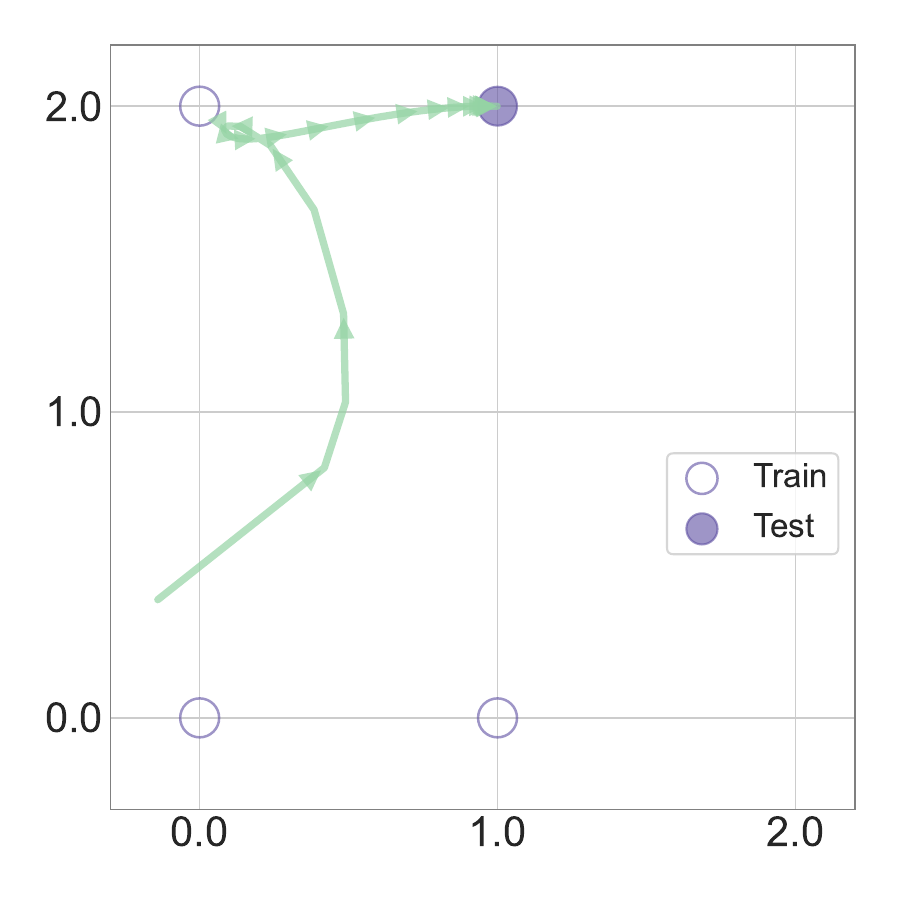}
    \end{minipage}
    \hfill
    \begin{minipage}{0.44\linewidth}
    \centering
    \centering
        \includegraphics[width=1\linewidth]{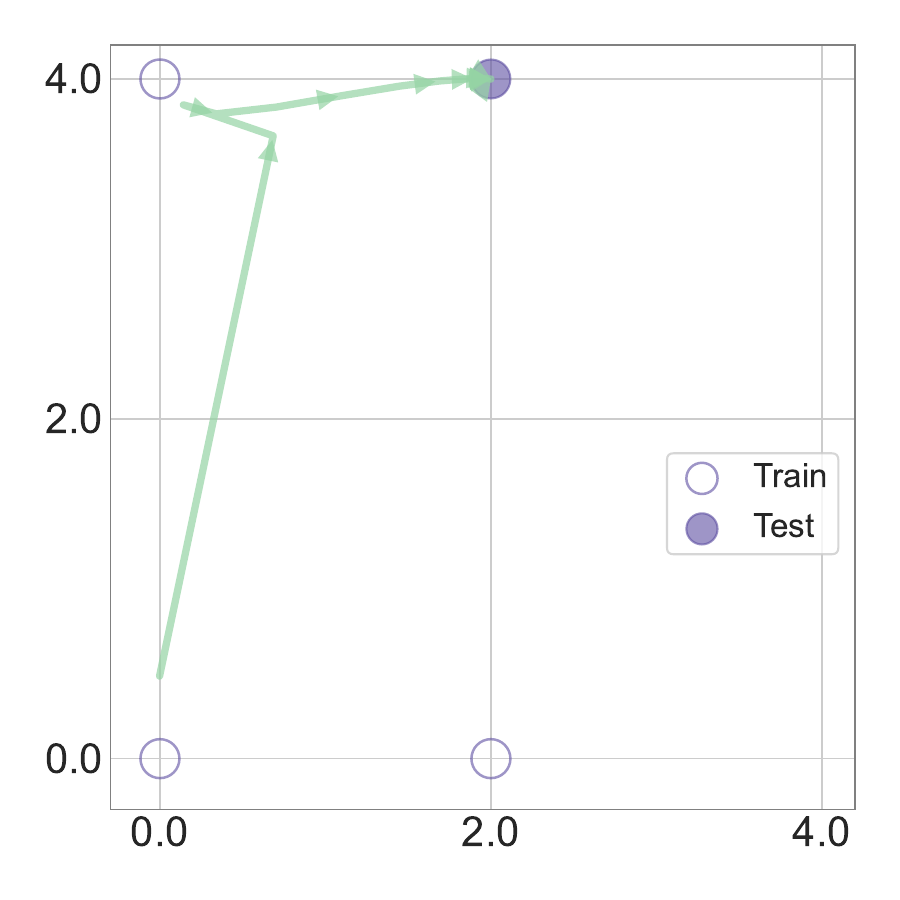}
    \end{minipage}
    \hfill
    \caption{\textbf{Output trajectory of 3-layer models with linear activations and $3$ hidden dimensions.} The dataset has dimensionality $d = 3$, number of informative directions $s = 2$ and variance ${\boldsymbol{\sigma}}_{:2} = (0.05,0.05)$. Left: ${\boldsymbol{\mu}}_{:2} = (1,2)$; Right: ${\boldsymbol{\mu}}_{:2} = (2,4)$.}
    \label{fig:initial-right-3-dim-output}
\end{figure}

\begin{figure}
    \centering
    \includegraphics[width=0.5\linewidth]{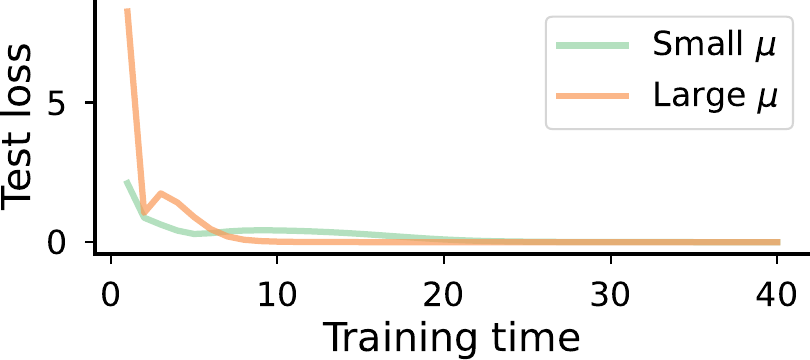}
    \caption{\textbf{The loss curve of corresponding models in \Cref{fig:initial-right-3-dim-output}.} Small $\mu$: ${\boldsymbol{\mu}}_{:2} = (1,2)$; Large $\mu$: ${\boldsymbol{\mu}}_{:2} = (2,4)$.}
    \label{fig:loss-3-dim-initial-right}
\end{figure}

In \Cref{fig:loss-3-dim-initial-right}, we present the output trajectory for two settings that exhibit  significant {\TransientMemoObj} and \Cref{fig:loss-3-dim-initial-right} the corresponding loss curve. Specifically, the dataset has a dimensionality of $d = 3$, and is not randomly rotated. The models used have $3$ layers, $3$ hidden dimensions and linear activations. Comparing these results with the curve presented in \Cref{fig:learning-order-2-layer-no-relu}, \Cref{fig:learning-order-2-layer-with-relu} and in \Cref{fig:dynamics}, it is evident that models with more layers and fewer input dimensions are easier to have {\TransientMemoObj}, which confirms our theoretical prediction in \Cref{sec:explaining-model-behavior}.

\subsection{Further Experiments on the Role of Out-of-Distribution}

In \Cref{sec:explaining-model-behavior}, we briefly discussed that {\TransientMemoObj} must be an OOD phenomenon: the test point should be sufficiently away from the training clusters. In this subsection, we verify that this condition is satisfied in our SIM experiments.

In \Cref{fig:rebuttal-paint-train}, we repeat the experiment of \Cref{fig:dynamics} (c), but in addition to the output curve, we also plot each training point. It is clear that there is no training point that is close to the test point. 

In order to further ensure the separation of the training clusters with the testing point, in \Cref{fig:rebuttal-truncate}, we again repeat the experiment of \Cref{fig:dynamics} (c), but we make a truncation of the training distribution: we force that no training point can lie within the distance of $\frac 12 \min_{p \in[s]} \mu_p$ of the testing point\footnote{This a repeatedly sampling training points and discarding those within the specified distance of the testing point, until the training set reaches the desired size.}. It is evident that only a few training samples are discarded and this truncation has little impact on the dynamics.

\begin{figure}
    \centering
    \begin{minipage}{0.45\linewidth}
    \includegraphics[width=\linewidth]{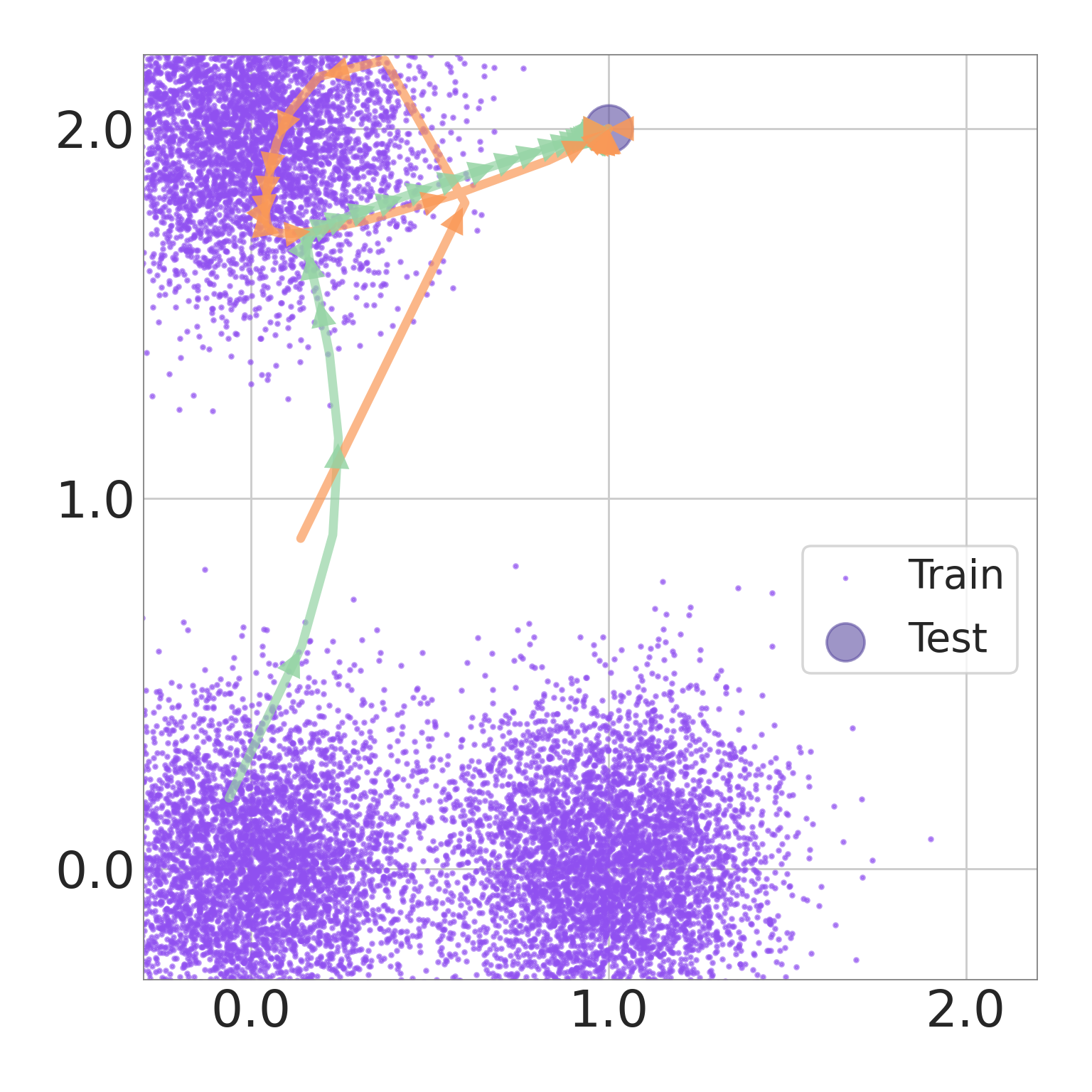}
    \caption{Reproducing \Cref{fig:dynamics} (c), with training points plotted.}
    \label{fig:rebuttal-paint-train}
    \end{minipage}
    \hfill
    \begin{minipage}{0.45\linewidth}
    \includegraphics[width=\linewidth]{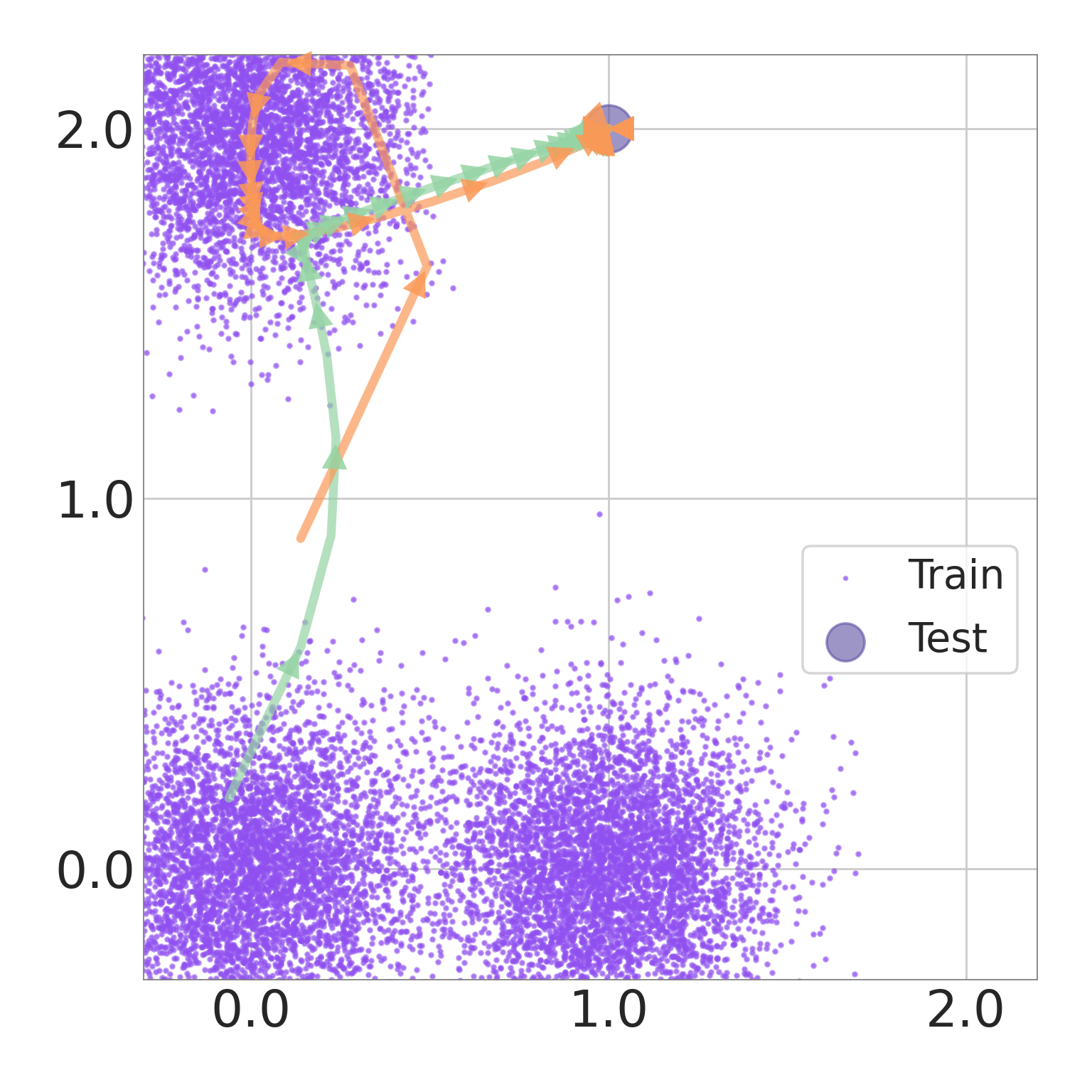}
    \caption{Reproducing \Cref{fig:dynamics} (c), with training distribution truncated.}
    \label{fig:rebuttal-truncate}
    \end{minipage}
\end{figure}

\subsection{Further Experiments on the Trajectory of Training Set}

In order to make it clearer to compare the model behavior on training  set and test point, here in \Cref{fig:rebuttal-train_loss} and \Cref{fig:rebuttal-train_traject} we repeat the setting of \Cref{fig:theory-entries-loss}, with additional information provided: 1) in \Cref{fig:rebuttal-train_loss}, we added the training loss curve ; 2) In \Cref{fig:rebuttal-train_traject} we added two trajectories, corresponding to the model output given the training cluster means as input. We omit the training cluster centered at the origin since the 2-layer linear model always output ${\boldsymbol{0}}$ for this point.

As shown in \Cref{fig:rebuttal-train_loss}, the training loss monotonically decreases, exhibiting a different behavior compared to the non-monotonic test loss curve. This is expected because we used a small learning rate, and it is well-known that, under such conditions, the training loss must decay monotonically. In \Cref{fig:rebuttal-train_traject}, the two curves of training trajectories both exhibit non-regular behaviors, but at different stages of the training. This observation aligns with our analysis of the multi-stage behavior of the learning.

\begin{figure}
    \centering
    \begin{minipage}{0.45\linewidth}
    \includegraphics[width=\linewidth]{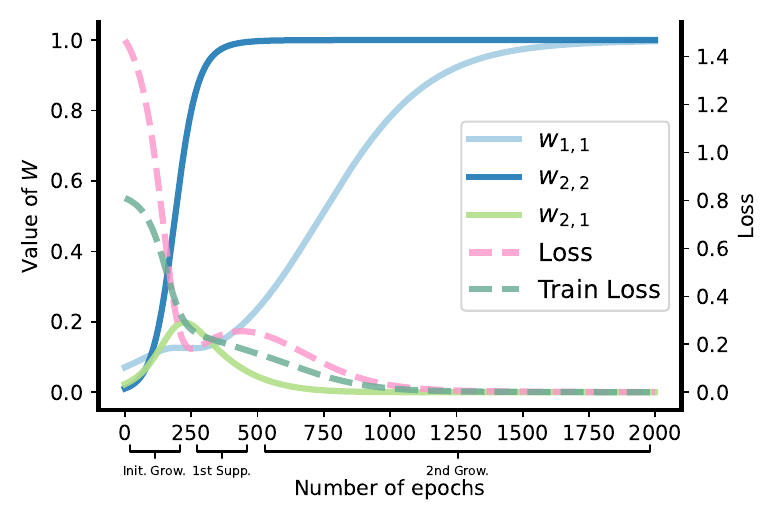}
    \caption{Reproducing \Cref{fig:theory-entries-loss} left, with training loss plotted.}
    \label{fig:rebuttal-train_loss}
    \end{minipage}
    \hfill
    \begin{minipage}{0.45\linewidth}
    \includegraphics[width=\linewidth]{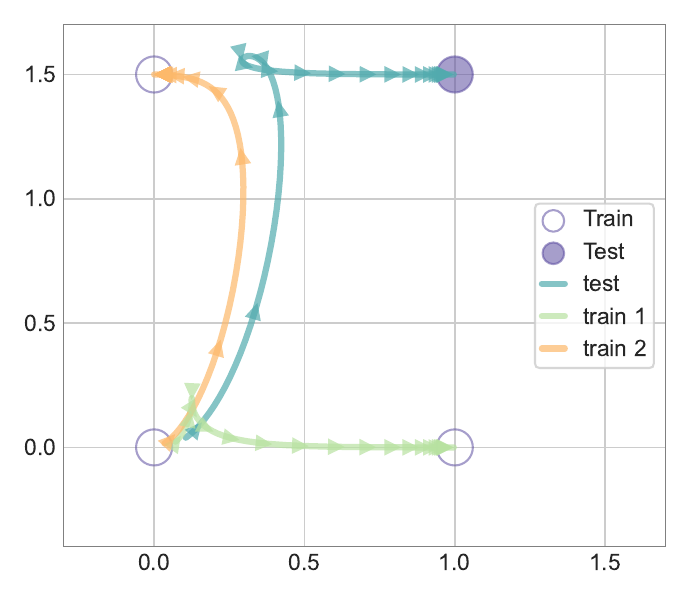}
    \caption{Reproducing \Cref{fig:theory-entries-loss} right, with the output trajectory of training set.}
    \label{fig:rebuttal-train_traject}
    \end{minipage}
\end{figure}

\section{Diffusion Model Experiments}\label{app:diffusion}
We describe experimental details for the diffusion model experiments. We largely follow \cite{park2024emergencehiddencapabilitiesexploring} in these experiments.

\subsection{Synthetic Data}

\begin{figure}[ht!]
    \centering
    \includegraphics[width=0.5\linewidth]{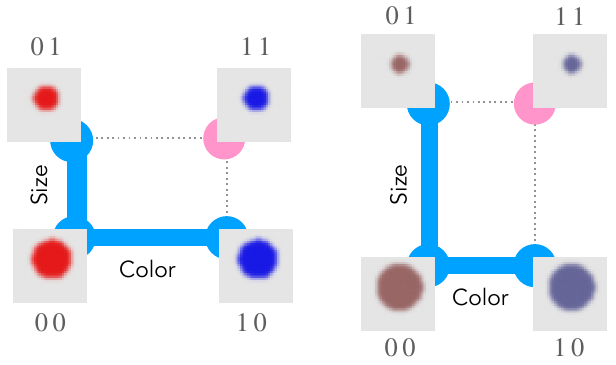}
    \caption{
  \textbf{Data Generation process with different concept signals.} Figure from \citep{park2024emergencehiddencapabilitiesexploring}. The data generating processed used to train the diffusion model, where we can control the strength of the two concept signals independently. Left: A data distribution with a stronger concept signal in the \texttt{color} dimension. Right: A data distribution with a stronger concept signal in \texttt{size}.
    }
    \label{fig:diffusion_data}
\end{figure}
\Cref{fig:diffusion_data} illustrates the DGP. We borrow part of the compositional data generating process (DGP) introduced by in \cite{park2024emergencehiddencapabilitiesexploring}. The DGP generates a set of images of circles based on the \emph{concept variables} \texttt{color}=$\{\text{\texttt{red}}, \text{\texttt{blue}}\}$ and \texttt{size}=$\{\text{\texttt{big}}, \text{\texttt{small}}\}$. Each concept variable can be selected as composed to yield four classes \texttt{00}, \texttt{01}, \texttt{10}, \texttt{11} respectively corresponding to (\texttt{red}, \texttt{big}), (\texttt{red}, \texttt{small}), (\texttt{blue}, \texttt{big}), (\texttt{blue}, \texttt{small}). Here, class average pixels values of \texttt{red} and \texttt{blue} colors will control the concept signal for \texttt{color} and the difference between \texttt{small} and \texttt{big}. In \Cref{fig:diffusion}, we fix the big circle's diameter to $70\%$ of the image and the small circle's diameter to $30\%$ of the image. We then adjust the absolute difference between the blue color and red color from 0.2 (very similar colors) to 0.7 (very different colors). The DGP randomizes the location of the circle, the background color and adds some noise to avoid having a very narrow data distribution. Please refer to \cite{park2024emergencehiddencapabilitiesexploring} for further detail.

In \Cref{fig:diffusion_data}, we show two different data distributions, one with a big color concept signal and one with a big size concept signal. 

\subsection{Model \& Training}
We train a conditional diffusion model on the synthetic data defined above. In specific, we train a variational diffusion model \citep{kingma2021variational} to generate $3\times 32\times 32$ images conditioned on a 4-dimensional vector where the first element of the vector specifies the size of the circle and the 3 others specifies the RGB colors.

\paragraph{Model Architecture} We use a conditional U-Net \citep{ronneberger2015unet} with hidden dimensions $[64,128,256]$ before each downsampling layer and two ResNet \citep{he2015deep} layers in each level. The conditioning vector is first transformed into the same dimensions as the hidden dimensions using a 2-layer MLP and are added to the representation after every downsampling layer. The U-Net has a self attention layer \citep{dosovitskiy2021imageworth16x16words} in its bottleneck. We used  LayerNorm \citep{ba2016layer} for normalization layers and GELU \citep{hendrycks2023gaussian} activations.

\paragraph{Diffusion} We use a learned linear noise schedule for the diffusion process as defined in \cite{kingma2021variational}, initialized with $\gamma_{\max}=10,\:\gamma_{\min}=-5$. We assume a data noise of $1\times 10^{-3}$. Variational diffusion models do not require fixing the number of diffusion steps at training time, but we use 100 steps for generation at inference time.

\paragraph{Training} We train our model with the AdamW optimizer \citep{loshchilov2019decoupled} with learning rate $1\times 10^{-3}$ and weight decay $0.01$. We use a batch size of 128 and train for $20$k steps.

\subsection{Evaluation}
We evaluate the concept space representation of the generated output image using a trained classifier. Since we have the ground truth DGP, we used a large amount of data to train a perfect classifier. We used a U-Net backbone followed by a max pooling layer and a MLP classifier to classify each concept variable \texttt{color} and \texttt{size}. We train this classifier for $10$k steps and achieve a 100\% accuracy on a held out test set. We average over 32 generated images and 5 model run seeds to get the ensemble average concept space representation.

The concept space MSE in \Cref{fig:diffusion}~(b) is simply calculated as the MSE distance in the concept space defined in \cite{park2024emergencehiddencapabilitiesexploring}. The concept learning speed $|\mathrm{d} C/\mathrm{d} t|$ is quantified by estimating the movement speed in the same concept space by a finite difference method.

\end{document}